\documentclass{article}

\PassOptionsToPackage{square, numbers}{natbib}
\usepackage[preprint]{neurips_2026}

\usepackage[utf8]{inputenc} % allow utf-8 input
\usepackage[T1]{fontenc}    % use 8-bit T1 fonts
\usepackage{hyperref}       % hyperlinks
\usepackage{url}            % simple URL typesetting
\usepackage{booktabs}       % professional-quality tables
\usepackage{amsfonts}       % blackboard math symbols
\usepackage{nicefrac}       % compact symbols for 1/2, etc.
\usepackage{microtype}      % microtypography
\usepackage{xcolor}         % colors

\usepackage{amsmath, amssymb, amsthm, mathtools}
\usepackage{bm}
\usepackage{mathrsfs}
\usepackage{physics}
\usepackage{cancel}
\usepackage{hyperref}
\usepackage{url}
\usepackage{graphicx}
\usepackage{algorithm}
\usepackage{algorithmic}
\usepackage{xcolor}
\usepackage{listings}
\hypersetup{
    colorlinks=true,
    linkcolor=blue,
    filecolor=blue,
    urlcolor=blue,
    pdftitle={Full Dynamical Stability Analysis of SGD},
    pdfpagemode=FullScreen,
}

\usepackage{booktabs}
\usepackage{enumitem}

\theoremstyle{plain}
\newtheorem{theorem}{Theorem}[section]
\newtheorem{proposition}[theorem]{Proposition}

\newtheorem{corollary}[theorem]{Corollary}
\theoremstyle{definition}

\newtheorem{assumption}[theorem]{Assumption}
\theoremstyle{remark}

\newcommand{\diff}[1]{\textcolor{blue!70!black}{\scriptsize(#1)}}
\newcommand{\diffbest}[1]{\textbf{\textcolor{green!70!black}{\scriptsize(#1)}}}
\newcommand{\valuebest}[1]{\textbf{\textcolor{green!70!black}{#1}}}
\newcommand{\valuenotbest}[1]{\textbf{\textcolor{blue!70!black}{#1}}}

\definecolor{softpurple}{RGB}{128, 70, 170}

\usepackage{amsmath}
\usepackage{amssymb}
\usepackage{mathtools}
\usepackage{amsthm}
\usepackage{enumitem}

\usepackage{microtype}
\usepackage{graphicx}
\usepackage{subcaption}
\usepackage{booktabs}
\usepackage{hyperref}
\usepackage{url}
\usepackage{amsthm}
\usepackage{amssymb}
\usepackage{graphicx}
\usepackage{algorithm}
\usepackage{algorithmic}
\usepackage{xcolor}
\usepackage{listings}
\usepackage{colortbl}
\definecolor{lightblue}{RGB}{219, 234, 254}
\definecolor{lightolive}{RGB}{220, 230, 180}
\definecolor{lightyellow}{RGB}{255, 249, 196}
\definecolor{lightpurple}{RGB}{238,226,241}
\definecolor{lightred}{RGB}{251,220,221}

\makeatletter
\renewcommand\paragraph{\@startsection{paragraph}{4}{\z@}%
  {0ex \@plus 0.05ex \@minus 0.05ex}%
  {-1em}%
  {\normalfont\normalsize\bfseries}}
\makeatother
\usepackage[font=small]{caption}
\title{Forgetting Has Neighbors:\\ Localized Collateral Forgetting in Machine Unlearning}

\author{
Polina Dolgova$^{1,2}$ \quad
Sebastian U. Stich$^{1}$\\
$^1$CISPA Helmholtz Center for Information Security \quad
$^2$Universität des Saarlandes\\
Saarbrücken, Germany\\
\texttt{\{polina.dolgova, stich\}@cispa.de}
}

\begin{document}

\maketitle

\begin{abstract}
Machine unlearning aims to remove the influence of selected training examples without full retraining. Standard evaluations often summarize unlearning quality with aggregate metrics, such as accuracy- and forgetting-based scores, which can hide localized failures. We study this failure mode at the example level by comparing the predictions of an unlearned model to those of the model retrained after deletion. We show that this pointwise discrepancy can be highly non-uniform: for gradient-ascent and random-labeling methods, with and without retain-set fine-tuning, it grows with geometric proximity to the forget set. We call this phenomenon \emph{localized collateral forgetting}. Our analysis identifies a mechanism behind the effect: surrogate targets used during unlearning can be inconsistent with the local prediction structure induced by retraining, and this inconsistency propagates through shared representations to nearby examples. Motivated by this mechanism, we propose \emph{Local Teacher Distillation}, a simple mitigation strategy that replaces random targets with soft labels from a small teacher trained only on retained neighbors of the forget set. On CIFAR-100 partial-class deletion, this local teacher brings the unlearned model substantially closer to retraining, especially near the forget set, while maintaining competitive aggregate unlearning metrics.
\end{abstract}

\section{Introduction}
\vspace*{-1ex}

Machine unlearning aims to remove the influence of designated training examples from a trained model without retraining the model from scratch. This is needed, for example, when data must be deleted post hoc under regulations such as the GDPR~\citep{european_commission_regulation_2016}. The gold-standard reference is retraining after deletion: an unlearning method should approximate this reference at lower cost, while preserving utility on non-deleted data and limiting the recoverability of information about the forgotten examples.

Standard evaluations compare unlearned models to retraining using aggregate metrics such as retain accuracy, forget accuracy, test accuracy, and privacy-audit scores. These summaries are useful but incomplete. An unlearned model can appear close to retraining on average while deviating sharply on particular retained or test examples. We therefore study unlearning at the example level, asking where the predictions of an unlearned model differ from those of the retrained reference.

This pointwise view reveals a structured failure mode in common image-classification unlearning methods. Gradient ascent (GA) methods~\citep{9797378} remove the forget set by reversing its training signal, while random-labeling (RL) approaches~\citep{WarneckePirchWressnegger2023_1000166047, fan2024salun} replace it with corrupted targets. These interventions suppress the forget-set signal, but because examples are coupled through shared representations, the induced perturbation need not remain confined to the deleted examples. It can also affect retained and test points whose representations are close to the forget set.

Figure~\ref{fig:initial-mnist-example} illustrates this effect on MNIST. For both random labeling and gradient ascent, the gap to retraining increases with proximity to the forgotten class. We call this phenomenon \emph{localized collateral forgetting}: unintended degradation on non-deleted examples, measured relative to retraining, that concentrates near the forget set in representation space. 
Retain-set fine-tuning can reduce this degradation, but our analysis shows that forget-set targets inconsistent with retraining can still propagate mismatch to nearby examples.

These observations motivate three questions: what governs the pointwise discrepancy between unlearning and retraining, why is this discrepancy localized, and how can it be reduced?
We show theoretically that, for GA and RL-based methods, the discrepancy is governed by alignment with the forget set. Our analysis identifies target mismatch as a key mechanism: corrupted forget-set targets conflict with the local prediction structure induced by retraining. This suggests a direct mitigation strategy: replacing random targets on the forget set with soft labels from a local teacher trained on nearby retained examples.

\begin{figure}[t]
    \centering \vspace*{-1ex}
    \includegraphics[width=1.\linewidth]{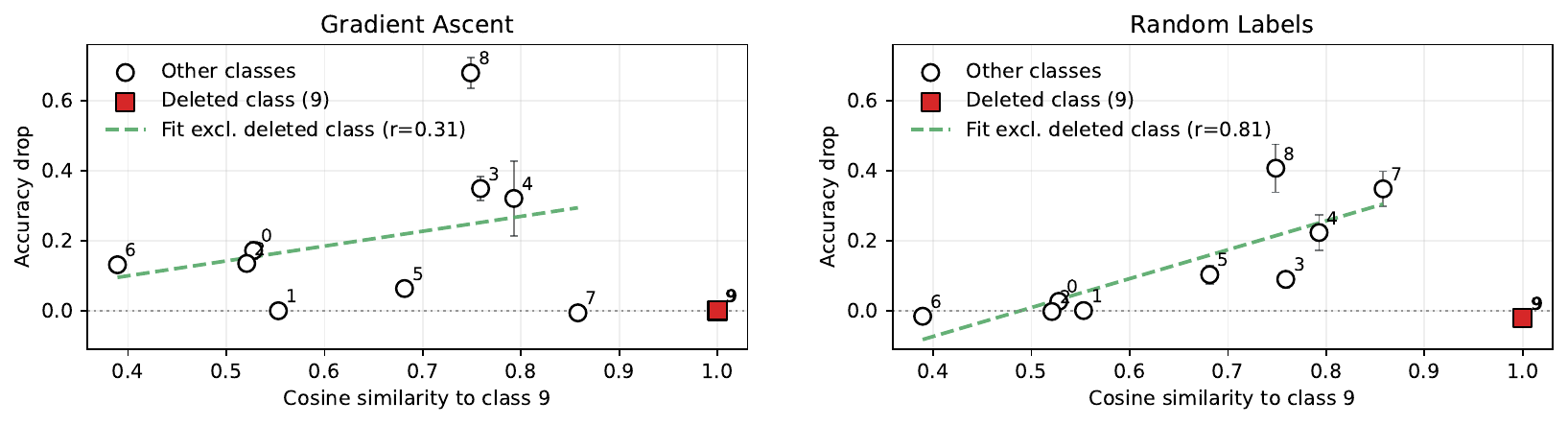}
    \caption{
    MNIST illustration of \emph{localized collateral forgetting} after deleting class \(9\), for Gradient Ascent (left) and forget-only Random Labels (right). Non-deleted classes closer to class \(9\) in representation space show larger accuracy drops relative to retraining. The dashed line is a linear fit over the non-deleted classes only.
    }
    \label{fig:initial-mnist-example}
\end{figure}

\textbf{Our contributions are as follows.}
\begin{enumerate}[topsep=0pt,itemsep=0pt,leftmargin=12pt]
    \item \textbf{Phenomenon.}
    We show that collateral forgetting has a localized structure: the discrepancy between an unlearned model and the retrained reference concentrates on non-forget examples that are close to the forget set in representation space.
    
    \item \textbf{Theory.}
    We analyze this localization for gradient ascent, forget-set-only random-label updates, and random-label unlearning with retain-set fine-tuning. The results show that the deviation from retraining is governed by alignment with the forget direction, or more generally by projection onto the forget span.
    The same analysis identifies target-mismatch as a mechanism underlying localized collateral forgetting.
    
    \item \textbf{Method and experiments.}
    Our analysis suggests a design principle: forget-set targets should be locally consistent with the behavior induced by
    retraining. We instantiate this principle with \emph{Local Teacher Distillation}, where a small teacher trained on retained neighbors provides soft forget-set targets in place of random labels. On CIFAR-100 partial-class deletion, this method substantially reduces the discrepancy from retraining, especially on affected-class and high-similarity examples, while maintaining competitive
    aggregate unlearning metrics.

\end{enumerate}

\section{Related work}
\vspace*{-1ex}

\paragraph{Machine unlearning for image classification.}
Practical unlearning methods for image classification include gradient ascent (GA)~\citep{9797378}, random labeling (RL)~\citep{WarneckePirchWressnegger2023_1000166047}, fine-tuning (FT)~\citep{WarneckePirchWressnegger2023_1000166047}, Influence Unlearning (IU)~\citep{pmlr-v70-koh17a, jia2023model}, SalUn~\citep{fan2024salun}, and AMUN~\citep{ebrahimpourboroojeny2025amunadversarialmachineunlearning}. While these methods are typically evaluated through aggregate metrics, we study a finer failure mode: several unlearning procedures can deviate systematically from retraining on retained and test examples that are close to the forget set.

\paragraph{Retain--forget entanglement.}
Recent work~\citep{zhao2024what, cheng2026machine, ebrahimpourboroojeny2025necessityoutputdistributionreweighting}
shows that retain--forget coupling in representation space makes unlearning harder. We study this coupling at the \emph{point level}, measuring how the unlearning--retraining discrepancy varies across retained and test examples with proximity to the forget set. Moreover, rather than only asking whether the forget set is removed, we identify which nearby non-forget examples are harmed despite being preserved by retraining.

\paragraph{Ripple and collateral effects.}
Ripple effects~\citep{Cohen2023EvaluatingTR, rinberg2025ripplebench} and related work on collateral or adjacency effects~\citep{wen-etal-2025-lock, thakral2025finegrainederasuretexttoimagediffusionbased} study original-model side effects on related facts or adjacent concepts. We instead take \emph{retraining} as the reference and study how the resulting discrepancy varies with similarity to the forget set. This separates benign retraining-induced changes from collateral forgetting: degradation on non-forget examples that retraining would preserve.

\section{Collateral Forgetting is Localized}
\label{sec:theory}

We study a side effect of machine unlearning that we call \emph{collateral forgetting}: the unlearned model unnecessarily degrades predictions on non-forget examples that retraining would preserve. With retraining as the reference, the relevant object is not an absolute prediction change, but the pointwise discrepancy between the unlearned and retrained models. We show that this discrepancy is localized, concentrating on retained and test examples close to the forget set in representation space.

\subsection{Theoretical Analysis}

We analyze localized collateral forgetting in two stylized settings in which the effect can be explicitly characterized. We begin with gradient-ascent unlearning in linear regression, which provides a simple geometric warm-up, and then turn to random-label unlearning, with and without retain-set fine-tuning, in logistic regression. In each case, our goal is to understand how the unlearning trajectory deviates from retraining after deleting the forget set.

Let the training data be split into a forget set and a retain set,
\[
D_f = \{(x_i,y_i)\}_{i\in I_f},
\qquad
D_r = \{(x_i,y_i)\}_{i\in I_r},
\]
where \(I_f\cap I_r=\emptyset\). Let \(\ell(\theta;x,y)\) denote the loss, let \(\theta_{\mathrm{full}}\) denote the model trained on the full dataset, and let \(\theta_{\mathrm{retrain}}\) denote the model retrained after deleting \(D_f\). Let \(X \in \mathbb{R}^{n \times d}\) denote the design matrix whose rows are the feature vectors \(x_i^\top\), and let \(y \in \mathbb{R}^n\) be the corresponding vector of responses.

Throughout this section, proximity is measured in representation space rather than input space. In the single-point case, the relevant quantity is the alignment between a query point and the deleted point; with normalized features, this corresponds to cosine similarity up to scale.

\paragraph{Gradient Ascent.}
We first consider deletion of a single point, $D_f=\{(x_f,y_f)\}$. Starting from $\theta_{\mathrm{full}}$, gradient-ascent unlearning on the deleted point is given by
\[
\theta_0=\theta_{\mathrm{full}},
\qquad
\theta_{t+1}
=
\theta_t+\eta \nabla_\theta \ell(\theta_t;x_f,y_f),
\qquad t\ge 0,\ \eta > 0.
\]

In linear regression, this gives a closed-form characterization of localized collateral forgetting: for a retained example $x_q$, we measure the effect by the excess squared loss relative to retraining.

\begin{proposition}[Localized collateral forgetting under gradient ascent]
\label{prop:linreg-ga}
Let \(D_f=\{(x_f,y_f)\}\), and let
\(X=(x_1,\ldots,x_n)^\top\in\mathbb{R}^{n\times d}\), with \(n>d\). Suppose the data follow the linear model
\[
y_i = x_i^\top \theta_{\mathrm{true}} + \xi_i,
\qquad
\xi_i \overset{\mathrm{i.i.d.}}{\sim} \mathcal{N}(0,\sigma^2), \; x_i \overset{\mathrm{i.i.d.}}{\sim} N(0, \Sigma),\;
\Sigma\succ 0,\; \theta_{\mathrm{true}} \in \mathbb{R}^d.
\]
Let \(\theta_{\mathrm{full}}\) and \(\theta_{\mathrm{retrain}}\) be the
least-squares solutions before and after deleting \(D_f\). Initialize \(\theta_0=\theta_{\mathrm{full}}\), and perform \(T\) steps of gradient ascent on the squared loss of the deleted point \((x_f,y_f)\). Then for any retained example \((x_q,y_q)\in D_r\),
\[
\mathbb{E}\!\left[(x_q^\top\theta_T-y_q)^2\mid  X \right]
-
\mathbb{E}\!\left[(x_q^\top\theta_{\mathrm{retrain}}-y_q)^2\mid  X\right]
=
\langle x_q, x_f\rangle^2 \cdot c(T,\eta,x_f)\, \sigma^2 + O_p(1/n),
\]
where the expectation is over the responses \(y\), equivalently over the observation noise \(\xi_i\), conditional on \(X\), and \(c(T,\eta,x_f)\) does not depend on \(x_q\).
\end{proposition}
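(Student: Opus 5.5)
Everything here is an explicit linear function of the noise vector \(\xi\), so the plan is to write \(\theta_T\) and \(\theta_{\mathrm{retrain}}\) in closed form and compare second moments conditional on \(X\).

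\textbf{Gradient-ascent iterates.} Use \(\ell(\theta;x,y)=\tfrac12(x^\top\theta-y)^2\) and set \(A=I+\eta x_fx_f^\top\). One ascent step reads
\[
\theta_{t+1}=A\,\theta_t-\eta x_f y_f .
\]
Unrolling from \(\theta_{\mathrm{full}}\) and using that \(A\) acts as the scalar \(\lambda=1+\eta\|x_f\|^2\) on \(\mathrm{span}(x_f)\) and as the identity on its complement gives
\[
\theta_T=\theta_{\mathrm{full}}+\frac{\lambda^T-1}{\|x_f\|^2}\,x_f\,\bigl(x_f^\top\theta_{\mathrm{full}}-y_f\bigr).
\]
Thus \(\theta_T\) differs from \(\theta_{\mathrm{full}}\) only along \(x_f\), by a multiple of the full-model residual \(r_f=x_f^\top\theta_{\mathrm{full}}-y_f\).

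\textbf{Link to retraining.} Let \(M=X^\top X\) and \(M_r=M-x_fx_f^\top\). Sherman--Morrison gives
\[
\theta_{\mathrm{full}}-\theta_{\mathrm{retrain}}=M^{-1}x_f\,(y_f-x_f^\top\theta_{\mathrm{retrain}})\cdot(\text{scalar}).
\]
Since \(M^{-1}=O_p(1/n)\), this difference is \(O_p(1/n)\) times a quantity that is \(O_p(1)\) in \(\xi\). Hence
\[
x_q^\top\theta_T-y_q=\bigl(x_q^\top\theta_{\mathrm{retrain}}-y_q\bigr)+\langle x_q,x_f\rangle\,\kappa\, r_f+\Delta,
\qquad \kappa=\frac{\lambda^T-1}{\|x_f\|^2},
\]
where \(\Delta=x_q^\top(\theta_{\mathrm{full}}-\theta_{\mathrm{retrain}})=O_p(1/n)\).

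\textbf{Squaring and taking expectations.} The excess loss becomes
\[
\langle x_q,x_f\rangle^2\kappa^2\,\mathbb{E}[r_f^2\mid X]
+2\langle x_q,x_f\rangle\kappa\,\mathbb{E}\bigl[r_f(x_q^\top\theta_{\mathrm{retrain}}-y_q)\mid X\bigr]
+\text{terms involving }\Delta .
\]

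\emph{Leading term.} Write \(r_f=-(1-h_{ff})\xi_f+\sum_{j\ne f}h_{fj}\xi_j\) with hat matrix \(H=XM^{-1}X^\top\). This gives
\[
\mathbb{E}[r_f^2\mid X]=\sigma^2(1-h_{ff}),
\]
and with \(x_f\sim N(0,\Sigma)\) the leverage is \(h_{ff}=O_p(d/n)\). So the leading term is \(\langle x_q,x_f\rangle^2\kappa^2\sigma^2\), which identifies \(c(T,\eta,x_f)=\kappa^2=(\lambda^T-1)^2/\|x_f\|^4\). This constant does not depend on \(x_q\), as the statement requires.

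\emph{Cross term.} Decompose \(x_q^\top\theta_{\mathrm{retrain}}-y_q=x_q^\top(\theta_{\mathrm{retrain}}-\theta_{\mathrm{true}})-\xi_q\). Both \(r_f\) and \(\theta_{\mathrm{retrain}}-\theta_{\mathrm{true}}=M_r^{-1}X_r^\top\xi_r\) are linear in \(\xi\). The only correlations come from \(\mathrm{Cov}(\xi_f,\cdot)\), which vanishes because \(\theta_{\mathrm{retrain}}\) and \(\xi_q\) do not involve \(\xi_f\), and from terms with coefficients \(h_{fj}\) or \(M_r^{-1}\), which are \(O_p(1/n)\). For example \(\mathbb{E}[h_{fq}\xi_q\cdot\xi_q]=h_{fq}\sigma^2\) with \(h_{fq}=O_p(1/n)\), and the middle piece is \(\sigma^2x_f^\top M^{-1}x_q\)-type, also \(O_p(1/n)\).

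\emph{\(\Delta\) terms.} These are products of \(O_p(1/n)\) coefficients with \(O_p(1)\) noise moments, so they contribute \(O_p(1/n)\) after conditional expectation.

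\textbf{Main obstacle.} The delicate step is the bookkeeping of the cross and \(\Delta\) terms: showing every leftover covariance carries a factor of \(M^{-1}\) or a leverage \(h_{fj}\), and that these are \(O_p(1/n)\) uniformly. I would invoke \(M/n\to\Sigma\) almost surely by the LLN, so \(\|M^{-1}\|=O_p(1/n)\) since \(\Sigma\succ0\), together with \(\|x_f\|,\|x_q\|=O_p(1)\) for fixed \(d\). Note that the \(O_p(1/n)\) remainder also absorbs the factor \(\kappa\), which is fixed in \(n\) but may be large in \(T\).
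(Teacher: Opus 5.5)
Your proposal is correct and takes essentially the same approach as the paper. Both arguments rest on three facts: the gradient-ascent iterate moves only along $x_f$, by a multiple of the full-model residual; $\mathbb{E}[(e_f^{\mathrm{full}})^2\mid X]=\sigma^2(1-h_{ff})$ with $h_{ij}=O_p(1/n)$; and a rank-one case-deletion (Sherman--Morrison) identity shows the full-versus-retrain correction is lower order. The differences are bookkeeping only: you expand around the retrained residual instead of using the paper's $A_n+B_n$ split through the full-model loss, and your $\tfrac12$ loss normalization only rescales $\eta$ inside $c(T,\eta,x_f)$. Also, your cross term $\mathbb{E}\bigl[r_f\,(x_q^\top\theta_{\mathrm{retrain}}-y_q)\mid X\bigr]$ actually vanishes exactly, not merely at order $O_p(1/n)$.
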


Proposition~\ref{prop:linreg-ga} (for the proof see Appendix~\ref{sec:proof-GA}) isolates the basic geometry of localized collateral forgetting under gradient ascent. The update on the deleted point affects a retained point only through its overlap with the deleted direction, so the leading-order excess loss scales as $\langle x_q,x_f\rangle^2$. Thus, the discrepancy from retraining is not uniform over the retain set, but concentrates on points that are aligned with the forget point in representation space.

\paragraph{Random Labels.}
We now turn to random-label unlearning (RL) in logistic regression. In this part,
$\ell(\theta;x,y)$ denotes the logistic loss
\[
\ell(\theta;x,y)
=
-\Bigl(y\log \sigma(\langle \theta,x\rangle)
+(1-y)\log(1-\sigma(\langle \theta,x\rangle))\Bigr),
\qquad
\sigma(u)=(1+e^{-u})^{-1}.
\]
Here $y\in\{0,1\}$ for binary labels; for the reference models,
$\theta_{\mathrm{full}}$ and $\theta_{\mathrm{retrain}}$ are chosen as
logit-interpolating parameters, using soft targets $y=\sigma(z)\in[0,1]$ with the relevant \(X\theta=z\).

Write $m=|D_f|$, and let $X_F\in\mathbb{R}^{m\times d}$ denote the rows of $X$ associated with $D_f$. RL modifies only these examples, replacing their labels by fresh random labels at each step. This makes it a useful case for studying how a surrogate signal on the forget set propagates
to nearby retained points.

For each $t\ge 0$, let $\tilde y^{(t)}\in\{0,1\}^m$ be drawn coordinatewise from $\mathrm{Bernoulli}(1/2)$. Starting from $\theta_{\mathrm{full}}$, the RL iterates
with weight decay $\lambda\ge 0$ are
\[
\theta_0=\theta_{\mathrm{full}},
\qquad
\theta_{t+1}
=
(1-\eta_t\lambda)\theta_t
-
\eta_t \sum_{i\in I_f}\nabla_\theta \ell(\theta_t;x_i,\tilde y_i^{(t)}),
\qquad t\ge 0,\ \eta_t>0.
\]

The next result gives the exact retained-example logit discrepancy between RL and retraining.

\begin{proposition}[Localized collateral forgetting under Random Labeling]
\label{prop:logreg-rl}
Assume an interpolating overparameterized setting in which $d\ge n$ and
$\operatorname{rank}(X)=n$. For a retained example $x_q$, write
\[
z_t^{(q)}:=\langle\theta_t,x_q\rangle,
\qquad
z_t^{(F)}:=X_F\theta_t,
\qquad
\Gamma_{0:T}:=\prod_{t=0}^{T-1}(1-\eta_t\lambda).
\]
Let
\[
\alpha_q:=(X_FX_F^\top)^{-1}X_Fx_q,
\]
so that $X_F^\top\alpha_q$ is the orthogonal projection of $x_q$ onto
$\operatorname{span}(X_F^\top)$. Then
\[
\langle \theta_T,x_q\rangle-\langle \theta_{\mathrm{retrain}},x_q\rangle
=
(\Gamma_{0:T}-1)z_0^{(q)}
+
\alpha_q^\top\bigl(z_T^{(F)}-\Gamma_{0:T}z_0^{(F)}\bigr).
\]
\end{proposition}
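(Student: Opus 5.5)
The plan is to exploit the fact that every random-label update moves $\theta$ only within the row space of $X_F$, apart from the uniform shrinkage from weight decay. For the logistic loss,
\[
\nabla_\theta \ell(\theta;x_i,\tilde y_i^{(t)})=\bigl(\sigma(\langle\theta,x_i\rangle)-\tilde y_i^{(t)}\bigr)x_i ,
\]
so the sum over $i\in I_f$ equals $X_F^\top g_t$ for a (random) vector $g_t\in\mathbb{R}^m$. The RL recursion is therefore $\theta_{t+1}=(1-\eta_t\lambda)\theta_t-\eta_t X_F^\top g_t$.

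By induction on $t$ this gives the decomposition
\[
\theta_T=\Gamma_{0:T}\,\theta_{\mathrm{full}}+X_F^\top v_T
\]
for some $v_T\in\mathbb{R}^m$. The explicit form of $v_T$, a weighted sum of the $g_t$ with partial products of the decay factors, is irrelevant. We never need to track the random labels; it suffices that the accumulated update lies in $\operatorname{span}(X_F^\top)$.

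Next we identify $v_T$ from the forget-set logits. Multiplying the decomposition by $X_F$ gives $z_T^{(F)}=\Gamma_{0:T}z_0^{(F)}+X_FX_F^\top v_T$. Since $\operatorname{rank}(X)=n$, the rows of $X$ are linearly independent, hence so are the rows of $X_F$. Thus $X_FX_F^\top$ is invertible and
\[
v_T=(X_FX_F^\top)^{-1}\bigl(z_T^{(F)}-\Gamma_{0:T}z_0^{(F)}\bigr).
\]
Taking the inner product of the decomposition with $x_q$ yields
\[
\langle\theta_T,x_q\rangle=\Gamma_{0:T}z_0^{(q)}+x_q^\top X_F^\top (X_FX_F^\top)^{-1}\bigl(z_T^{(F)}-\Gamma_{0:T}z_0^{(F)}\bigr).
\]
By symmetry of $X_FX_F^\top$, the middle factor is exactly $\alpha_q^\top$. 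We also check that $X_F^\top\alpha_q$ is the orthogonal projection of $x_q$ onto the row span of $X_F$, as claimed in the statement.

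It remains to show that $\langle\theta_{\mathrm{retrain}},x_q\rangle=z_0^{(q)}$; subtracting this from the previous display then gives the claimed identity. This is the step where the modeling assumptions actually enter, so I expect it to be the main (though mild) obstacle. Both reference models are logit-interpolating with soft targets $y_i=\sigma(z_i)$: $\theta_{\mathrm{full}}$ satisfies $X\theta_{\mathrm{full}}=z$ on all $n$ points, and $\theta_{\mathrm{retrain}}$ satisfies $\langle\theta_{\mathrm{retrain}},x_i\rangle=z_i$ for $i\in I_r$. Such interpolants exist because $d\ge n$ and $\operatorname{rank}(X)=n$. Since $x_q$ is a retained example, both models reproduce the same target logit $z_q$. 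Hence $\langle\theta_{\mathrm{retrain}},x_q\rangle=z_q=\langle\theta_{\mathrm{full}},x_q\rangle=z_0^{(q)}$, regardless of which interpolant is selected. In the write-up I will state explicitly that the retained targets are common to both fits, since otherwise this identification fails.
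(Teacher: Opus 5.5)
Your proposal is correct and is essentially the paper's argument. Both rely on three facts: the RL updates stay in $\operatorname{span}(X_F^\top)$ apart from weight-decay shrinkage, $X_FX_F^\top$ is invertible by the rank assumption, and $\theta_{\mathrm{full}}$ and $\theta_{\mathrm{retrain}}$ share the retained logits by interpolation. The only difference is cosmetic: you write the accumulated update once in parameter space as $\theta_T=\Gamma_{0:T}\theta_{\mathrm{full}}+X_F^\top v_T$ and solve for $v_T$, whereas the paper derives the per-step logit relation $z_{t+1}^{(q)}-(1-\eta_t\lambda)z_t^{(q)}=\alpha_q^\top\bigl(z_{t+1}^{(F)}-(1-\eta_t\lambda)z_t^{(F)}\bigr)$ and unrolls it.
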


Proposition~\ref{prop:logreg-rl} (for the proof see Appendix~\ref{sec:proof-RL}) shows that the discrepancy has two components:
a shrinkage term inherited from weight decay, and a perturbation term created by
the random-label updates on the forget set. The latter is transferred to $x_q$
through $\alpha_q$, whose induced vector $X_F^\top\alpha_q$ is the projection of $x_q$ onto the span
of the deleted examples. This projection is the multi-point analogue of alignment
with a single deleted point: for normalized features, a larger projected component
means that $x_q$ is closer to the forget set in representation space.

Thus, whenever the shrinkage term is dominated by the forget-set perturbation, for
example when
\[
\left|(\Gamma_{0:T}-1)z_0^{(q)}\right|
\le
\varepsilon_T
\left|
\alpha_q^\top\bigl(z_T^{(F)}-\Gamma_{0:T}z_0^{(F)}\bigr)
\right|,
\qquad \varepsilon_T<1,
\]
the finite-time discrepancy is controlled, up to relative error $\varepsilon_T$, by the component of $x_q$ lying in the forget span. In this sense, the formal projection term captures the intuitive locality effect: retained points closer to the forget set are affected more. This regime is especially relevant in practice, since RL unlearning is typically stopped after a small number of epochs rather than run to convergence.

The limiting behavior is also informative. Under the standard SGD convergence conditions in Assumption~\ref{ass:sgd-rl}, if the random-label dynamics are run to convergence, then
\[
\mathbb{E}\left[\langle \theta_T,x_q\rangle\right]\to 0,
\qquad
\mathbb{E}\left[
\langle \theta_T,x_q\rangle-\langle \theta_{\mathrm{retrain}},x_q\rangle
\right]
\to
-\langle \theta_{\mathrm{retrain}},x_q\rangle .
\]
Hence RL does not converge toward retraining in expectation; it drifts toward the zero-logit predictor induced by random labels. Together, the
finite-time and asymptotic views show two aspects of the same failure mode: at finite time, the discrepancy is locally structured by the projection of $x_q$ onto
the forget span, while in the limit the mean dynamics are misaligned with the retraining target.

Figure~\ref{fig:regression-plots-1} synthetically validates the GA and RL predictions: both the linear-regression MSE discrepancy and the logistic-regression squared-logit discrepancy increase with proximity to $D_f$.

\begin{figure}
    \centering
    \includegraphics[width=1.\linewidth]{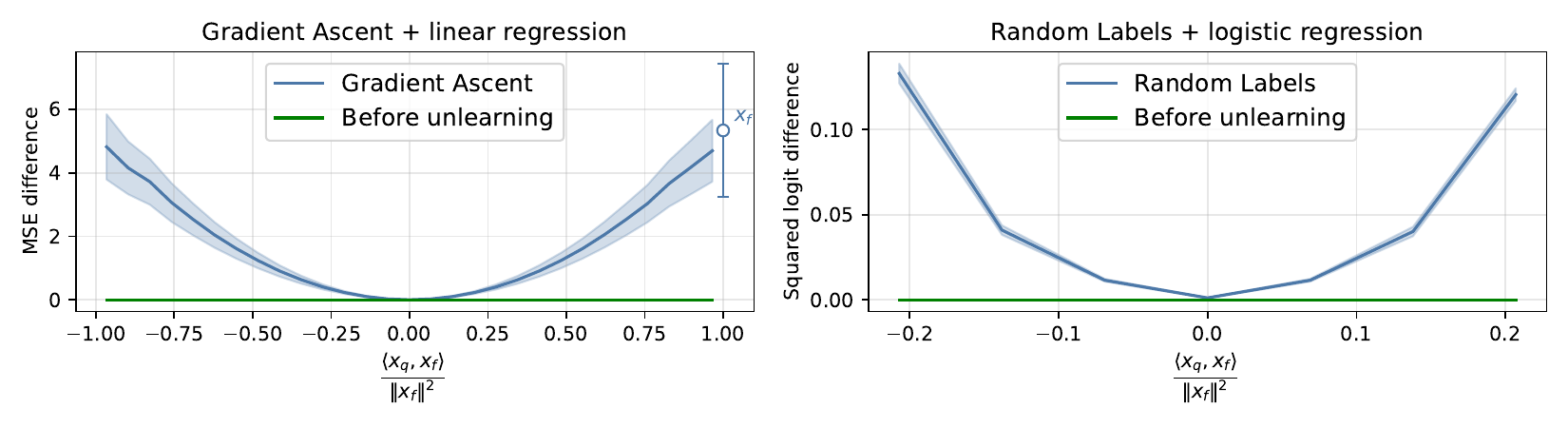}
    \caption{ Synthetic validation of the projection-based theory. For GA (left) and forget-only RL (right), retraining-relative discrepancies increase with alignment to the deleted point. RL queries are sampled from $\operatorname{span}(X_R^\top)$.
    }
    \label{fig:regression-plots-1}
\end{figure}

\paragraph{RL + retain fine-tuning.}
A natural question is whether fine-tuning on retained data can mitigate this localized degradation. We now add true-label updates on a retained subset $R\subseteq D_r$ to the random-label dynamics on $D_f$ defined above. This captures the basic structure of methods that combine corrupted forget-set updates with retain-set recovery.

Let $X_R$ be the input matrix of $R$, and let $X_{F\cup R}$ be obtained by
stacking $X_F$ and $X_R$. The random labels are still used on $D_f$; after taking
expectation over this randomness, the induced surrogate objective is
\[
L_{\mathrm{RL+FT}}(\theta)
:=
\sum_{(x_i,y_i)\in R}\ell(\theta;x_i,y_i)
+
\sum_{(x_i,y_i)\in D_f}\ell(\theta;x_i,1/2)
+
\frac{\lambda}{2}\|\theta\|^2 .
\]
Let $\theta^{(\lambda)}_{\mathrm{RL+FT}}$ denote the limiting solution of this surrogate, and
let $\{\theta_t\}_{t\ge0}$ be the corresponding RL+FT stochastic trajectory.
Under Assumption~\ref{ass:sgd-rl}, $\theta_t$ converges to $\theta^{(\lambda)}_{\mathrm{RL+FT}}$. As in Proposition~\ref{prop:logreg-rl}, the query dependence is
governed by projection onto $\operatorname{span}(X_{F\cup R}^\top)$. Thus,
retain fine-tuning changes the relevant subspace, but does not remove the forget
directions. We state this explicitly below; we derive the general identity in Appendix~\ref{app:rlft-identities} and prove
the full-retain statement in Appendix~\ref{prop:logreg-rl-ft}.

\begin{proposition}[RL with full retain fine-tuning]
\label{prop:logreg-rl-ft}
Assume the setting of Proposition~\ref{prop:logreg-rl} and consider the limit
$\lambda\to0^+$. Suppose $R=D_r$. Then, for every retained training point
$(x_q,y_q)\in D_r$,
\[
\lim_{\lambda\to0^+}\lim_{T\to\infty}\mathbb{E}\left[
\langle \theta_T,x_q\rangle
-
\langle \theta_{\mathrm{retrain}},x_q\rangle
\right]
= 0 .
\]
Moreover, if $\theta_{\mathrm{retrain}}$ is the minimum-norm interpolant on
$D_r$ and $x_q\in\operatorname{span}(X^\top)$ with
\[
x_q = X_R^\top v_R + X_F^\top v_F,
\]
then at the limiting surrogate solution,
\[
\lim_{\lambda\to0^+}
\langle \theta_{\mathrm{RL+FT}}^{(\lambda)},x_q\rangle
=
\langle \theta_{\mathrm{retrain}},x_q\rangle
-
\langle \theta_{\mathrm{retrain}},X_F^\top v_F\rangle .
\]
\end{proposition}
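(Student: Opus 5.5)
Throughout we work with the deterministic surrogate $L_{\mathrm{RL+FT}}$. Under Assumption~\ref{ass:sgd-rl}, $\theta_T\to\theta^{(\lambda)}_{\mathrm{RL+FT}}$ in the sense that $\lim_{T\to\infty}\mathbb{E}[\langle\theta_T,x_q\rangle]=\langle\theta^{(\lambda)}_{\mathrm{RL+FT}},x_q\rangle$. Both claims therefore reduce to characterizing $\lim_{\lambda\to0^+}\theta^{(\lambda)}_{\mathrm{RL+FT}}$.

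\textbf{Step 1: the limit is a minimum-norm interpolant.} The surrogate is strictly convex for $\lambda>0$. The stationarity condition $\lambda\theta=-\sum_i(\sigma(\langle\theta,x_i\rangle)-y_i)x_i$, with $y_i=1/2$ on $D_f$, shows $\theta^{(\lambda)}\in\operatorname{span}(X^\top)$ because $R=D_r$. Since $\operatorname{rank}(X)=n$, the map $\theta\mapsto X\theta$ is a bijection from $\operatorname{span}(X^\top)$ onto $\mathbb{R}^n$.

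The soft targets are in $(0,1)$: for $i\in D_r$ they are $y_i=\sigma(z_i^{\mathrm{retrain}})$, and for $i\in D_f$ they equal $1/2$. Hence the unregularized loss attains its infimum exactly at the logits $z^\star$ with $z^\star_i=z_i^{\mathrm{retrain}}$ on $D_r$ and $z^\star_i=\sigma^{-1}(1/2)=0$ on $D_f$. Standard Tikhonov-regularization arguments then give $\theta^{(\lambda)}\to\theta^\star$, the unique element of $\operatorname{span}(X^\top)$ with $X\theta^\star=z^\star$, i.e.\ the minimum-norm interpolant of $z^\star$. Concretely, $\theta^{(\lambda)}$ is bounded because $\lambda\|\theta^{(\lambda)}\|^2/2\le L(\theta^\star)-L_{\min}+\lambda\|\theta^\star\|^2/2$. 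Any limit point $\bar\theta$ minimizes the loss and lies in the span, so $\bar\theta=\theta^\star$ by strict convexity of logistic loss in the logits.

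\textbf{Step 2: first claim.} For $x_q\in D_r$, the limit gives $\langle\theta^\star,x_q\rangle=z^\star_q=\langle\theta_{\mathrm{retrain}},x_q\rangle$, since $\theta_{\mathrm{retrain}}$ interpolates the retain logits. Combined with the $T\to\infty$ limit in expectation, the iterated limit is $0$.

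\textbf{Step 3: second claim.} Take $\theta_{\mathrm{retrain}}=X_R^\top(X_RX_R^\top)^{-1}z_R$, the minimum-norm interpolant on $D_r$. Then $\theta^\star=X^\top(XX^\top)^{-1}z^\star$. Decompose $x_q=X_R^\top v_R+X_F^\top v_F$. By linearity,
\[
\langle\theta^\star,x_q\rangle=v_R^\top X_R\theta^\star+v_F^\top X_F\theta^\star=v_R^\top z_R+v_F^\top 0=v_R^\top z_R .
\]
Similarly,
\[
\langle\theta_{\mathrm{retrain}},x_q\rangle=v_R^\top z_R+\langle\theta_{\mathrm{retrain}},X_F^\top v_F\rangle .
\]
Subtracting the two identities gives the stated formula. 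This step is pure linear algebra.

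\textbf{Main obstacle.} The delicate point is Step 1, specifically the justification that the $\lambda\to0^+$ limit of the regularized minimizer selects the minimum-norm interpolant. It relies on the targets lying strictly inside $(0,1)$, so that finite logits attain the infimum; otherwise the solution would diverge in norm. It also relies on $\operatorname{rank}(X)=n$, which guarantees the interpolant exists. Exchanging the limit $T\to\infty$ with the expectation is delegated to Assumption~\ref{ass:sgd-rl}.
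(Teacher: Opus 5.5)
Your proposal is correct and follows the same route as the paper. You pass $T\to\infty$ to the surrogate minimizer $\theta^{(\lambda)}_{\mathrm{RL+FT}}$, identify its $\lambda\to0^+$ limit as the element of $\operatorname{span}(X^\top)$ with logits $z_R$ on $D_r$ and $0$ on $D_f$, and split $x_q=X_R^\top v_R+X_F^\top v_F$. The one difference is that you actually justify the $\lambda\to0^+$ step, via $\|\theta^{(\lambda)}\|\le\|\theta^\star\|$, $L(\theta^{(\lambda)})\to L_{\min}$, and uniqueness of the minimizer on $\operatorname{span}(X^\top)$; the paper only asserts this step.
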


Thus, for $\lambda\to0^+$, full retain fine-tuning can restore agreement with retraining on $D_r$, but this does not imply agreement on general nearby queries. The remaining discrepancy is determined by the component of $x_q$ lying in the forget span. This suggests that mitigating localized collateral forgetting requires improving the signal injected on $D_f$, rather than relying only on retain-set recovery.

\section{Method: Local Teacher Distillation}
\label{sec:method}

\subsection{Motivation}

\paragraph{Target mismatch.}
The analysis in Section~\ref{sec:theory} suggests that the main failure mode behind localized collateral forgetting is a mismatch between the targets imposed on $D_f$ and the predictions induced by retraining. Random labels replace the behavior of the retrained model on $D_f$ with arbitrary targets. Since non-forget examples can have nonzero projection onto the feature subspace spanned by the forget examples, this mismatch is transferred to nearby points, causing the localized collateral effect.

Retain fine-tuning only partially addresses this issue. As shown by the RL+FT analysis, fine-tuning on \(D_r\) can restore agreement with retraining on retained training points, but it does not generally determine the model's behavior along components in
the span of $D_f$. Thus, query points with representation components in the forget subspace can retain a residual discrepancy. This suggests that the target used on $D_f$ itself must be improved, rather than relying only on retain-set fine-tuning to correct the effect afterward.

\paragraph{Ideal soft labels.}
The ideal target is the prediction of the retrained model. If \(f_{\theta_{\mathrm{retrain}}}\) were available, then for each forget example \(x\in D_f\) we would use
\[
\hat y_{\mathrm{ret}}(x):=f_{\theta_{\mathrm{retrain}}}(x)\in\Delta^{C-1}
\]
as a soft label, where \(\Delta^{C-1}\) is the probability simplex over \(C\) classes. Training against \(\hat y_{\mathrm{ret}}(x)\) would make the forget-set objective locally consistent with retraining, rather than pushing the model toward random targets.  However, these labels are unavailable without retraining and are difficult to infer a priori. This is especially clear in partial-class deletion, where the retrained prediction on a deleted example is not determined by its class identity alone, since other same-class examples remain in the training set.

\paragraph{Local approximation.} We therefore approximate these ideal soft labels with a small teacher trained on a selected subset of retained examples, rather than on all of $D_r$. The teacher
is used only to produce soft predictions on $D_f$, which serve as a proxy for $\hat y_{\mathrm{ret}}(x)$. The key question is which retained examples should be used to train this teacher.

\paragraph{Support selection.}~\label{par:support-selection} Natural candidates for the teacher support are retained examples near $D_f$ in
representation space, as they are most informative about the local behavior of retraining near the forget set. The projection analysis motivates this choice. In the single-point setting $D_f=\{(x_f,y_f)\}$, the induced discrepancy is
controlled by the residual left after projecting the forget point $x_f$ onto the support direction $x_s$:
\[
r_s=x_f-P_{x_s}x_f,
\qquad
\|r_s\|^2=\|x_f\|^2\bigl(1-\cos^2(x_f,x_s)\bigr),
\]
where $P_{x_s}=x_sx_s^\top/\|x_s\|^2$; see
Proposition~\ref{prop:distill-labels}. Thus, higher cosine similarity yields a smaller residual in this idealized setting. For multiple forget examples, this suggests choosing retained examples close to the forget subspace \(\mathrm{span}(X_F^\top)\); in practice, we use nearest-neighbors in representation space.

\subsection{Local Support and Teacher-Guided Unlearning}

We instantiate the design principle above as \emph{Local Teacher Distillation}. Let \(f_{\theta_{\mathrm{full}}}\) be the model trained on the full dataset, with forget set \(D_f\) and retain set \(D_r\). The unlearned model is initialized at \(\theta_{\mathrm{full}}\). Instead of assigning random labels to \(D_f\), we train a small auxiliary teacher on a local subset of retained examples and use its predictions as soft labels on \(D_f\).

\paragraph{Support set selection.}~\label{par:similarity}
Let \(h_{\theta_{\mathrm{full}}}(x)\in\mathbb{R}^p\) be the penultimate-layer representation of \(x\), and let \(\bar h_{\theta_{\mathrm{full}}}(x)=h_{\theta_{\mathrm{full}}}(x)/\|h_{\theta_{\mathrm{full}}}(x)\|_2\).
In our partial-class deletion setting, \(D_f\) is relatively homogeneous, so we
represent it by the average normalized forget embedding
\[
\bar h_F:=\frac{u_F}{\|u_F\|_2}, \quad \text{where}\;\;\;
u_F:=\sum_{(x_f,y_f)\in D_f} h_{\theta_{\mathrm{full}}}(x_f).
\]
We score each retained example by cosine similarity to this average,
\[
s(x,D_f)
=
\left\langle
\bar h_{\theta_{\mathrm{full}}}(x),
\bar h_F
\right\rangle,
\]
and define \(S_k(D_f)\subset D_r\) as the \(k\) retained examples with largest
scores. For heterogeneous forget sets, one can replace the average embedding by
per-example or cluster-based nearest-neighbor selection; we discuss this variant
in Appendix~\ref{app:heterogeneous-support}.

\paragraph{Teacher training.}
We train a small auxiliary teacher \(g_\psi\) on \(S_k(D_f)\) using the original hard labels of these retained examples. The teacher is not trained on \(D_f\) and does not use forget labels. Its role is to approximate the local prediction structure near the forget set at substantially lower cost than full retraining. In our experiments, the base model is ResNet-56 and \(g_\psi\) is ResNet-8. After training, the teacher produces soft labels
\[
\hat y_\psi(x):=g_\psi(x)\in\Delta^{C-1},
\qquad x\in D_f,
\]
where \(\Delta^{C-1}\) is the probability simplex over \(C\) classes. These soft labels are used as proxies for the unavailable retraining soft labels \(\hat y_{\mathrm{ret}}(x)\). Equivalently, the teacher defines a soft-labeled forget set
\[
\widehat D_f:=\{(x,\hat y_\psi(x)):\,x\in D_f\}.
\]
In practice, we keep and renormalize the top-3 teacher probabilities, which speeds convergence while retaining more than \(90\%\) of the prediction mass.

\paragraph{Unlearning objective.}
Starting from $\theta_{\mathrm{full}}$, we optimize
\[
\mathcal{L}_{\mathrm{LTD}}(\theta)
=
\sum_{(x,y)\in D_r}
\mathrm{CE}\!\left(e_y,f_\theta(x)\right)
+
\beta
\sum_{(x,\hat y)\in \widehat D_f}
\mathrm{CE}\!\left(\hat y,f_\theta(x)\right),
\]
where $e_y\in\Delta^{C-1}$ is the one-hot label vector, $\beta>0$ controls the
strength of the forget-set distillation term, and $\mathrm{CE}(p,q):=-\sum_{c=1}^C p_c\log q_c$.
The first term preserves performance on \(D_r\), while the second replaces random labels with teacher soft labels that approximate retraining locally near \(D_f\). In practice, we optimize this objective using minibatches from \(D_r\) and \(\widehat D_f\).

\begin{algorithm}[t]
\caption{Local Teacher Distillation for Homogeneous Forget Sets}
\label{alg:teacher_unlearning}
\begin{algorithmic}[1]
\STATE \textbf{Input:} trained model $f_{\theta_{\mathrm{full}}}$, forget set $D_f$, retain set $D_r$, support size $k$, loss weight $\beta$, unlearning epochs $E$, teacher budget $B$
\STATE Compute normalized embeddings $\bar h_{\theta_{\mathrm{full}}}(x)$ for all $x\in D_f\cup D_r$ and $\bar h_F$.
\STATE Select \(S_k(D_f)\subset D_r\) as the \(k\) retained examples most similar to $\bar h_F$
\STATE Train a small teacher $g_\psi$ on $S_k(D_f)$ using retained hard labels for budget $B$
\STATE Construct $\widehat D_f:=\{(x,\hat y_\psi(x)):\,x\in D_f\}$ from processed teacher predictions
\STATE Initialize $\theta\leftarrow\theta_{\mathrm{full}}$
\STATE Optimize \(\mathcal{L}_{\mathrm{LTD}}(\theta)\) on \(D_r\cup\widehat D_f\) for \(E\) epochs
\STATE \textbf{Return:} unlearned model $f_\theta$
\end{algorithmic}
\end{algorithm}

\paragraph{Hyperparameters.}
Local Teacher Distillation has four main hyperparameters: the support size \(k\), the distillation weight \(\beta\), the number of unlearning epochs \(E\), and the teacher-training budget \(B\). In the homogeneous-forgetting variant used in our experiments, \(k\) is the total number of retained examples selected around the forget set. The weight \(\beta\) controls the strength of the soft-label loss on
\(D_f\), \(E\) controls the adaptation of the original model after teacher labels are constructed, and \(B\) controls how well the auxiliary teacher is fit to the
selected support set; in our experiments, we train until a target support-set accuracy is reached, with a maximum epoch budget.

We choose these hyperparameters without access to the retrained model. For the support size $k$, we use a retrain-free locality diagnostic based on the same similarity structure as support selection. We bin retained examples by similarity, apply one gradient-ascent step on $D_f$ to a copy of the full model, and measure the accuracy drop in each bin relative to the original full model. The largest
drops in high-similarity bins identify the neighborhood most affected by forget-set perturbations, which we use to choose $k$. The remaining parameters are chosen to fit the teacher targets on $D_f$ while preserving performance on $D_r$. Exact values and ablations are reported in Appendices~\ref{app:hyperparameters} and~\ref{app:ablations}.

\paragraph{Computational cost.}
The extra cost of LTD is embedding, support selection, and teacher training. For
the homogeneous-forgetting variant used in our partial-class deletion experiments, we represent \(D_f\) by the average normalized forget embedding and score retained
examples by cosine similarity. After one embedding pass over \(D_f\cup D_r\),
this costs
\[
O(|D_f|p) + O(|D_r|p) + O(|D_r|),
\]
for averaging, scoring, and unordered top-$k$ selection using a linear-time selection algorithm, where
\(p\) is the embedding dimension. The GA-based locality diagnostic adds
\(O(\mathrm{grad}(D_f)) + O(\mathrm{forward}(D_r)) + O(|D_r|(p + 1))\). Teacher training scales with the
selected support size \(k\), rather than \(|D_r|\), and uses a smaller
architecture. The final unlearning stage costs \(E\) epochs of standard
fine-tuning on \(\mathcal{L}_{\mathrm{LTD}}\). Heterogeneous forget set variants are
discussed in Appendix~\ref{app:heterogeneous-support}.

\section{Experiments}

\paragraph{Setup.}
We evaluate our method on CIFAR-100~\citep{krizhevsky2009learning} with a ResNet-56~\citep{He2015DeepRL} backbone. We consider partial-deletion regimes, removing $50\%$ and $90\%$ of a single class (we use the '\texttt{couch}' class). Additional robustness studies, including
$90\%$ deletion, additional affected classes, and SVHN with a ViT-Tiny backbone,
are reported in Appendix~\ref{app:additional-experiments}.
We refer to the partially deleted class as the \emph{affected class}.
Partial-class deletion is a stricter and more realistic setting than whole-class removal: since same-class examples remain in the retained data, retraining behavior near $D_f$ is nontrivial.  Hyperparameter ranges and training details for all methods are reported in Appendix~\ref{app:hyperparameters}. Code is available at \url{https://github.com/polina-dolgova/local_teacher_distillation}. Results are averaged over 5 runs.

\paragraph{Baselines.}
We compare against standard unlearning baselines: random-labeling (RL), fine-tuning (FT), gradient ascent (GA), SalUn, Influence Unlearning (IU), and
AMUN.

\paragraph{Metrics.}
We report standard unlearning metrics: retain accuracy (RA), test accuracy (TA), accuracy on $D_f$ (UA), SVC-based membership inference (MIA)~\citep{jia2023model}, Avg. Gap (mean absolute deviation from retrain values), and runtime in seconds (RTE). MIA is included as a standard audit, while our main analysis focuses on \emph{discrepancies from retraining} rather than on optimizing a particular attack metric. To capture localized collateral forgetting, we also report affected-class accuracy on $D_r$, $D_f$, and the test subset, isolating localized degradation hidden by aggregate
accuracy. Finally, for each bin $B$ induced by the forget-set similarity score $s(x,D_f)$,
we measure drops relative to retraining, where zero means agreement with retraining, while larger magnitudes indicate stronger localized deviation:
\resizebox{\linewidth}{!}{
\begin{minipage}{1.1\linewidth}
\[ 
\Delta_{\mathrm{acc}}(B)
=
\mathrm{Acc}_{\mathrm{retrain}}(B)
-
\mathrm{Acc}_{\mathrm{unlearn}}(B),
\qquad
\Delta_{\mathrm{conf}}(B)
=
\frac{1}{|B|}\sum_{(x,y)\in B}
\left[
f_{\theta_{\mathrm{retrain}}}(x)_y
-
f_{\theta_{\mathrm{unlearn}}}(x)_y
\right].
\]
\end{minipage}
}

\subsection{Main results}

\paragraph{Aggregate and affected-class performance.} Following standard empirical unlearning, we compare methods by agreement with retraining, rather than by minimizing accuracy on $D_f$ in isolation.  Across aggregate metrics (Table~\ref{tab:main_results}), our method achieves the lowest Avg. Gap, indicating the closest approximation to retraining.  However, aggregate metrics alone can be misleading. As shown in Table~\ref{tab:forget_class_subset} and Figure~\ref{fig:forget-class-comparison}, methods that perform well globally may still induce substantial degradation on affected-class points. In particular, IU -- the second-best method by Avg. Gap -- exhibits significant drops in RA$_{\text{affected-class}}$ and TA$_{\text{affected-class}}$, indicating instability on the retrain-relevant neighborhood. In contrast, our method remains consistently close to retraining both globally and on affected-class subsets, suggesting improved control over localized
collateral forgetting. Appendix~\ref{app:additional-experiments} shows that the same pattern persists under $90\%$ deletion, across five additional affected classes, and on
SVHN with a ViT-Tiny backbone.

\paragraph{Similarity-based analysis.}
We further analyze these spillover effects by measuring accuracy drop and correct-class confidence drop relative to retraining as a function of similarity to the forget set (Figure~\ref{fig:acc-drop}). We observe a clear locality effect: points with higher similarity to the forget set exhibit substantially larger deviations from retraining, both on the retain and test sets. This confirms that unlearning errors concentrate in the neighborhood of the removed data. Across all similarity bins, our method maintains consistently low deviation from retraining, remaining stable on both datasets. In contrast, competing methods exhibit sharp degradation in high-similarity regions, indicating poor control over localized collateral forgetting.

\begin{table}[t]
\caption{Main results for class-fraction-to-forget $=0.5$. Values are mean $\pm$ std over 5 runs; all accuracy metrics are in $\%$. Blue values show deviation from Retrain. Avg. Gap is the mean absolute deviation from Retrain over UA, RA, TA, and MIA.}
\label{tab:main_results}
\resizebox{\linewidth}{!}{%
\centering
\small
\setlength{\tabcolsep}{4pt}
\begin{tabular}{lcccccc}
\multicolumn{1}{c}{\bf Method} &
\multicolumn{1}{c}{\bf UA} &
\multicolumn{1}{c}{\bf RA} &
\multicolumn{1}{c}{\bf TA} &
\multicolumn{1}{c}{\bf MIA} &
\multicolumn{1}{c}{\bf Avg. Gap} &
\multicolumn{1}{c}{\bf RTE}
\\ \toprule
Retrain & $47.4 \pm 3.2$ & $99.9 \pm 0.0$ & $72.1 \pm 0.2$ & $71.3 \pm 3.1$ & 0 & $1558.9 \pm 22.0$ \\
\midrule
RL    & $73.2 \pm 7.1$ \diff{+25.8} & $94.6 \pm 0.8$ \diff{-5.3} & $67.9 \pm 0.6$ \diff{-4.2} & $84.9 \pm 13.6$ \diff{+13.6} & \valuenotbest{$12.2$} & $154.4 \pm 0.8$ \\
FT    & $79.0 \pm 6.8$ \diff{+31.6} & $94.1 \pm 1.1$ \diff{-5.8} & $67.6 \pm 1.0$ \diff{-4.5} & $28.6 \pm 8.3$ \diff{-42.7} & \valuenotbest{$21.2$} & $158.6 \pm 1.7$ \\
GA    & $68.6 \pm 0.6$ \diff{+21.2} & $98.6 \pm 0.0$ \diff{-1.3} & $69.6 \pm 0.0$ \diff{-2.5} & $65.7 \pm 0.3$ \diff{-5.6} & \valuenotbest{$7.7$} & $1.8 \pm 0.0$ \\
IU    & $70.6 \pm 8.1$ \diff{+23.2} & $99.6 \pm 0.0$ \diffbest{-0.3} & $71.5 \pm 0.2$ \diff{-0.6} & $69.1 \pm 7.7$ \diff{-2.2} & \valuenotbest{$6.6$} & $29.8 \pm 0.5$ \\
SalUn & $67.2 \pm 5.4$ \diff{+19.8} & $94.3 \pm 0.8$ \diff{-5.6} & $67.9 \pm 0.7$ \diff{-4.2} & $84.0 \pm 12.0$ \diff{+12.7} & \valuenotbest{$10.6$} & $361.1 \pm 6.6$ \\
AMUN & $67.4 \pm 2.2$ \diff{+20.0} & $94.3 \pm 0.1$ \diff{-5.6} & $72.6 \pm 0.1$ \diffbest{+0.5} & $45.3 \pm 3.8$ \diff{-26.0} & \valuenotbest{$13.0$} & $171.0 \pm 0.8$ \\
\midrule
 LTD (ours) & $53.4 \pm 8.4$ \diffbest{+6.0} & $99.4 \pm 0.1$ \diff{-0.5} & $70.8 \pm 0.3$ \diff{-1.3} & $69.4 \pm 9.4$ \diffbest{-1.9} & \valuebest{$2.4$} & $226.5 \pm 5.2$ \\
\bottomrule
\end{tabular}
}
\end{table}

\begin{table}[t]
\caption{Results on the subset of points whose true label equals the affected class. Values are mean $\pm$ std over 5 runs, in $\%$. Blue values show deviation from Retrain.}
\label{tab:forget_class_subset}
\centering
\small
\setlength{\tabcolsep}{6pt}
\scalebox{0.9}{
\begin{tabular}{lcccc}
\multicolumn{1}{c}{\bf Method} &
\multicolumn{1}{c}{\bf RA$_{\text{affected-class}}$} &
\multicolumn{1}{c}{\bf UA$_{\text{affected-class}}$} &
\multicolumn{1}{c}{\bf TA$_{\text{affected-class}}$} &
\multicolumn{1}{c}{\bf Avg. Gap}
\\ \toprule
Retrain & $99.7 \pm 0.3$ & $47.4 \pm 3.1$ & $49.6 \pm 5.2$ & 0 \\
\midrule
RL      & $88.2 \pm 4.0$ \diff{-11.4} & $73.2 \pm 7.1$ \diff{+25.8} & $50.2 \pm 6.0$ \diffbest{+0.6} & \valuenotbest{$12.6$} \\
FT      & $95.2 \pm 3.6$ \diff{-4.5}  & $79.0 \pm 6.8$ \diff{+31.6} & $58.6 \pm 7.9$ \diff{+9.0} & \valuenotbest{$15.0$} \\
GA      & $66.6 \pm 0.2$ \diff{-33.1} & $68.6 \pm 0.6$ \diff{+21.2} & $31.0 \pm 0.0$ \diff{-18.6} & \valuenotbest{$24.3$} \\
IU      & $74.6 \pm 5.3$ \diff{-25.0} & $70.6 \pm 8.1$ \diff{+23.1} & $32.0 \pm 3.2$ \diff{-17.6} & \valuenotbest{$21.9$} \\
SalUn   & $81.2 \pm 5.8$ \diff{-18.5} & $67.2 \pm 5.4$ \diff{+19.8} & $43.0 \pm 6.3$ \diff{-6.6} & \valuenotbest{$15.0$} \\
AMUN    & $87.0 \pm 1.1$ \diff{-12.6} & $67.4 \pm 2.2$ \diff{+19.9} & $54.4 \pm 2.9$ \diff{+4.8} & \valuenotbest{$12.4$} \\
\midrule
LTD (Ours) & $97.4 \pm 1.4$ \diffbest{-2.2} & $53.4 \pm 8.4$ \diffbest{+5.9} & $43.0 \pm 5.9$ \diff{-6.6} & \valuebest{$4.9$} \\
\bottomrule
\end{tabular}}
\end{table}

\begin{figure}[!t]
    \centering \vspace*{-3pt}    \includegraphics[width=1.\linewidth]{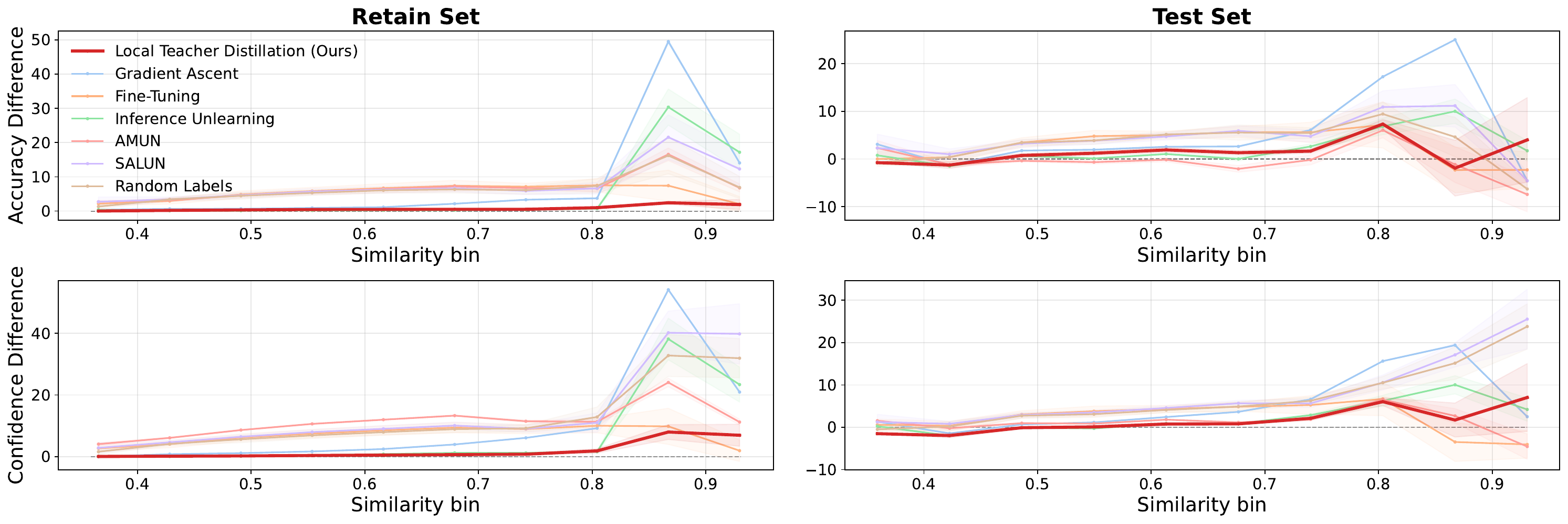}
    \caption{\textbf{Localized prediction deviations under $50\%$ class deletion.}
    We plot bin-level differences from retraining:
    $\mathrm{Acc}_{\mathrm{retrain}}-\mathrm{Acc}_{\mathrm{unlearn}}$ for accuracy
    (top row) and
    $f_{\theta_{\mathrm{retrain}}}(x)_y-f_{\theta_{\mathrm{unlearn}}}(x)_y$ for
    correct-class confidence (bottom row), as functions of similarity to $D_f$ on
    both retain and test sets. Larger deviations at high similarity indicate that
    unlearning effects concentrate near the removed data.}
    \label{fig:acc-drop}
\end{figure}

\begin{figure}[!t]
    \centering  \vspace*{-6pt}   \includegraphics[width=1.\linewidth]{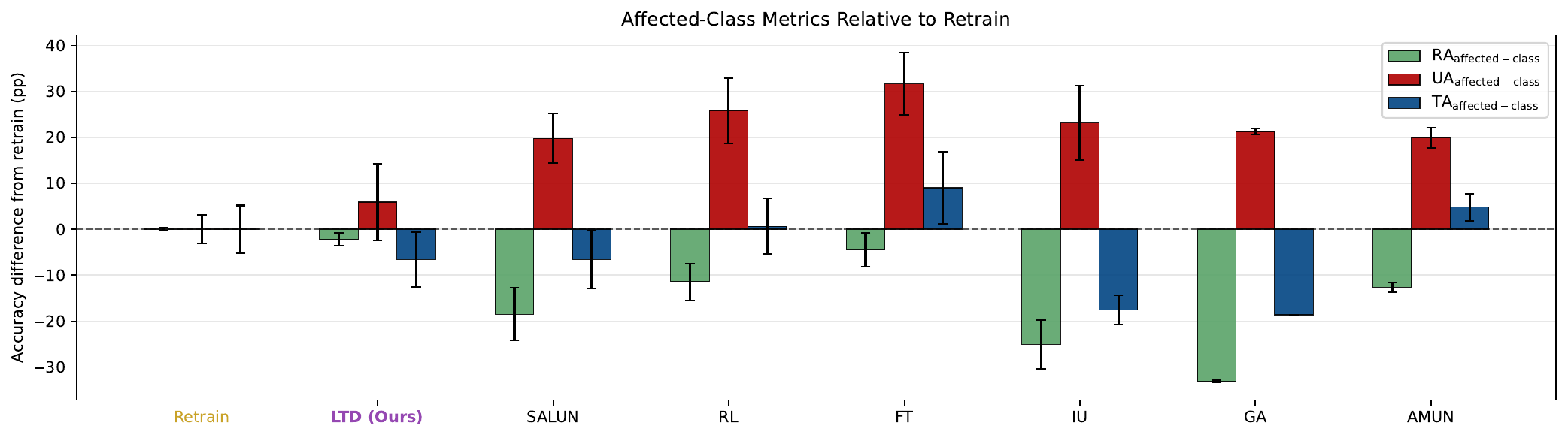}
    \caption{\textbf{Affected-class performance difference from Retrain.}
    Bars show differences from Retrain for RA$_{\text{affected-class}}$,
    UA$_{\text{affected-class}}$, and TA$_{\text{affected-class}}$; values closer to
    zero indicate better agreement. The figure reveals localized collateral
    forgetting not captured by aggregate metrics.}
    \label{fig:forget-class-comparison}
\end{figure}

\paragraph{Ablations.} We ablate support selection, teacher architecture, teacher quality, and support
size, with full results in Appendix~\ref{app:ablations}. Locality is the most
important factor: using nearest-neighbor support substantially improves alignment with retraining compared to random or distant support (Table~\ref{tab:support_selection_ablation}). Additional ablations show that the method is robust to the teacher architecture, while teacher quality and support size affect how well the teacher approximates retraining near the forget set.

\begin{table}[!t]
\caption{\textbf{Ablation on support selection.} Nearest-neighbor support yields substantially better alignment with retraining, while random or distant support fails to capture the relevant prediction structure.}
\label{tab:support_selection_ablation}
\resizebox{\linewidth}{!}{%
\centering
\small
\setlength{\tabcolsep}{5pt}
\begin{tabular}{lcccccc}
\toprule
  &
\multicolumn{2}{c}{\bf Standard metrics} &
\multicolumn{3}{c}{\bf Forget-class subset metrics} &
\multicolumn{1}{c}{\bf Teacher}  \\ \cmidrule(r){2-3} \cmidrule(lr){4-6} \cmidrule(l){7-7}
\multicolumn{1}{l}{\bf Method} &
\multicolumn{1}{c}{\bf RA} &
\multicolumn{1}{c}{\bf TA} &
\multicolumn{1}{c}{\bf RA$_{\text{affected-class}}$} &
\multicolumn{1}{c}{\bf UA$_{\text{affected-class}}$} &
\multicolumn{1}{c}{\bf TA$_{\text{affected-class}}$} &
\multicolumn{1}{c}{\bf UA on $D_f$}
\\ \midrule
Retrain
& $99.9 \pm 0.0$
& $72.1 \pm 0.2$
& $99.7 \pm 0.3$
& $47.4 \pm 3.1$
& $49.6 \pm 5.2$
& $47.4 \pm 3.1$ \\
\midrule
Random
& $99.4 \pm 0.0$ \diff{-0.5}
& $70.5 \pm 0.2$ \diff{-1.5}
& $97.6 \pm 1.3$ \diffbest{-2.1}
& $20.0 \pm 3.0$ \diff{-27.4}
& $37.6 \pm 3.9$ \diff{-12.0}
& $0.3 \pm 0.3$ \diff{-47.1}\\
Farthest
& $99.4 \pm 0.0$ \diff{-0.4}
& $70.6 \pm 0.2$ \diff{-1.5}
& $96.2 \pm 2.0$ \diff{-3.5}
& $15.9 \pm 7.1$ \diff{-31.5}
& $37.8 \pm 4.9$ \diff{-11.8}
& $0.0 \pm 0.0$ \diff{-47.4} \\
Closest
& $99.4 \pm 0.1$ \diffbest{-0.4}
& $70.8 \pm 0.3$ \diffbest{-1.3}
& $97.4 \pm 1.4$ \diff{-2.2}
& $53.4 \pm 8.4$ \diffbest{+5.9}
& $43.0 \pm 5.9$ \diffbest{-6.6}
& $47.92 \pm 8.0$ \diffbest{+0.52}\\
\bottomrule
\end{tabular}}
\end{table}

\section{Conclusion}

We studied \emph{localized collateral forgetting}: a discrepancy between unlearning and retraining that concentrates on non-forget examples close to the forget set. Our theoretical analysis shows that this localization arises naturally in gradient-ascent and random-label-based unlearning, and can persist even when retain-set fine-tuning is used as a recovery step.

The analysis identifies target mismatch as a key mechanism: random or corrupted targets on the forget set need not match the local prediction structure induced by retraining. Motivated by this, we proposed \emph{Local Teacher Distillation}, which replaces random targets with soft labels from a small teacher trained on retained neighbors of the forget set. On CIFAR-100 partial-class deletion, LTD reduces discrepancy to retraining while maintaining strong unlearning performance.

\paragraph{Limitations.}
Our method relies on the quality of the local support set and the representation
used to select it. In our experiments, the forget set is relatively homogeneous,
so a nearest-neighbor support around an average forget representation is
effective. More heterogeneous forget sets may require
clustering, multiple local teachers, or adaptive neighborhood construction.
We evaluate controlled image-classification
partial-deletion settings designed to isolate localized collateral forgetting; extension
to more general architectures, datasets, and deletion regimes remains future work.

\section*{Acknowledgements}

This work was supported by the Helmholtz Association’s Initiative and Networking Fund on the
HAICORE@FZJ partition. We gratefully acknowledge funding from the European Research Council (ERC) under the Horizon Europe Framework Programme (HORIZON) for proposal number 101170430 CollectiveMinds. Views and opinions expressed are however those of the authors only and do not necessarily reflect those of the European Union or the European Research Council. Neither the European Union nor the granting authority can be held responsible for them.

\bibliography{references}

\appendix

\section{Proofs}

\subsection{Proof of Proposition~\ref{prop:linreg-ga} (Localized collateral forgetting under gradient ascent)}
\label{sec:proof-GA}

\begin{proof}

We initialize $\theta_0 := \theta_{\mathrm{full}}$ and perform gradient ascent updates on the deleted point:
\[
\theta_{t+1} = \theta_t + 2\eta\, x_f (x_f^\top \theta_t - y_f).
\]
Denote by $e^{\mathrm{full}}$ the prediction error of $\theta_{\mathrm{full}}$:
\[
e^{\mathrm{full}}_q := x_q^\top \theta_{\mathrm{full}} - y_q,
\quad
e^{\mathrm{full}}_f := x_f^\top \theta_{\mathrm{full}} - y_f.
\]

Define the residual at step $t$:
\[
r_t^{(i)} := x_i^\top \theta_t - y_i.
\]
Our goal is to compare the squared prediction loss after gradient ascent with
the retrained reference:
\[
\Delta_{\mathrm{loss}}(x_q)
:=
\mathbb{E}\!\left[\bigl(r_T^{(q)}\bigr)^2 \mid X\right]
-
\mathbb{E}\!\left[(x_q^\top\theta_{\mathrm{retrain}}-y_q)^2 \mid X\right].
\]
We decompose this quantity by adding and subtracting the full-model loss:
\[
\Delta_{\mathrm{loss}}(x_q)
=
\underbrace{
\mathbb{E}\!\left[\bigl(r_T^{(q)}\bigr)^2 \mid X\right]
-
\mathbb{E}\!\left[\bigl(r_0^{(q)}\bigr)^2 \mid X\right]
}_{A_n}
+
\underbrace{
\mathbb{E}\!\left[(x_q^\top\theta_{\mathrm{full}}-y_q)^2 \mid X\right]
-
\mathbb{E}\!\left[(x_q^\top\theta_{\mathrm{retrain}}-y_q)^2 \mid X\right]
}_{B_n}.
\]
The first term \(A_n\) captures the effect of the gradient-ascent updates,
while \(B_n\) is the correction from the full model to retraining.

\paragraph{Dynamics on the deleted point.}
We first track how the gradient-ascent update changes the residual on the deleted example. By substituting the update rule, we get
\[
r^{(f)}_{t+1}
= x_f^\top \theta_{t+1} - y_f
= r^{(f)}_t + 2\eta \|x_f\|^2 r^{(f)}_t
= (1 + 2\eta \|x_f\|^2) r^{(f)}_t.
\]
Thus, gradient ascent amplifies the initial full-model residual on the deleted
point:
\[
r^{(f)}_t = (1 + 2\eta \|x_f\|^2)^t \, e^{\mathrm{full}}_f.
\]

\paragraph{Dynamics on an arbitrary point.} 
We next track how the same update affects a query point \(x_q\). Its residual changes only through the overlap with the deleted direction:
\[
r^{(q)}_{t+1}
= r^{(q)}_t + 2\eta \langle x_q, x_f\rangle r^{(f)}_t.
\]
Substituting the deleted-point dynamics and unrolling the recursion gives
\[
r^{(q)}_T
= e^{\mathrm{full}}_q
+ 2\eta \langle x_q, x_f\rangle
\sum_{s=0}^{T-1} (1 + 2\eta \|x_f\|^2)^s \, e^{\mathrm{full}}_f.
\]

\paragraph{Contribution relative to the full model.}
Define
\[
\alpha_T := 2\eta \langle x_q, x_f\rangle
\sum_{r=0}^{T-1} (1 + 2\eta \|x_f\|^2)^r = \dfrac{\langle x_q, x_f\rangle}{\langle x_f, x_f\rangle}\Big((1 + 2 \eta \; \|x_f\|^2)^T - 1\Big).
\]
Then the query residual after \(T\) gradient-ascent steps can be written as
\[
r^{(q)}_T = e^{\mathrm{full}}_q + \alpha_T e^{\mathrm{full}}_f.
\]
Since \(\theta_0=\theta_{\mathrm{full}}\), we have
\(r^{(q)}_0=e^{\mathrm{full}}_q = x_q^\top\theta_{\mathrm{full}} - y_q\). Hence the change in the conditional squared
prediction error relative to the full model is

\begin{align*}
\mathbb{E}\,[\bigl(r_T^{(q)}\bigr )^2 \mid X] 
&-
\mathbb{E}[(x_q^\top\theta_{\mathrm{full}} - y_q)^2 \mid X] 
=
\mathbb{E}\,[\bigl(r_T^{(q)}\bigr )^2 \mid X] - \mathbb{E}\, [\bigl(r_0^{(q)}\bigr)^2 \mid X]
\\&= 
\mathbb{E}[(e^{\mathrm{full}}_q + \alpha_T e^{\mathrm{full}}_f)^2 \mid X] 
- 
\mathbb{E}[(e^{\mathrm{full}}_q)^2 \mid X]
=
\alpha^2_T\mathbb{E} [(e^{\mathrm{full}}_f)^2 \mid X]
+ 
 2\alpha_T\mathbb{E}[e^{\mathrm{full}}_f e^{\mathrm{full}}_q \mid X ].
\end{align*}

Therefore, the first term in the decomposition is
\[
A_n
=
\alpha_T^2 \mathbb{E}\left[(e_f^{\mathrm{full}})^2 \mid X\right]
+
2\alpha_T \mathbb{E}\left[e_f^{\mathrm{full}} e_q^{\mathrm{full}} \mid X\right].
\]

\paragraph{Asymptotic behavior.}
We now evaluate \(A_n\) and \(B_n\) under the linear model. The goal is to show
that the leading term comes from the gradient-ascent contribution \(A_n\), while
the full-to-retrain correction \(B_n\) is lower order.

Let \(X_{-f}\) and \(y_{-f}\) denote the design matrix and response vector with
the deleted example removed. Since \(n>d\) and \(\Sigma\succ 0\), both \(X^\top X\) and
\(X_{-f}^\top X_{-f}\) are invertible with probability one. In this linear-regression setting,
\[
\theta_{\mathrm{full}}
=
(X^\top X)^{-1}X^\top y,
\qquad
\theta_{\mathrm{retrain}}
=
(X_{-f}^\top X_{-f})^{-1}X_{-f}^\top y_{-f}.
\]
Conditioning on \(X\), the only randomness is the observation noise. Let $H=X(X^\top X)^{-1}X^\top$ and $h_{ij}=H_{ij}$. It is well-known from standard OLS analysis \citep[Sec.~4.2.2]{montgomery2012introduction}

\[
\mathbb{E}[e_i^{\mathrm{full}} e_j^{\mathrm{full}} \mid X]
=
\sigma^2(\mathbf{1}\{i=j\} - h_{ij}).
\]

Thus,
\[
\mathbb{E}[(e_f^{\mathrm{full}})^2 \mid X]
=
\sigma^2(1 - h_{ff}),
\quad
\mathbb{E}[e_f^{\mathrm{full}} e_q^{\mathrm{full}} \mid X]
=
-\sigma^2 h_{fq}.
\]

By the law of large numbers, as the sample size \(n\) grows,
\[
\frac{1}{n}X^\top X \xrightarrow{p} \Sigma,
\]
which implies \(h_{ij}=O_p(1/n)\) for fixed \(i,j\). Therefore,
\begin{align*}
A_n &= 
\alpha^2_T\mathbb{E} [(e^{\mathrm{full}}_f)^2 \mid X] + 
2\alpha_T\mathbb{E}[e^{\mathrm{full}}_f e^{\mathrm{full}}_q | X ] 
=
\alpha_T^2 \sigma^2 + O_p(1/n).
\end{align*}

For $B_n$, by the standard case-deletion identity for OLS \citep{cook1986residuals}, we have
\[
x_q^\top\theta_{\mathrm{retrain}} - y_q
=
e_q^{\mathrm{full}}
+
\frac{h_{qf}}{1-h_{ff}} e_f^{\mathrm{full}}.
\]
Therefore,
\begin{align*}
&B_n = \mathbb{E}\!\left[(x_q^\top\theta_{\mathrm{full}} - y_q)^2 \mid X \right]
-
\mathbb{E}\!\left[(x_q^\top\theta_{\mathrm{retrain}} - y_q)^2 \mid X \right]
=
\frac{\sigma^2 h_{qf}^2}{1-h_{ff}} = O_p(1/n^2),
\end{align*}

Combining the asymptotic expansions of $A_n$ and $B_n$, we obtain
\[
\Delta_{\mathrm{loss}}(x_q)
=
\alpha_T^2 \sigma^2 + O_p(1/n) +  O_p(1/n^2) = \langle x_q, x_f\rangle^2 \;\Big(\dfrac{(1 + 2 \eta \; \|x_f\|^2)^T - 1}{\|x_f\|^2}\Big)^2 \sigma^2 + O_p(1/n),
\]

where 
\[
c(T, \eta, x_f) := \Big(\dfrac{(1 + 2 \eta \; \|x_f\|^2)^T - 1}{\|x_f\|^2}\Big)^2\]
does not depend on $x_q$, and the \(B_n\) term is lower order and is absorbed into the \(O_p(1/n)\) remainder.
\end{proof}

\subsection{Proof of Proposition~\ref{prop:logreg-rl} (Localized collateral forgetting under random-label unlearning)} 
\label{sec:proof-RL}

\begin{assumption}[Conditions for convergence of RL iterates]
\label{ass:sgd-rl}
Let $D_f=\{(x_i,y_i)\}_{i\in I_f}$ be fixed and finite, and let
$\tilde y=(\tilde y_i)_{i\in I_f}$, where
$\tilde y_i\sim \mathrm{Bernoulli}(1/2)$ independently. Define
\[
F(\theta)
=
\mathbb{E}_{\tilde y}\!\left[
\sum_{i\in I_f}\ell(\theta;x_i,\tilde y_i)
\right]
+
\frac{\lambda}{2}\|\theta\|^2 .
\]
Assume:
\begin{enumerate}
    \item \(\lambda>0\), so that \(F\) is strongly convex;
    \item the step sizes satisfy
    \[
    \sum_{t=0}^\infty \eta_t = \infty,
    \qquad
    \sum_{t=0}^\infty \eta_t^2 < \infty;
    \]
    \item the stochastic gradients
    \[
    g_t(\theta)
    :=
    \sum_{i\in I_f}
    \nabla_\theta \ell(\theta;x_i,\tilde y_i^{(t)})
    +
    \lambda \theta
    \]
    are unbiased,
    \[
    \mathbb{E}\!\left[g_t(\theta)\mid \theta\right]=\nabla F(\theta),
    \]
    and have bounded second moment,
    \[
    \mathbb{E}\!\left[\|g_t(\theta)\|^2 \mid \theta\right]
    \le C\bigl(1+\|\theta\|^2\bigr)
    \]
    for some constant \(C>0\).
\end{enumerate}
\end{assumption}

\paragraph{Proof of the proposition.}
\begin{proof}
The gradient descent step on $D_f$ with randomized labels can be written as:

\[
\theta_{t+1}
=
(1-\eta_t\lambda)\theta_t
-
\eta_t\,X_F^\top\bigl(\sigma(z_t^{(F)})-\tilde y^{(t)}\bigr).
\]

Let
\[
G := X_F X_F^\top,
\qquad
k_q := X_F x_q.
\] 

Taking inner products gives
\begin{align*}
z_{t+1}^{(F)}
&=
(1-\eta_t\lambda)z_t^{(F)}
-
\eta_t\,G\bigl(\sigma(z_t^{(F)})-\tilde y^{(t)}\bigr),\\
z_{t+1}^{(q)}
&=
(1-\eta_t\lambda)z_t^{(q)}
-
\eta_t\,k_q^\top\bigl(\sigma(z_t^{(F)})-\tilde y^{(t)}\bigr).
\end{align*}

Matrix $G$ is invertible by $\operatorname{rank}(X)=n$, and we obtain
\[
z_{t+1}^{(q)}-(1-\eta_t\lambda)z_t^{(q)}
=
\alpha_q^\top\bigl(z_{t+1}^{(F)}-(1-\eta_t\lambda)z_t^{(F)}\bigr),
\]
where
\[
\alpha_q := G^{-1}k_q
\]
are the coordinates of the orthogonal projection of $x_q$ onto $\mathrm{span}(X^\top_F)$ in the basis given by the rows of $X_F$.

Unrolling the recursion from $t=0$ to $T-1$ and using the fact that $\theta_0 = \theta_{\mathrm{full}}$ yields
\[
\langle \theta_T,x_q\rangle-\langle \theta_{\mathrm{full}},x_q\rangle
=
(\Gamma_{0:T}-1)z_{0}^{(q)}
+
\alpha_q^\top\bigl(z_{T}^{(F)}-\Gamma_{0:T}z_{0}^{(F)}\bigr).
\]

For the full and retrained references, the retained target logits are the same.
Since both references interpolate these logits,
$X_R\theta_{\mathrm{full}}=X_R\theta_{\mathrm{retrain}}=z_R$. Hence, for
$x_q\in D_r$,
$\langle\theta_{\mathrm{full}},x_q\rangle
=\langle\theta_{\mathrm{retrain}},x_q\rangle$, and therefore
\[
\langle \theta_T,x_q\rangle-\langle \theta_{\mathrm{retrain}},x_q\rangle
=
(\Gamma_{0:T}-1)z_{0}^{(q)}
+
\alpha_q^\top\bigl(z_{T}^{(F)}-\Gamma_{0:T}z_{0}^{(F)}\bigr).
\]
 
\end{proof}

\paragraph{Variance.} Under the setting of Proposition~\ref{prop:logreg-rl}, the only randomness in $\langle \theta_T,x_q\rangle-\langle \theta_{\mathrm{retrain}},x_q\rangle$ comes from the random labels used in the
unlearning dynamics. Therefore,
\[
\mathrm{Var}(\langle \theta_T,x_q\rangle-\langle \theta_{\mathrm{retrain}},x_q\rangle)
=
\alpha_q^\top \,\mathrm{Cov}(z_{T}^{(F)})\,\alpha_q,
\]

This shows that the stochastic variability of RL is not uniform across retained points. It is filtered through the same projection coefficients $\alpha_q$ as the mean discrepancy: points with a larger component in the forget span can exhibit larger variance in their logit discrepancy.

\paragraph{Finite-time regime.}
If
\[
\left|(\Gamma_{0:T}-1)z_0^{(q)}\right|
\le
\varepsilon_T
\left|
\alpha_q^\top\bigl(z_T^{(F)}-\Gamma_{0:T}z_0^{(F)}\bigr)
\right|,
\qquad \varepsilon_T<1,
\]
then the discrepancy is controlled up to relative error $\varepsilon_T$ by the
forget-set perturbation term. In particular, its dependence on $x_q$ is through
the projection coefficients $\alpha_q$.

\paragraph{Asymptotic behavior.} Under Assumption~\ref{ass:sgd-rl}, the RL iterates are stochastic-gradient
iterates for the expected random-label objective
\[
F(\theta)
=
\mathbb{E}_{\tilde y}\!\left[
\sum_{i\in I_f}\ell(\theta;x_i,\tilde y_i)
\right]
+
\frac{\lambda}{2}\|\theta\|^2 .
\]
Since $\mathbb{E}[\tilde y_i]=1/2$, this objective is equivalently
\[
F(\theta)
=
\sum_{i\in I_f}\ell(\theta;x_i,1/2)
+
\frac{\lambda}{2}\|\theta\|^2 .
\]
Moreover,
\[
\nabla F(\theta)
=
X_F^\top\!\left(\sigma(X_F\theta)-\frac12\mathbf{1}\right)
+
\lambda\theta .
\]
Thus $\nabla F(0)=0$, and since $\lambda>0$, $F$ is strongly convex, so its
unique minimizer is $\theta^\star=0$. By standard SGD convergence,
$\theta_t\to 0$ under Assumption~\ref{ass:sgd-rl}. Hence, for any fixed query
$x_q$,
\[
\langle \theta_t,x_q\rangle \to 0,
\]
and, under the corresponding $L^1$ convergence implied by the same bounded-moment
conditions,
\[
\mathbb{E}\langle \theta_t,x_q\rangle \to 0.
\]

Therefore,
\[
\mathbb{E}\left[
\langle\theta_T,x_q\rangle
-
\langle\theta_{\mathrm{retrain}},x_q\rangle
\right]
\to
-\langle\theta_{\mathrm{retrain}},x_q\rangle .
\]

\paragraph{Relation to the single-point case.}
In the single-point setting, the logit dynamics is governed by the scalar coefficient
\[
\alpha_{q} = \frac{\langle x_q, x_f\rangle}{\langle x_f, x_f\rangle}.
\]

In the multi-point case, this coefficient is replaced by the vector
\[
\alpha_q = (X_F X_F^\top)^{-1} X_F x_q,
\]
which can be interpreted as the coordinates of the projection of $x_q$ onto
$\mathrm{span}(X_F^\top)$.

Thus, the scalar similarity $\langle x_q, x_f\rangle$ is replaced by a projection
onto the subspace spanned by the deleted points.

\subsection{General RL+FT identities}~\label{app:rlft-identities}

Let $A=F\cup R$, and let $X_A:=X_{F\cup R}$. Define
\[
z_t^{(A)}:=X_A\theta_t,
\qquad
G_A:=X_AX_A^\top,
\qquad
k_{q,A}:=X_Ax_q,
\qquad
\alpha_{q,A}:=G_A^{-1}k_{q,A}.
\]
Then $X_A^\top\alpha_{q,A}$ is the projection of $x_q$ onto
$\operatorname{span}(X_A^\top)$.

The RL+FT update can be written as
\[
\theta_{t+1}
=
(1-\eta_t\lambda)\theta_t
-
\eta_t X_A^\top g_t,
\]
where $g_t$ contains the logistic-gradient factors on $R$ with true labels and on
$D_f$ with random labels. Hence
\[
z_{t+1}^{(q)}-(1-\eta_t\lambda)z_t^{(q)}
=
\alpha_{q,A}^\top
\left(
z_{t+1}^{(A)}-(1-\eta_t\lambda)z_t^{(A)}
\right).
\]
Unrolling gives
\[
z_T^{(q)}
=
\Gamma_{0:T}z_0^{(q)}
+
\alpha_{q,A}^\top
\left(
z_T^{(A)}-\Gamma_{0:T}z_0^{(A)}
\right).
\]
Therefore, for any query point $x_q$,
\[
\langle\theta_T,x_q\rangle-\langle\theta_{\mathrm{retrain}},x_q\rangle
=
(\Gamma_{0:T}-1)z_0^{(q)}
+
\alpha_{q,A}^\top
\left(
z_T^{(A)}-\Gamma_{0:T}z_0^{(A)}
\right)
+
\langle\theta_{\mathrm{full}}-\theta_{\mathrm{retrain}},x_q\rangle .
\]
For retained training points, the last term vanishes under the interpolation
assumption.

\paragraph{Mean dynamics.}
Taking expectation over the random labels on $D_f$ replaces them by the soft target
$1/2$. Thus the mean dynamics converge, under the corresponding SGD convergence
conditions, to a minimizer of
\[
L_{\mathrm{RL+FT}}(\theta)
=
\sum_{(x_i,y_i)\in R}\ell(\theta;x_i,y_i)
+
\sum_{(x_i,y_i)\in D_f}\ell(\theta;x_i,1/2)
+
\frac{\lambda}{2}\|\theta\|^2 .
\]
We denote this minimizer by $\theta_{\mathrm{RL+FT}}$. In general,
\[
\mathbb{E}\left[\langle\theta_T,x_q\rangle\right]
\to
\langle\theta^{(\lambda)}_{\mathrm{RL+FT}},x_q\rangle ,
\]
which need not equal $\langle\theta_{\mathrm{retrain}},x_q\rangle$.

\subsection{Proof of Proposition~\ref{prop:logreg-rl-ft}}

\begin{proof}
Assume \(R=D_r\). For fixed \(\lambda>0\), Assumption~\ref{ass:sgd-rl}
implies that the stochastic trajectory converges to the limiting surrogate
solution \(\theta_{\mathrm{RL+FT}}^{(\lambda)}\). In the limit
\(\lambda\to0^+\), this solution agrees with retraining on the retained training
logits. Hence, for every retained training point \((x_q,y_q)\in D_r\),
\[
\lim_{\lambda\to0^+}\lim_{T\to\infty}
\mathbb{E}\left[
\langle\theta_T,x_q\rangle
-
\langle\theta_{\mathrm{retrain}},x_q\rangle
\right]
=0 .
\]

Now assume that \(\theta_{\mathrm{retrain}}\) is the minimum-norm interpolating
solution on \(D_r\) and that \(x_q\in\operatorname{span}(X^\top)\). Write
\[
x_q=X_R^\top v_R+X_F^\top v_F .
\]
Since \(R=D_r\), in the limit \(\lambda\to0^+\) the surrogate solution agrees
with retraining on \(X_R\), while the random-label part imposes zero logits on
\(X_F\) in expectation. Therefore,
\[
\lim_{\lambda\to0^+}
\langle\theta^{(\lambda)}_{\mathrm{RL+FT}},x_q\rangle
=
\langle\theta_{\mathrm{retrain}},X_R^\top v_R\rangle .
\]
Using
\[
\langle\theta_{\mathrm{retrain}},x_q\rangle
=
\langle\theta_{\mathrm{retrain}},X_R^\top v_R\rangle
+
\langle\theta_{\mathrm{retrain}},X_F^\top v_F\rangle,
\]
we obtain
\[
\lim_{\lambda\to0^+}
\langle\theta^{(\lambda)}_{\mathrm{RL+FT}},x_q\rangle
=
\langle\theta_{\mathrm{retrain}},x_q\rangle
-
\langle\theta_{\mathrm{retrain}},X_F^\top v_F\rangle .
\]
\end{proof}

Thus, full retain fine-tuning can restore agreement with retraining on retained training points, but this does not imply agreement on general query or test points. For such points, the component aligned with the forget span can still create a discrepancy, as shown by the term $\langle \theta_{\mathrm{retrain}},X_F^\top v_F\rangle$. Hence retain-set recovery alone is insufficient: mitigating localized collateral forgetting requires improving the signal injected on $D_f$ itself.

\subsection{Proof for the Local Support Selection ~\ref{par:support-selection}.}

\begin{proposition}\label{prop:distill-labels}
Under the same setting as Proposition~\ref{prop:logreg-rl-ft}, let \(x_f\) be
the deleted point and \(x_s\in R\) a support point used to construct a
pseudo-label for \(x_f\). Let \(\theta_{\mathrm{retrain}}\) be the retrained
solution. For fixed \(\lambda>0\), let
\(\theta_{\mathrm{unlearn}}^{(\lambda)}\) be the limiting solution of the
corresponding pseudo-label unlearning objective, and set
\[
\theta_{\mathrm{unlearn}}
:=
\lim_{\lambda\to0^+}\theta_{\mathrm{unlearn}}^{(\lambda)} .
\]
Then for any query point \(x_q\),
\[
\Delta :=
\left|
\langle \theta_{\mathrm{unlearn}}, x_q \rangle
-
\langle \theta_{\mathrm{retrain}}, x_q \rangle
\right|
=
\left|
\alpha_{q,F}\,(\theta_{\mathrm{retrain}})^\top (P_{x_s}x_f - x_f)
\right|,
\]
where \(P_{x_s} = x_s x_s^\top/\|x_s\|^2\).
\end{proposition}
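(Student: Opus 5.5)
We follow the template of the proof of Proposition~\ref{prop:logreg-rl-ft}, with the random-label target $1/2$ on $x_f$ replaced by a pseudo-label built from the support point $x_s$. We fix the pseudo-label objective as the RL+FT surrogate $L_{\mathrm{RL+FT}}$ with the forget term $\ell(\theta;x_f,1/2)$ replaced by $\ell(\theta;x_f,\sigma(\tilde z_f))$. The target logit is taken to be the retrained model's logit on the projection of $x_f$ onto the support direction:
\[
\tilde z_f:=\langle\theta_{\mathrm{retrain}},P_{x_s}x_f\rangle
=\frac{\langle x_s,x_f\rangle}{\|x_s\|^2}\,\langle\theta_{\mathrm{retrain}},x_s\rangle .
\]
This is exactly what a one-point local teacher fitted on $x_s$ outputs when it extrapolates linearly along $x_s$. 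Since $x_s\in R$, its logit equals the retrained logit under interpolation.

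\textbf{Step 1: the $\lambda\to0^+$ limit of the surrogate.} For fixed $\lambda>0$ the objective is strongly convex, so $\theta^{(\lambda)}_{\mathrm{unlearn}}$ is unique. As $\lambda\to0^+$ it converges to the minimum-norm parameter that interpolates the targets:
\[
X_R\theta=z_R,\qquad \langle\theta,x_f\rangle=\tilde z_f .
\]
This is the standard ridge-to-min-norm limit in the overparameterized regime $\operatorname{rank}(X)=n$. The paper used the same fact implicitly in Proposition~\ref{prop:logreg-rl-ft}. Consequently $\theta_{\mathrm{unlearn}}\in\operatorname{span}(X^\top)$, and it agrees with $\theta_{\mathrm{retrain}}$ on every row of $X_R$.

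\textbf{Step 2: decompose the query.} As in Proposition~\ref{prop:logreg-rl-ft}, we split the query along the two blocks of rows,
\[
x_q=X_R^\top v_R+\alpha_{q,F}\,x_f+w,\qquad w\perp\operatorname{span}(X^\top),
\]
where $\alpha_{q,F}$ is the forget-block coefficient, i.e. the $F$-coordinate of $G^{-1}Xx_q$ with $G=XX^\top$. The component $w$ contributes nothing to either model: $\theta_{\mathrm{retrain}}$ is a min-norm interpolant and $\theta_{\mathrm{unlearn}}$ is min-norm by Step~1, so both lie in $\operatorname{span}(X^\top)$. The $X_R$ component also contributes identically to both, by Step~1. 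Hence
\[
\langle\theta_{\mathrm{unlearn}}-\theta_{\mathrm{retrain}},x_q\rangle
=\alpha_{q,F}\bigl(\tilde z_f-\langle\theta_{\mathrm{retrain}},x_f\rangle\bigr)
=\alpha_{q,F}\,\theta_{\mathrm{retrain}}^\top(P_{x_s}x_f-x_f).
\]
Taking absolute values gives the claimed identity.

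\textbf{Main obstacle.} The delicate point is Step~1 together with the meaning of ``corresponding pseudo-label objective.'' The statement does not say what label $x_s$ induces, so we must commit to the definition of $\tilde z_f$ above. We must also justify that the $\lambda\to0^+$ limit of the logistic-plus-ridge minimizer is the min-norm interpolant of the soft-target logits. For the latter we plan to argue as follows.
\begin{itemize}
\item With soft targets $\sigma(z)$, each logistic term is uniquely minimized at logit $z$.
\item Since $\operatorname{rank}(X)=n$, all targets can be interpolated exactly.
\item For $\lambda>0$, the first-order condition forces $\theta^{(\lambda)}\in\operatorname{span}(X^\top)$.
\item The ridge penalty is $O(\lambda)$ at the min-norm interpolant, which bounds the loss excess and the norm of $\theta^{(\lambda)}$.
\item Any limit point therefore interpolates all targets and has norm at most that of the min-norm interpolant. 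Since it lies in $\operatorname{span}(X^\top)$, it is exactly the min-norm interpolant.
\end{itemize}
Once this limit is identified, Step~2 is linear algebra.
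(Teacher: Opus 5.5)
Your proposal is correct and follows essentially the same route as the paper. The paper also uses the one-point min-norm teacher $\theta_s^*=\frac{x_s^\top\theta_{\mathrm{full}}}{\|x_s\|^2}x_s$, whose logit on $x_f$ equals your $\tilde z_f$ because $\langle\theta_{\mathrm{full}},x_s\rangle=\langle\theta_{\mathrm{retrain}},x_s\rangle$; it then expresses both logits on $x_q$ through the projection coefficients $\alpha_q$ onto $\operatorname{span}(X_{F\cup R}^\top)$ and cancels the $R$-block, differing only cosmetically in routing through $\theta_{\mathrm{full}}$ and $\theta_{\mathrm{retrain}}=P_R\theta_{\mathrm{full}}$ where you work with $\theta_{\mathrm{retrain}}$ directly, while your explicit ridge-to-min-norm argument in Step~1 justifies what the paper uses without proof when it sets $\langle\theta_{\mathrm{unlearn}},x_q\rangle=\alpha_q^\top z_\infty^{(F\cup R)}$.
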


\begin{corollary}\label{cor:distill-labels}
Let
\[
r_s := x_f - P_{x_s}x_f.
\]
Then
\[
\Delta
=
\bigl|\alpha_{q,F}\,(\theta_{\mathrm{retrain}})^\top r_s\bigr|
\le
|\alpha_{q,F}|\,\|\theta_{\mathrm{retrain}}\|\,\|r_s\|.
\]
Moreover,
\[
\|r_s\|^2
=
\|x_f\|^2\bigl(1-\cos^2(x_f,x_s)\bigr).
\]
Thus, support points with larger cosine similarity to \(x_f\) yield a smaller upper bound on \(\Delta\). In particular, nearest neighbors of \(x_f\) in cosine similarity provide a principled choice of support examples for constructing the teacher model.
\end{corollary}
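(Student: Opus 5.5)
The plan is to derive the corollary directly from Proposition~\ref{prop:distill-labels}, taking its identity for $\Delta$ as given. Three steps suffice: a change of sign, the Cauchy--Schwarz inequality, and a Pythagorean computation for the rank-one projection $P_{x_s}$. Throughout we assume $x_s\neq 0$ and $x_f\neq 0$, so that $P_{x_s}$ and $\cos(x_f,x_s)$ are well defined.

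\emph{Identity and bound.} First I will substitute the definition of $r_s$ into Proposition~\ref{prop:distill-labels}. Since $P_{x_s}x_f - x_f = -r_s$ and the absolute value absorbs the sign, this gives
\[
\Delta=\bigl|\alpha_{q,F}\,(\theta_{\mathrm{retrain}})^\top r_s\bigr| .
\]
The scalar $\alpha_{q,F}$ then factors out of the absolute value. Applying Cauchy--Schwarz to $\langle \theta_{\mathrm{retrain}}, r_s\rangle$ yields
\[
\Delta\le |\alpha_{q,F}|\,\|\theta_{\mathrm{retrain}}\|\,\|r_s\| .
\]

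\emph{Residual norm.} Next I will use that $P_{x_s}=x_sx_s^\top/\|x_s\|^2$ is symmetric and idempotent, i.e.\ an orthogonal projection. Hence $r_s=(I-P_{x_s})x_f$ is orthogonal to $P_{x_s}x_f$, and
\[
\|x_f\|^2=\|P_{x_s}x_f\|^2+\|r_s\|^2 .
\]
A direct computation gives
\[
\|P_{x_s}x_f\|^2=\frac{\langle x_f,x_s\rangle^2}{\|x_s\|^2}=\|x_f\|^2\cos^2(x_f,x_s).
\]
Subtracting gives $\|r_s\|^2=\|x_f\|^2\bigl(1-\cos^2(x_f,x_s)\bigr)$.

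\emph{Monotonicity in the support point.} Finally, I will check that in the bound $|\alpha_{q,F}|\,\|\theta_{\mathrm{retrain}}\|\,\|x_f\|\sqrt{1-\cos^2(x_f,x_s)}$ only the last factor depends on $x_s$. Indeed, $\alpha_{q,F}$ depends only on $x_q$ and the forget span, and $\theta_{\mathrm{retrain}}$ is defined without reference to the support choice. The bound is therefore nonincreasing in $\cos^2(x_f,x_s)$, so nearest neighbours in cosine similarity minimise it. This justifies the closing remark about support selection.

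\emph{Expected obstacle.} The only delicate point is this independence claim. It requires that the coefficient $\alpha_{q,F}$ appearing in Proposition~\ref{prop:distill-labels} is the projection coefficient onto $\operatorname{span}(x_f)$ and not onto a span that includes $x_s$. I would confirm this by reading it as the single-point coefficient $\langle x_q,x_f\rangle/\|x_f\|^2$ from the ``Relation to the single-point case'' paragraph. The remaining steps are routine linear algebra.
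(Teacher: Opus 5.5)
Your three steps are the paper's argument and they are correct: absorbing the sign into the absolute value, Cauchy--Schwarz, and the Pythagorean identity for the rank-one projection $P_{x_s}$. What needs fixing is the step you flagged as delicate, because your proposed resolution misidentifies the coefficient.

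In the proof of Proposition~\ref{prop:distill-labels}, $\alpha_{q,F}$ is the $x_f$-coordinate of $\alpha_q=G_{F\cup R}^{-1}X_{F\cup R}\,x_q$. That is, it is the coefficient of $x_f$ when the projection of $x_q$ onto the \emph{whole} training span $\operatorname{span}(X_{F\cup R}^\top)$ is expanded in the basis of all training points. This span does contain $x_s$, since $x_s\in R$.

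It is not $\langle x_q,x_f\rangle/\|x_f\|^2$. That ratio is the coefficient for forget-only RL in Proposition~\ref{prop:logreg-rl}, where one projects onto $\operatorname{span}(X_F^\top)$ alone. A quick check shows the difference: for $x_q\in R$ the paper's coefficient is $\alpha_{q,F}=0$, because $\alpha_q$ is then a standard basis vector. By contrast, $\langle x_q,x_f\rangle/\|x_f\|^2$ is generally nonzero. So your claim that $\alpha_{q,F}$ depends only on $x_q$ and the forget span is false; it depends on $X_R$ as well.

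The independence you need still holds, for the reason the paper gives. The matrix $X_{F\cup R}$, and hence $G_{F\cup R}$ and $\alpha_q$, is built from the entire training set. It is the same whichever retained point is designated as the support, and $\theta_{\mathrm{retrain}}$ is likewise fixed. Varying $x_s$ over $R$ therefore changes only $r_s$, and the bound is nonincreasing in $\cos^2(x_f,x_s)$. Replace the single-point reading with this justification and your proof is complete.
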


\begin{proof}
Let
\[
X_R := \begin{bmatrix} x_{1} & \cdots & x_{f-1} & x_{f+1}& \cdots & x_{n-1} \end{bmatrix}^{\top}, \quad\quad X_{F\cup R}:=
\begin{bmatrix}
x_f^\top\\
X_R
\end{bmatrix}
\in\mathbb{R}^{n\times d}.
\]

Then
\[
k_{q,F\cup R}
=
X_{F\cup R}x_q, 
\quad 
G_{F\cup R}
=
X_{F\cup R}X_{F\cup R}^\top.
\]

The coefficients of the orthogonal projection of $x_q$ onto $\operatorname{span}(X_{F\cup R}^\top)$ are given by
\[
\alpha_q
=
G_{F\cup R}^{-1} k_{q,F\cup R}.
\]

\paragraph{Helper model trained on $x_s$.}

Since the helper model is trained only on $x_s$ and we consider the minimum-norm solution, it is collinear with $x_s$, hence there exists $\beta \in \mathbb{R}$ such that
\[
\theta_s^* = \beta\, x_{s}.
\]

Let $\theta_{\mathrm{full}}$ be the minimum-norm logistic regression solution trained on the full dataset $D$. Matching logits on $x_s$ gives
\[
\langle \theta_s^*, x_s \rangle
=
\beta\, x_{s}^\top x_{s}
=
\langle \theta_{\mathrm{full}}, x_s \rangle
=
x_{s}^{\top} \theta_{\mathrm{full}},
\]
so
\[
\beta = \frac{x_{s}^{\top} \theta_{\mathrm{full}}}{\|x_{s}\|^2},
\qquad
\theta_s^* =
\frac{x_{s}^{\top} \theta_{\mathrm{full}}}{\|x_s\|^2}x_s.
\]

\paragraph{Transferred logits.}

Applying the helper model to $F$:
\[
z_{\infty}^{(F)}
=
\langle \theta_s^*, x_f \rangle
=
\frac{x_s^{\top} \theta_{\mathrm{full}}}{\|x_s\|^2}\, x_s^\top x_f.
\]

For $R$ (true labels are preserved):
\[
z_{\infty}^{(R)}
=
X_R \theta_{\mathrm{full}}
\]

Thus,
\[
z_{\infty}^{(F \cup R)} =
\begin{bmatrix}
z_{\infty}^{(F)} \\
z_{\infty}^{(R)}
\end{bmatrix}.
\]

\paragraph{Prediction on $x_q$.}

\[
\alpha_{q}^\top z_{\infty}^{(F \cup R)}
=
\alpha_{q,F}^\top z_{\infty}^{(F)} + \alpha_{q,R}^\top z_{\infty}^{(R)},
\]
where
\[
\alpha_q=
\begin{bmatrix}
\alpha_{q,F}\\
\alpha_{q,R}
\end{bmatrix},
\qquad
\alpha_{q,F}\in\mathbb{R}^{|F|},
\quad
\alpha_{q,R}\in\mathbb{R}^{|R|},
\]
with $\alpha_{q,F}$ containing the first $|F|$ coordinates of $\alpha_q$, and $\alpha_{q,R}$ containing the remaining $|R|$ coordinates.

Substituting:
\[
\alpha_{q}^\top z_{\infty}^{(F \cup R)}
=
\alpha_{q, F}^\top\frac{x_s^{\top} \theta_{\mathrm{full}}}{\|x_s\|^2}\, x_s^\top x_f
+
\alpha_{q, R}^\top X_R\theta_{\mathrm{full}}
\]

\paragraph{Deviation from the retrained model.}

By definition, we have 
\[
\Delta :=
\left|
\langle \theta_{\mathrm{unlearn}}, x_q \rangle
-
\langle \theta_{\mathrm{retrain}}, x_q \rangle
\right|=|
\alpha_{q}^\top z_{\infty}^{(F \cup R)}
-
\langle \theta_{\mathrm{retrain}}, x_q \rangle|.
\]

Since the coordinates of $\alpha_q$ are the coefficients of the orthogonal projection of $x_q$ onto $\mathrm{span}\{D\}$, and $\theta_{\mathrm{retrain}} \in \mathrm{span}\{R\}$, we have 

\begin{align*}
\langle \theta_{\mathrm{retrain}}, x_q \rangle
&= \langle \theta_{\mathrm{retrain}}, X_{F \cup R}^\top \alpha_q \rangle
= \langle \theta_{\mathrm{retrain}}, x_f \alpha_{q, F} + X_R^\top\alpha_{q, R} \rangle \\
=&  \alpha_{q, F}^{\top} x_f^{\top}\theta_{\mathrm{retrain}} + \alpha^{\top}_{q, R} X_R \theta_{\mathrm{retrain}}
=\alpha_{q, F}^{\top} \left(P_{X^{\top}_R}x_f\right)^{\top}\theta_{\mathrm{full}} + \alpha^{\top}_{q, R} X_R \theta_{\mathrm{full}},
\end{align*}

where $P_{X^\top_R}x_f := \Pi_{\operatorname{span}(X^\top_R)}(x_f)$.

Then

\begin{align*}
\Delta
&=|\alpha_{q}^\top z_{\infty}^{(F \cup R)}
-
\langle \theta_{\mathrm{retrain}}, x_q \rangle|
\\&=
\Big|\Big(
\alpha_{q, F}^\top\frac{x_s^{\top} \theta_{\mathrm{full}}}{\|x_s\|^2}\, x_s^\top x_f
+
\alpha_{q, R}^\top X_R\theta_{\mathrm{full}}
\Big)
-
\Big(
\alpha_{q, F}^{\top} \left(P_{X^\top_R}x_f\right)^{\top}\theta_{\mathrm{full}} + \alpha^{\top}_{q, R} X_R \theta_{\mathrm{full}}
\Big)
\Big|
\\&=
\Big|\alpha_{q, F}^\top \Big(\frac{x_s^{\top} \theta_{\mathrm{full}}}{\|x_s\|^2}\, x_s^\top x_f - \left(P_{X^\top_R}x_f\right)^{\top}\theta_{\mathrm{full}} \Big) \Big|
\\&= 
\Big|\alpha_{q, F}^\top \theta_{\mathrm{full}}^{\top} \Big(\frac{x_sx_s^\top}{\|x_s\|^2}\,  x_f - P_{X^\top_R}x_f \Big)\Big| 
= \Big|\alpha_{q, F}^\top \theta_{\mathrm{full}}^{\top} \Big(P_{x_s}x_f - P_{X^\top_R}x_f \Big)\Big|
\\&= \Big|\alpha_{q, F} \theta_{\mathrm{retrain}}^{\top} \Big(P_{x_s}x_f - x_f \Big)\Big|,
\end{align*}
where we used \(P_{x_s}x_f\in\operatorname{span}(X_R^\top)\) because
\(x_s\in R\), and \(\theta_{\mathrm{retrain}}=P_R\theta_{\mathrm{full}}\) for
the minimum-norm retrained interpolant.

Let
\[
r_s := x_f - P_{x_s}x_f.
\]
Then
\[
\Delta = \,|\alpha_{q,F}\,\theta_{\mathrm{retrain}}^{\top} r_s|.
\]
Hence, the deviation depends on three factors: the coefficient $\alpha_{q,F}$, the retrained parameter vector $\theta_{\mathrm{retrain}}$, and the residual $r_s$ of approximating $x_f$ by the one-dimensional direction $x_s$.

Moreover,
\[
\|r_s\|^2
=
\|x_f - P_{x_s}x_f\|^2
=
\|x_f\|^2 - \|P_{x_s}x_f\|^2
=
\|x_f\|^2\bigl(1-\cos^2(x_f,x_s)\bigr),
\]
so the norm of the residual is small when $x_s$ is well aligned with $x_f$. At the same time, the deviation is not determined by $\|r_s\|$ alone, but by its interaction with $\theta_{\mathrm{retrain}}$ through the inner product $\theta_{\mathrm{retrain}}^\top r_s$.
Thus, even for a small residual, the deviation may vary depending on how strongly this residual correlates with the retrained solution.

Importantly, $\alpha_{q,F}$ depends only on the geometry of $x_f$, $X_R$, and $x_q$, and does not depend on the choice of $x_s$.
In particular, selecting $x_s$ close to $x_f$ affects only the residual term and cannot make $\alpha_{q,F}$ larger.
Note that for $x_q \in R$ one has $\alpha_{q,F}=0$, whereas for unseen test points $x_q \notin \operatorname{span}(R)$ this coefficient is generally nonzero.
\end{proof}

\section{Experimental details}

\subsection{Heterogeneous forget sets}
\label{app:heterogeneous-support}

Our main experiments use partial-class deletion, where \(D_f\) is relatively
homogeneous and support is selected by similarity to the average forget
embedding. When \(D_f\) contains several distinct modes, the average embedding
may mix different local neighborhoods. In this case, support selection can be
made more local. One option is per-example nearest-neighbor selection,
\[
S_k(D_f)=\bigcup_{(x_f,y_f)\in D_f} N_k(x_f),
\]
where \(N_k(x_f)\subset D_r\) contains the \(k\) retained examples most similar
to \(x_f\) in representation space. Alternatively, one can \emph{cluster} \(D_f\) and
select retained neighbors for each cluster center. These variants preserve the
teacher-training and LTD unlearning objectives, but change the support
construction and cost.

For clustered support, the teacher can either be trained on the union of the
selected neighborhoods or trained separately per cluster. The latter can be
parallelized and may be preferable when the clusters correspond to distinct local
prediction structures, since a single small teacher may be less effective at
fitting all clusters simultaneously.

Support selection can be implemented exactly by scoring all retained examples or
approximately using approximate nearest-neighbor search in representation space.
Exact average-embedding selection costs \(O(|D_r|p)\) after embeddings are
computed, while per-example selection costs \(O(|D_f||D_r|p)\) without indexing;
ANN methods can reduce this cost in large-scale settings.

\subsection{Hyperparameters}
\label{app:hyperparameters}

\paragraph{Hyperparameter selection.}
Hyperparameters are selected without access to retraining, using the same rule
for all methods. High retain accuracy is a useful retrain-free sanity check in this setting,
since overparameterized image-classification models typically fit the retained
training data well both before and after retraining. Among configurations with affected-class retain accuracy above
\(70\%\) and overall retain accuracy above \(90\%\), we choose the one with the
lowest accuracy on \(D_f\); if none satisfy these constraints, we choose the one
with the highest overall retain accuracy. For LTD, using the teacher accuracy on
\(D_f\) as a proxy target leads to the same selected regime, close to the best
RA/teacher-UA trade-off.

\paragraph{Main experiments (CIFAR-100).} For the full and retrained models, we use the ResNet-56 architecture from the
PyTorch CIFAR Models repository and follow its CIFAR training protocol
\cite{chenyaofo_pytorch_cifar_models}. Specifically, we train for 200 epochs with
SGD, batch size 256, learning rate 0.1, momentum 0.9, weight decay
$5\cdot 10^{-4}$, Nesterov momentum, and cosine learning-rate decay to zero. We
select the checkpoint with the best test accuracy.

For GA, FT, RL, SalUn, and AMUN, we perform the unlearning updates with SGD using
momentum $0.9$ and weight decay $10^{-6}$, following the corresponding public implementations.
For Influence Unlearning, we use the implementation released with the SalUn paper.

We tune the main method-specific hyperparameters as follows. For GA, we use 5
unlearning epochs and tune the learning rate in $[10^{-6},10^{-2}]$. For FT and
RL, we use 20 unlearning epochs and tune the learning rate in $[10^{-3},10^{-1}]$.
For Influence Unlearning, we tune the WoodFisher inverse-Hessian scaling parameter
$\alpha$ in $[1,20]$. For SalUn, we use 20 unlearning epochs, tune the learning
rate in $[10^{-3},10^{-1}]$, and tune the mask ratio in $[0.1,0.9]$. For AMUN, we
use 20 unlearning epochs, tune the learning rate in $[10^{-3},10^{-1}]$, and
evaluate both the standard and \texttt{advonly} variants.

For our method, we use 20 unlearning epochs and tune the learning rate in
$[10^{-6},10^{-3}]$. We set $\beta=2$, selected from $[1,5]$ using early
training dynamics, to ensure that the forget-set distillation signal is not
overwhelmed by retain-set updates. Following the retrain-free support-size
diagnostic in Appendix~\ref{app:support-size-retrain-free}, we use a support set
of size 1000, which approximately corresponds to selecting examples above a similarity
threshold of $0.75$; the same threshold regime works for the additional affected classes. We train a ResNet-8 local teacher with SGD until it reaches 0.99
training accuracy. The support size and teacher-accuracy threshold are justified by the ablations. For the final student update, we use AdamW rather than SGD, since the goal is to move quickly toward the
teacher-provided soft targets on the forget set.
To control for optimizer effects, we also tested AdamW for the baseline methods (GA, FT, RL, SalUn, and AMUN), but this did not improve their results; in most cases, performance was worse than with their standard SGD-based updates.

\paragraph{MNIST.} For the MNIST illustration, we delete class $9$ and average results over 10 random repeats. We use 300 unlearning steps for RL and 120 steps for GA, with learning rate $5\cdot 10^{-5}$ and weight decay $10^{-6}$ for both methods. Error
bars show 95\% confidence intervals for the class-wise accuracy drop, computed across random repeats using the Student-$t$ interval.

\paragraph{Synthetic illustrations.}
For the linear-regression GA illustration, we delete one point and evaluate the
MSE shift on the retain set. We use $n_{\mathrm{train}}=5000$, $d=10$, noise variance $\sigma^2=0.1$, 450 unlearning
steps, and unlearning learning rate $0.05$. Results are averaged over 100 independent runs.

For the logistic-regression RL illustration, we delete one point and sample
$n_{\mathrm{test}}=10000$ query inputs from $\operatorname{span}(X_R^\top)$, where $X_R$ is the retained-input matrix. The corresponding query logits are defined using the same linear combinations of the retained logits. We use $n_{\mathrm{train}}=15$, $n_{\mathrm{delete}}=1$, $d=20$, 500 unlearning steps, and unlearning learning rate $0.01$. Results are averaged over 100 independent runs.

\paragraph{Additional experiments (SVHN, ViT-Tiny).}
For the full and retrained models we use ViT-Tiny~\citep{DBLP:journals/corr/abs-2010-11929} (\texttt{vit\_tiny\_patch16\_224} from \texttt{timm}~\cite{rw2019timm}), adapted for $32{\times}32$ inputs with patch size~$4$ (yielding 64 patch tokens). We train from scratch for 100 epochs with AdamW (lr$=10^{-3}$, weight decay $0.05$) and cosine learning-rate decay to zero, batch size 256, selecting the checkpoint with the best validation accuracy. For the baseline unlearning methods we search the learning rate within the same ranges as in the CIFAR-100 experiments, using 10 unlearning epochs instead. For our method, we use 10 unlearning epochs with AdamW (lr$=10^{-4}$) and forget-loss weight $\beta=3$.
The ResNet-8 local teacher is trained with SGD (lr$=0.005$, cosine decay) until it reaches 0.99 training accuracy. For $50\%$ deletion we use a support set of size $n_s=4500$; for $90\%$ we
use $n_s=1200$.

\paragraph{Compute resources.}
All experiments were run on a single NVIDIA A100 GPU with \texttt{num\_workers=4} for data loading. Per-run wall-clock runtimes are
reported as RTE in the experimental tables.

\paragraph{Existing assets.}
We use CIFAR-100~\citep{krizhevsky2009learning}, MNIST~\citep{lecun1998mnist}, and the ResNet-56 implementation from PyTorch CIFAR Models~\cite{chenyaofo_pytorch_cifar_models}, which is released under the BSD-3-Clause license.  We access these models through the public repository and do not redistribute modified versions. We also use public baseline implementations where available and cite the corresponding works.

\section{Ablations}~\label{app:ablations}

We ablate four components of the proposed method: support selection (reported in the main text), teacher architecture, teacher quality, and support size. Each ablation is reported as a separate subsection to isolate the contribution of the corresponding design choice.

\subsection{Teacher Type}

We study the effect of the teacher model class. We compare lightweight image-space models (ResNet-8 and a shallow CNN), the full model (ResNet-56), and an MLP trained on frozen embeddings of the full model. This ablation evaluates how the choice of model family and representation affects approximation to retraining.

\begin{table}[H]
\caption{Teacher model ablation: standard metrics and runtime. Values are mean $\pm$ std over 5 runs.}
\label{tab:teacher_model_ablation_standard}
\centering
\small
\setlength{\tabcolsep}{5pt}
\begin{tabular}{lccc}
\multicolumn{1}{c}{\bf Method} &
\multicolumn{1}{c}{\bf RA} &
\multicolumn{1}{c}{\bf TA} &
\multicolumn{1}{c}{\bf RTE}
\\ \toprule
Retrain
& $99.9 \pm 0.0$
& $72.1 \pm 0.2$
& $1548.8 \pm 11.6$ \\
\midrule
Small CNN
& $99.4 \pm 0.2$ \diffbest{-0.4}
& $70.7 \pm 0.2$ \diff{-1.3}
& $224.6 \pm 6.1$ \\
ResNet-8
& $99.4 \pm 0.1$ \diffbest{-0.4}
& $70.8 \pm 0.3$ \diff{-1.3}
& $226.5 \pm 5.2$ \\
ResNet-56
& $99.5 \pm 0.1$ \diffbest{-0.4}
& $70.8 \pm 0.3$ \diff{-1.3}
& $361.6 \pm 31.6$ \\
MLP Frozen
& $99.5 \pm 0.1$ \diffbest{-0.4}
& $71.0 \pm 0.5$ \diffbest{-1.1}
& $191.2 \pm 2.1$ \\
\bottomrule
\end{tabular}
\end{table}

\begin{table}[H]
\caption{Teacher model ablation: affected-class metrics and teacher accuracy on $D_f$. Values are mean $\pm$ std over 5 runs.}
\label{tab:teacher_model_ablation_affected}
\centering
\small
\setlength{\tabcolsep}{4pt}
\begin{tabular}{lcccc}
\multicolumn{1}{c}{\bf Method} &
\multicolumn{1}{c}{\bf RA$_{\text{affected-class}}$} &
\multicolumn{1}{c}{\bf UA$_{\text{affected-class}}$} &
\multicolumn{1}{c}{\bf TA$_{\text{affected-class}}$} &
\multicolumn{1}{c}{\bf Teacher UA on $D_f$}
\\ \toprule
Retrain
& $99.7 \pm 0.3$
& $47.4 \pm 3.1$
& $49.6 \pm 5.2$
& $47.4 \pm 3.1$ \\
\midrule
Small CNN
& $98.8 \pm 0.9$ \diff{-0.9}
& $54.4 \pm 6.4$ \diff{+7.0}
& $46.6 \pm 2.9$ \diffbest{-3.0}
& $43.2 \pm 9.6$ \diff{-4.2} \\
ResNet-8
& $97.4 \pm 1.4$ \diff{-2.2}
& $53.4 \pm 8.4$ \diffbest{+5.9}
& $43.0 \pm 5.9$ \diff{-6.6}
& $47.9 \pm 8.0$ \diffbest{+0.5} \\
ResNet-56
& $97.5 \pm 1.2$ \diff{-2.2}
& $53.3 \pm 6.5$ \diffbest{+5.9}
& $41.8 \pm 6.5$ \diff{-7.8}
& $42.6 \pm 9.0$ \diff{-4.8} \\
MLP Frozen
& $99.6 \pm 0.4$ \diffbest{-0.1}
& $99.4 \pm 0.2$ \diff{+52.0}
& $65.2 \pm 4.2$ \diff{+15.6}
& $98.5 \pm 0.4$ \diff{+51.1} \\
\bottomrule
\end{tabular}
\end{table}

We observe in the Tables~\ref{tab:teacher_model_ablation_standard}--\ref{tab:teacher_model_ablation_affected} that the choice of teacher architecture within image-space models has only a minor effect on performance: small CNNs, ResNet-8, and ResNet-56 achieve comparable results across both aggregate and affected-class metrics. At the same time, smaller models are significantly more efficient to train, offering a favorable trade-off between accuracy and runtime.

In contrast, the MLP trained on frozen embeddings fails to forget, exhibiting near-perfect accuracy on the forget set and large deviations from retraining. This is expected, as the embedding representation is produced by the full model and already encodes the forgotten data, preventing effective unlearning.

\subsection{Teacher versus direct neighbor targets}
\label{app:teacher-vs-neighbor-targets}

We additionally test whether the local teacher is necessary, or whether it is
sufficient to construct forget-set targets directly from nearby retained examples.
For this ablation, we keep the same locality principle but remove the teacher
training step. Instead, the neighbor-label variants construct soft targets for
$D_f$ directly from the labels of nearby retained examples, using neighborhoods
of size $k=500$ or $k=1000$. Thus, this comparison isolates the role of the
teacher: both approaches use local retained information, but LTD first trains a
small model on the selected support set and then uses its predictions as soft
targets on $D_f$.

Tables~\ref{tab:neighbor_main_results}--\ref{tab:neighbor_affected_class_results}
show that direct neighbor labels preserve overall RA and TA reasonably well, but
do not match the retraining behavior on $D_f$. In particular, the neighbor
variants substantially overestimate UA compared to retraining, which leads to
larger aggregate and affected-class Avg. Gap. Increasing the neighborhood size
from $k=500$ to $k=1000$ improves alignment, but the gap remains considerably
larger than for LTD.

These results suggest that locality alone is not sufficient. Directly using nearby retained labels gives a useful local signal, but it does
not capture the retraining-induced prediction structure as well as a trained
local teacher. The teacher appears to act as a smoother local approximation: it uses the same
neighboring retained data, but converts them into soft targets that better match
the behavior of the retrained model near the forget set.

\begin{table}[H]
\caption{Main results for the neighbor-label variants. Values are mean $\pm$ std; All accuracy metrics are in $\%$. Blue values show deviation from Retrain. Avg. Gap is the mean absolute deviation from Retrain over UA, RA, TA, and MIA.}
\label{tab:neighbor_main_results}
\centering
\small
\resizebox{\linewidth}{!}{%
\setlength{\tabcolsep}{4pt}
\begin{tabular}{lcccccc}
\multicolumn{1}{c}{\bf Method} &
\multicolumn{1}{c}{\bf UA} &
\multicolumn{1}{c}{\bf RA} &
\multicolumn{1}{c}{\bf TA} &
\multicolumn{1}{c}{\bf MIA} &
\multicolumn{1}{c}{\bf Avg. Gap} &
\multicolumn{1}{c}{\bf RTE}
\\ \toprule
Retrain & $47.4 \pm 3.2$ & $99.9 \pm 0.0$ & $72.1 \pm 0.2$ & $71.3 \pm 3.1$ & 0 & $1723.9 \pm 38.6$ \\
\midrule
Neighbor ($k=500$) & $83.3 \pm 4.2$ \diff{+35.9} & $99.6 \pm 0.1$ \diffbest{-0.3} & $71.1 \pm 0.2$ \diffbest{-1.0} & $94.7 \pm 2.2$ \diff{+23.4} & \valuenotbest{$15.1$} & $186.8 \pm 0.7$ \\
Neighbor ($k=1000$) & $67.1 \pm 7.4$ \diff{+19.6} & $99.5 \pm 0.1$ \diff{-0.3} & $70.7 \pm 0.3$ \diff{-1.3} & $99.5 \pm 0.4$ \diff{+28.2} & \valuenotbest{$12.4$} & $185.8 \pm 1.6$ \\
\midrule
LTD (ours) & $53.4 \pm 8.4$ \diffbest{+6.0} & $99.4 \pm 0.1$ \diff{-0.5} & $70.8 \pm 0.3$ \diff{-1.3} & $69.4 \pm 9.4$ \diffbest{-1.9} & \valuebest{$2.4$} & $226.5 \pm 5.2$ \\
\bottomrule
\end{tabular}}
\end{table}

\begin{table}[H]
\caption{Results on the affected-class subset for the neighbor-label variants. Values are mean $\pm$ std; Retrain is averaged over 5 runs. All accuracy metrics are in $\%$. Blue values show deviation from Retrain. Avg. Gap is computed over RA$_{\text{affected-class}}$, UA$_{\text{affected-class}}$, and TA$_{\text{affected-class}}$.}
\label{tab:neighbor_affected_class_results}
\centering
\small
\scalebox{0.9}{%
\setlength{\tabcolsep}{6pt}
\begin{tabular}{lcccc}
\multicolumn{1}{c}{\bf Method} &
\multicolumn{1}{c}{\bf RA$_{\text{affected-class}}$} &
\multicolumn{1}{c}{\bf UA$_{\text{affected-class}}$} &
\multicolumn{1}{c}{\bf TA$_{\text{affected-class}}$} &
\multicolumn{1}{c}{\bf Avg. Gap}
\\ \toprule
Retrain & $99.7 \pm 0.3$ & $47.4 \pm 3.1$ & $49.6 \pm 5.2$ & 0 \\
\midrule
Neighbor ($k=500$) & $98.8 \pm 0.0$ \diffbest{-0.9} & $83.3 \pm 4.2$ \diff{+35.9} & $46.7 \pm 1.7$ \diffbest{-2.9} & \valuenotbest{$13.2$} \\
Neighbor ($k=1000$) & $98.1 \pm 1.1$ \diff{-1.5} & $67.1 \pm 7.4$ \diff{+19.6} & $42.0 \pm 4.2$ \diff{-7.6} & \valuenotbest{$9.6$} \\
\midrule
LTD (Ours) & $97.4 \pm 1.4$ \diff{-2.2} & $53.4 \pm 8.4$ \diffbest{+5.9} & $43.0 \pm 5.9$ \diff{-6.6} & \valuebest{$4.9$} \\
\bottomrule
\end{tabular}}
\end{table}

\subsection{Teacher Confidence}

In our algorithm, the teacher is trained on the support set until it reaches a
target training accuracy threshold on the support set. We vary this threshold and study its effect on unlearning quality, localized collateral forgetting mitigation, and runtime (Figures~\ref{fig:threshold-ablations-standard}--~\ref{fig:threshold-ablations-num-epochs}).

\begin{figure}[H]
    \centering
    \includegraphics[width=1.\linewidth]{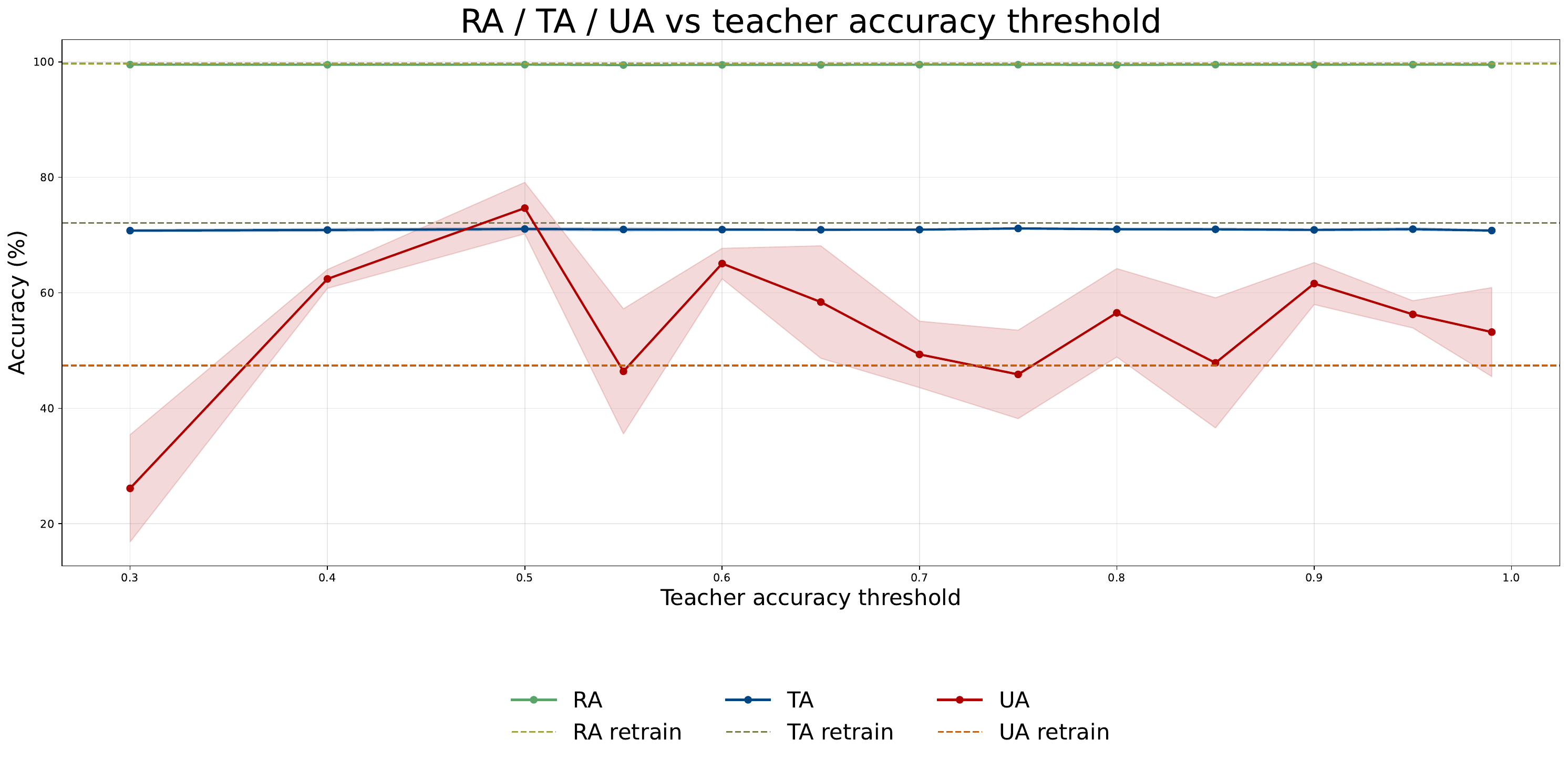}
    \caption{Standard metrics (RA, TA, UA) as a function of the teacher accuracy threshold. Performance remains stable across the considered range.}
    \label{fig:threshold-ablations-standard}
\end{figure}

\begin{figure}[H]
    \centering
    \includegraphics[width=1.\linewidth]{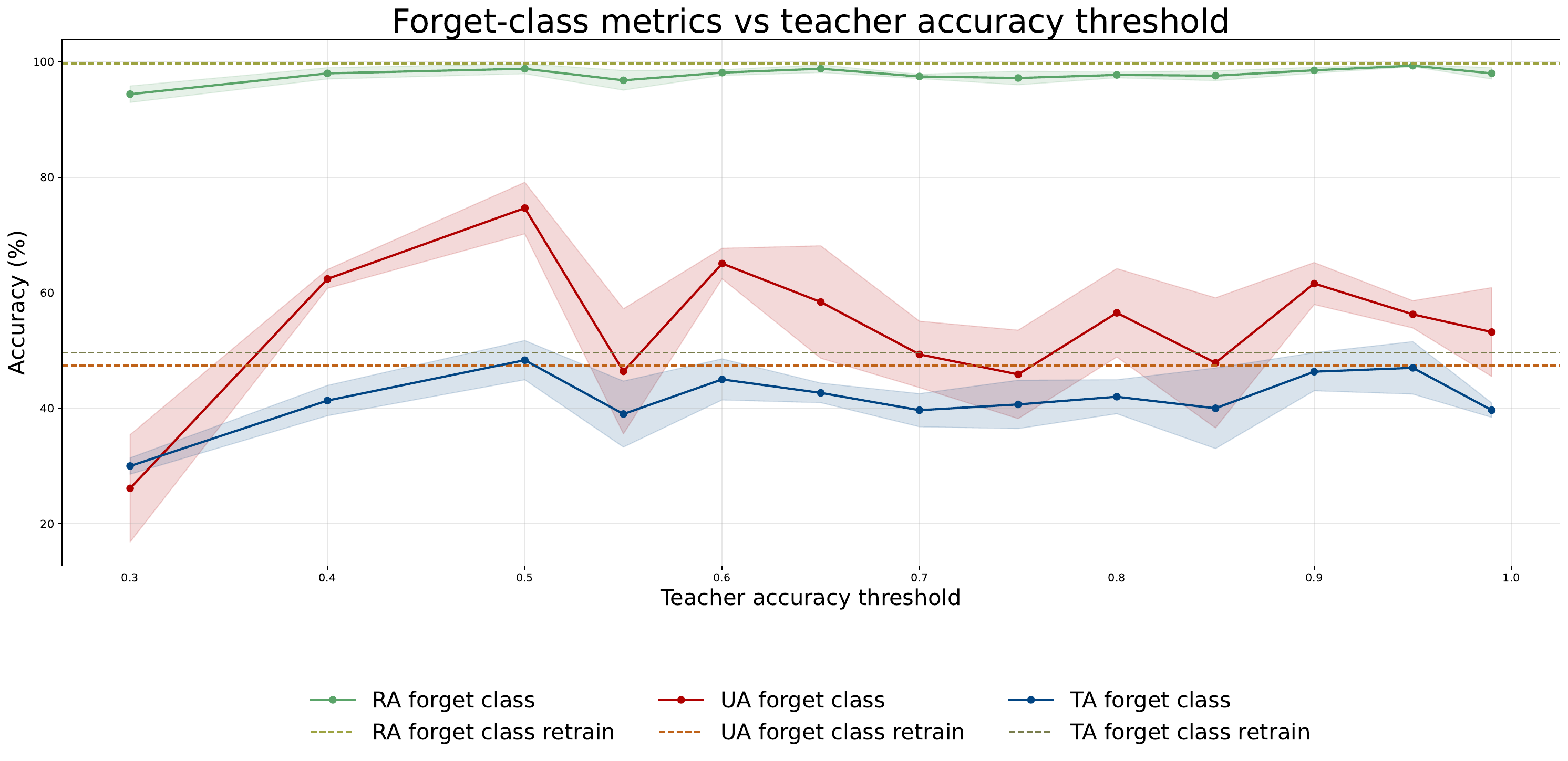}
    \caption{Affected-class metrics as a function of the teacher accuracy threshold. Metrics quickly saturate, indicating that moderate teacher accuracy is sufficient to approximate retraining.}
    \label{fig:threshold-ablations-class}
\end{figure}

\begin{figure}[H]
    \centering
    \includegraphics[width=1.\linewidth]{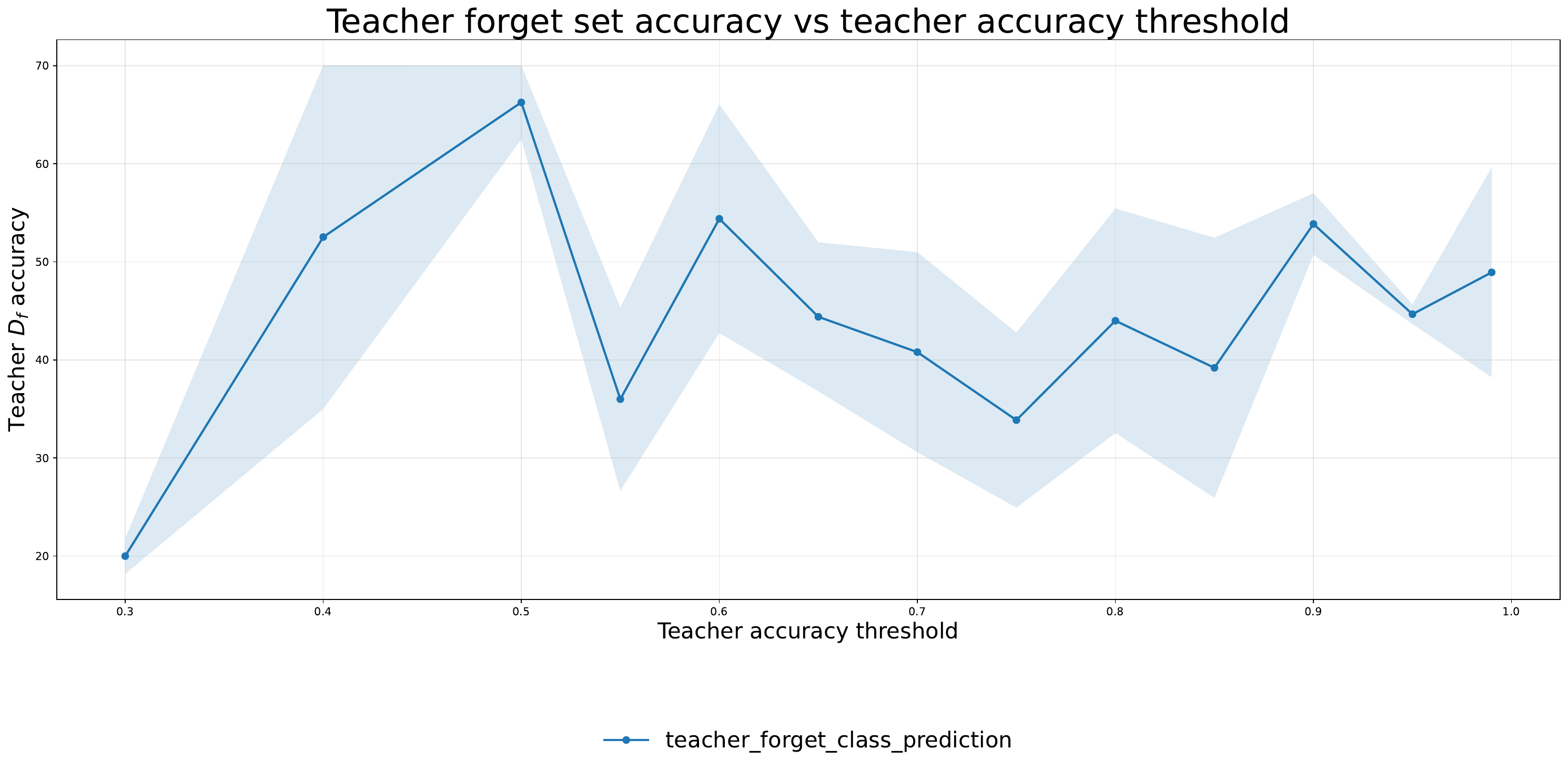}
    \caption{Teacher accuracy on the forget set ($D_f$) as a function of the training threshold. The teacher increasingly matches the retrained model as the threshold grows.}
    \label{fig:threshold-ablations-teacher-prediction}
\end{figure}

\begin{figure}[H]
    \centering
    \includegraphics[width=1.\linewidth]{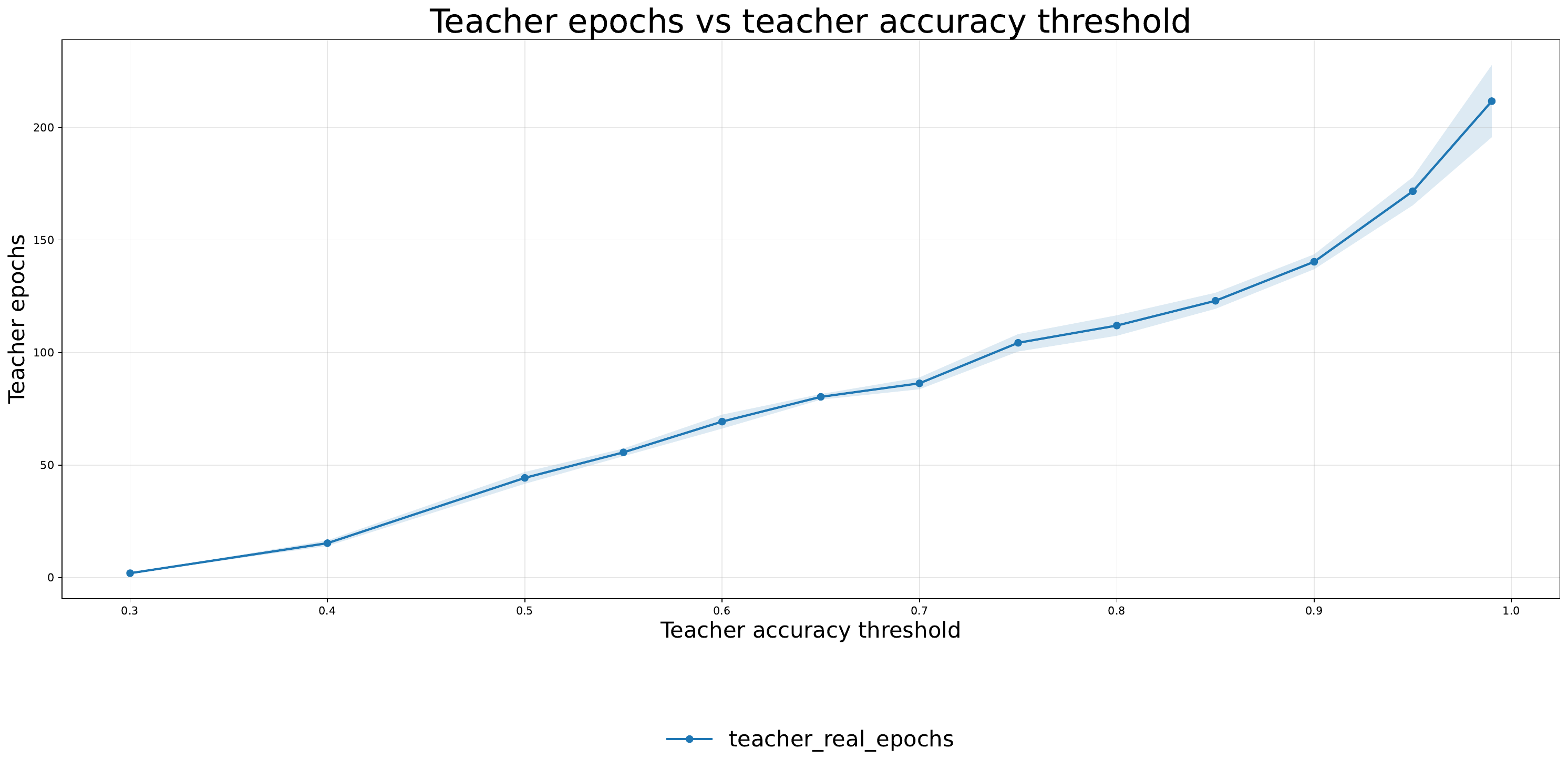}
    \caption{Number of training epochs required to reach the target teacher accuracy threshold on the support set. Higher thresholds lead to substantially increased training cost.}
    \label{fig:threshold-ablations-num-epochs}
\end{figure}

We vary the target teacher accuracy threshold on the support set in the range $[0.3, 0.99]$. Across this range, the final unlearning metrics remain largely stable, suggesting that the method is not highly sensitive to the exact stopping threshold once the teacher reaches moderate accuracy.

Overall, these results suggest that a moderately trained teacher is sufficient: increasing the support-set accuracy threshold within the considered range does not lead to substantial improvements in either aggregate or affected-class metrics.

\subsection{Number of epochs}

We study the effect of the number of unlearning epochs on performance. Figure~\ref{fig:epoch_dynamics} shows the evolution of affected-class metrics over training.

We observe that the model quickly approaches retraining behavior within the first few epochs. In particular, both retain and test accuracy stabilize early, while the forget accuracy rapidly decreases and then remains stable.

Overall, these results indicate that only a small number of unlearning steps is sufficient to capture the relevant local structure, and further training yields diminishing returns.

\begin{figure}[H]
    \centering
    \includegraphics[width=1.0\linewidth]{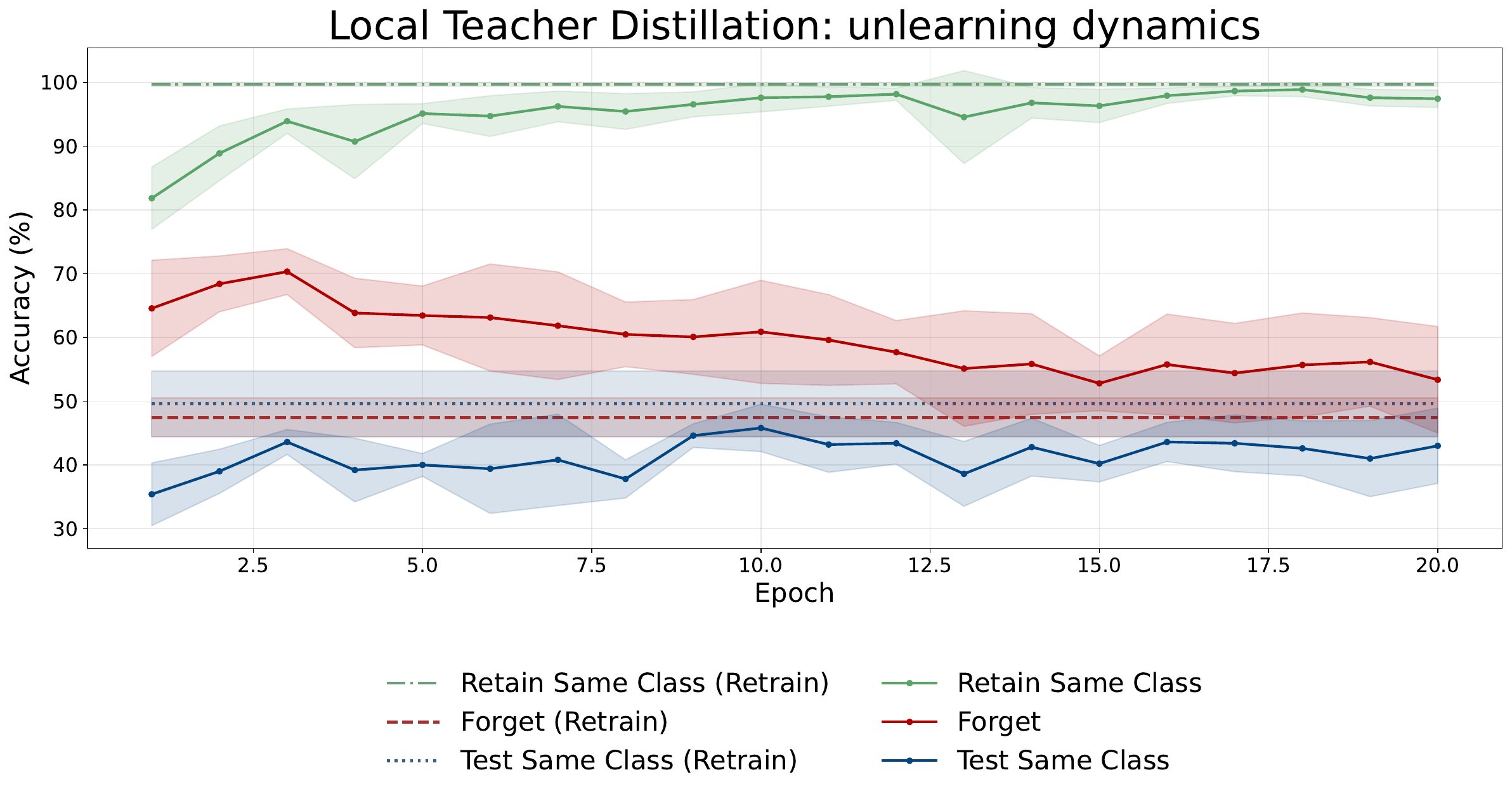}
    \caption{Unlearning dynamics as a function of the number of epochs. Solid lines show the proposed method, dashed lines indicate retraining performance. Metrics quickly stabilize, suggesting fast convergence to retraining behavior.}
    \label{fig:epoch_dynamics}
\end{figure}

\subsection{Support Size}

We study the effect of the support size \(k\) on unlearning quality. Our goal is to identify a locally sufficient support set that approximates retraining without using the retrained model for selection. Because the final unlearning method uses top-3-renormalized teacher targets, all model predictions (teachers' and retrains') in this ablation are processed in the same way.

\subsubsection{Retrain-free diagnostics}
\label{app:support-size-retrain-free}

Using too small support sets leads to degenerate behavior: for instance, if the support set contains only points from the same class, the teacher trivially predicts that class and fails to capture the retrained decision boundary.

To analyze this effect, we vary $k$ from 100 to 2700 (with the same step) and track several signals.

First, we measure the change in the predicted label distribution on $D_f$ as the support size increases. Figure~\ref{fig:kl_previous} shows the KL divergence between consecutive histogram predictions. We observe a transition point: for small $k$, the distribution changes smoothly, while beyond a certain point it starts to shift significantly at each step, indicating that the support set begins to incorporate noisy or less relevant points.

\begin{figure}[H]
    \centering
    \includegraphics[width=1.\linewidth]{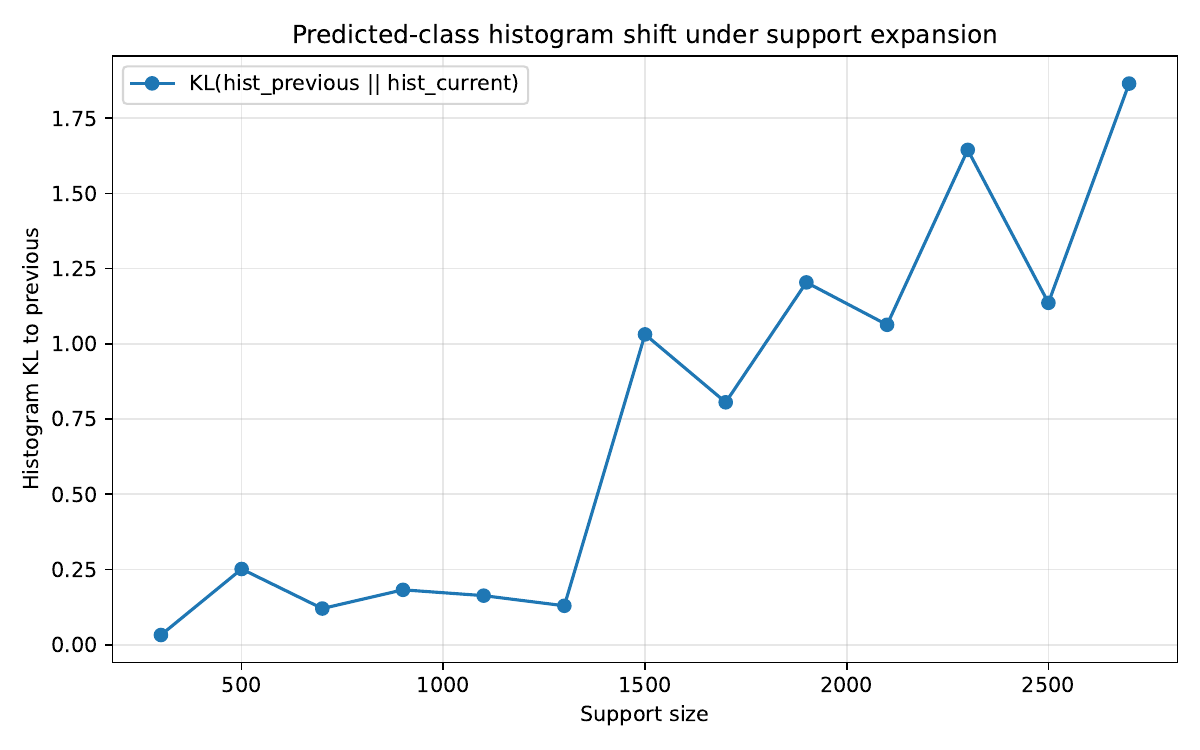}
    \caption{KL divergence between consecutive predicted-class histograms on $D_f$ as the support size increases. A sharp increase indicates a transition from stable local structure to a noisier regime.}
    \label{fig:kl_previous}
\end{figure}

Second, we consider a retrain-free proxy: the accuracy on $D_f$. As shown in Figure~\ref{fig:df_accuracy}, the accuracy initially decreases, then stabilizes, and finally begins to decrease again after the transition point, aligning with the onset of large distributional shifts.

\begin{figure}[H]
    \centering
    \includegraphics[width=1.\linewidth]{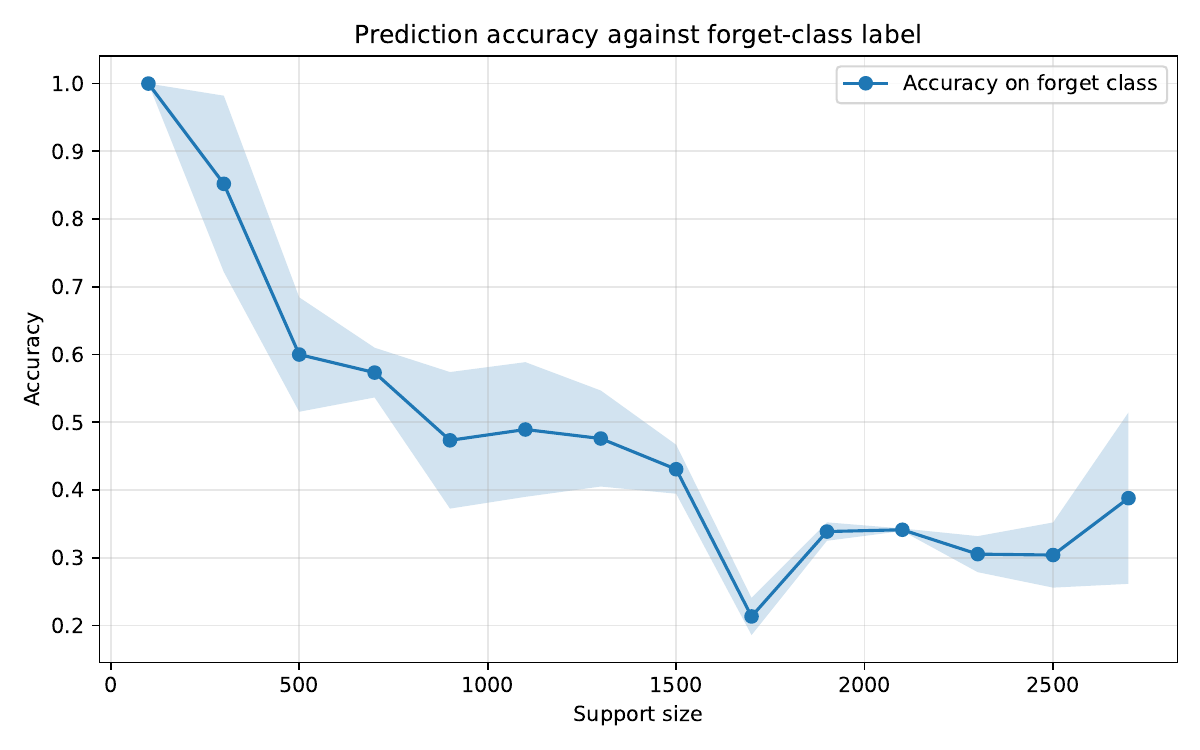}
    \caption{Accuracy on the forget class as a function of support size. The curve exhibits an initial decrease, followed by a plateau and a subsequent degradation, indicating the transition from sufficient to overly large support sets.}
    \label{fig:df_accuracy}
\end{figure}

Finally, we relate this transition to similarity structure. The plateau region corresponds to similarity thresholds around $0.72$--$0.76$, which matches the regime where local neighborhoods remain coherent. As a practical heuristic, we perform a single step of gradient ascent and measure accuracy drop across similarity bins (Figure~\ref{fig:similarity_drop}). Points in high-similarity bins exhibit disproportionate degradation, suggesting that this signal can guide the choice of $k$.

\begin{figure}[H]
    \centering
    \includegraphics[width=1.\linewidth]{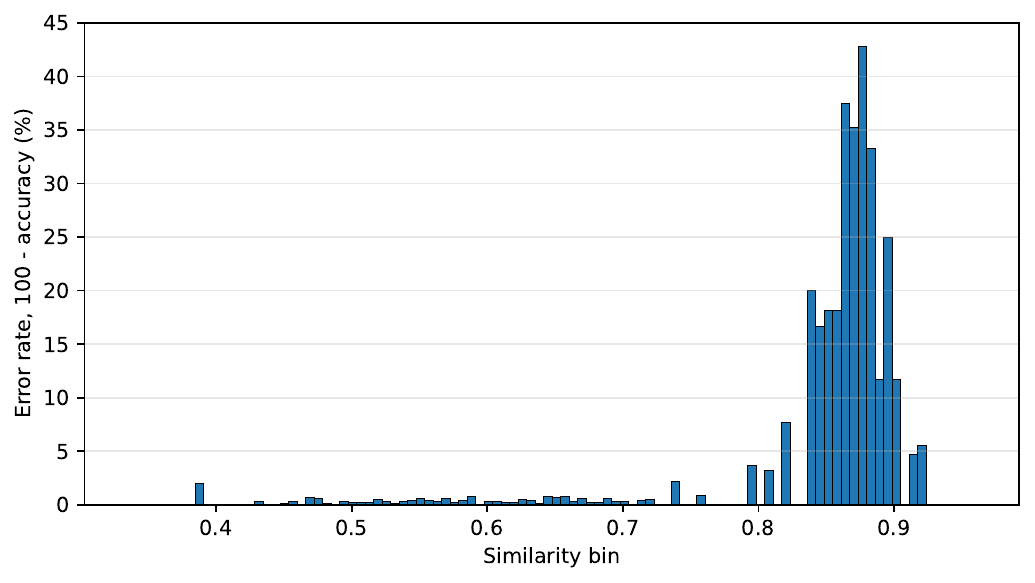}
    \caption{Accuracy drop across similarity bins after a single gradient ascent step. High-similarity points are disproportionately affected, providing a signal for selecting an appropriate support size.}
    \label{fig:similarity_drop}
\end{figure}

Overall, these results indicate that there exists a regime of \emph{locally sufficient} support sizes: beyond this regime, increasing $k$ introduces noise and degrades the ability to capture the local structure relevant for unlearning.

\subsubsection{Post-hoc comparison to retraining}
\label{app:support-size-retrain-comparison}

We additionally compare the behavior of our method to retraining across several signals.

As a post-hoc validation, we compare the predicted-class distribution on \(D_f\)
to that of retraining. Figure~\ref{fig:hist_kl_retrain} shows that the KL
divergence drops sharply once the support set becomes sufficiently large, and
then changes only mildly. This supports the retrain-free diagnostics above:
beyond a moderate local support size, adding more support examples yields limited
additional alignment with retraining.

\begin{figure}[H]
    \centering
    \includegraphics[width=1.\linewidth]{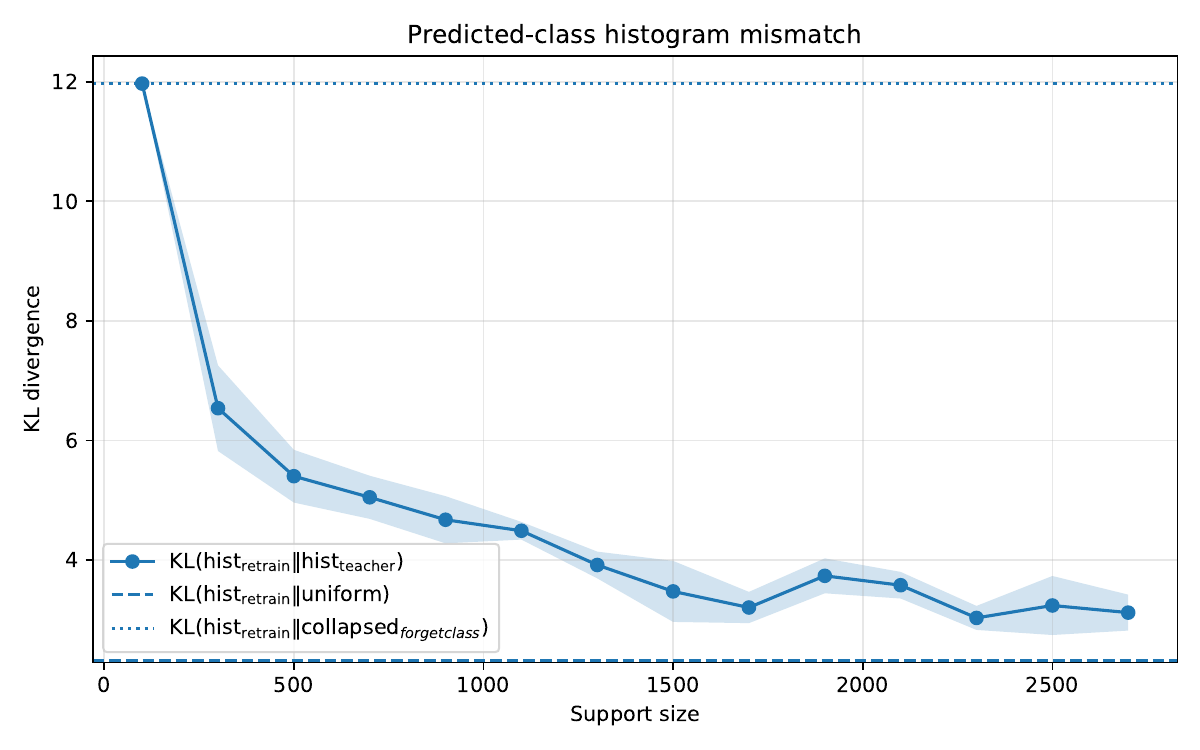}
    \caption{Post-hoc KL divergence between teacher and retrained predicted-class
    distributions on \(D_f\) as a function of support size. The divergence drops
    sharply for small support sizes and then changes only mildly, indicating that a
    moderate local support is sufficient for distribution-level alignment with
    retraining.}
    \label{fig:hist_kl_retrain}
\end{figure}

Together, these results confirm that increasing the support size beyond the locally sufficient regime does not improve approximation to retraining, and may even introduce additional noise.

\section{Additional Experiments}~\label{app:additional-experiments}

\subsection{90\% Class Deletion Experiments}~\label{app:experiments-90pct}

We additionally evaluate our method in a more challenging setting, where \(90\%\)
of a class is removed. This regime leaves only \(10\%\) of the affected-class
training points in \(D_r\), so retraining must preserve high accuracy on these
remaining points while substantially reducing accuracy on \(D_f\). This creates
a large retain--forget prediction gap and makes approximation to retraining more
difficult.

\paragraph{Aggregate and affected-class performance.}
Across aggregate metrics (Table~\ref{tab:90pct_main_results}), our method
achieves the lowest Avg. Gap, indicating the closest overall approximation to
retraining in this more challenging regime. Although LTD does not attain the
smallest UA deviation, it provides the best trade-off across UA, RA, TA, and MIA.

\begin{table}[H]
\caption{Main results for the $90\%$ class deletion setting. Values are mean $\pm$ std over 5 runs; all accuracy metrics are in $\%$. Blue values show deviation from Retrain. Avg. Gap is the mean absolute deviation from Retrain over UA, RA, TA, and MIA.}
\label{tab:90pct_main_results}
\centering
\small
\resizebox{\linewidth}{!}{%
\setlength{\tabcolsep}{4pt}
\begin{tabular}{lcccccc}
\multicolumn{1}{c}{\bf Method} &
\multicolumn{1}{c}{\bf UA} &
\multicolumn{1}{c}{\bf RA} &
\multicolumn{1}{c}{\bf TA} &
\multicolumn{1}{c}{\bf MIA} &
\multicolumn{1}{c}{\bf Avg. Gap} &
\multicolumn{1}{c}{\bf RTE}
\\ \toprule
Retrain & $17.2 \pm 3.0$ & $99.9 \pm 0.1$ & $71.7 \pm 0.3$ & $96.1 \pm 1.4$ & 0 & $1564.7 \pm 10.4$ \\
\midrule
RL      & $46.4 \pm 5.7$ \diff{+29.3} & $94.3 \pm 0.7$ \diff{-5.6} & $67.5 \pm 0.3$ \diff{-4.3} & $100.0 \pm 0.1$ \diff{+3.8} & \valuenotbest{$10.7$} & $162.0 \pm 1.1$ \\
FT      & $60.2 \pm 7.1$ \diff{+43.0} & $94.0 \pm 1.4$ \diff{-5.8} & $67.6 \pm 1.0$ \diff{-4.1} & $50.4 \pm 7.7$ \diff{-45.8} & \valuenotbest{$24.7$} & $155.2 \pm 0.4$ \\
GA      & $24.8 \pm 0.3$ \diff{+7.6}  & $94.8 \pm 0.0$ \diff{-5.1} & $66.2 \pm 0.0$ \diff{-5.6} & $83.9 \pm 0.1$ \diff{-12.3} & \valuenotbest{$7.6$} & $1.9 \pm 0.0$ \\
IU      & $19.2 \pm 11.6$ \diffbest{+2.0} & $97.8 \pm 1.1$ \diff{-2.1} & $69.1 \pm 1.0$ \diff{-2.7} & $91.3 \pm 5.4$ \diff{-4.8} & \valuenotbest{$2.9$} & $27.3 \pm 0.5$ \\
SalUn   & $40.0 \pm 10.5$ \diff{+22.9} & $93.6 \pm 1.0$ \diff{-6.2} & $67.0 \pm 0.9$ \diff{-4.8} & $100.0 \pm 0.0$ \diff{+3.9} & \valuenotbest{$9.4$} & $361.1 \pm 3.3$ \\
AMUN & $32.5 \pm 3.3$ \diff{+15.4} & $94.3 \pm 0.1$ \diff{-5.5} & $72.5 \pm 0.2$ \diffbest{+0.8} & $81.2 \pm 2.4$ \diff{-14.9} & \valuenotbest{$9.2$} & $173.1 \pm 0.8$ \\
\midrule
LTD (Ours) & $11.4 \pm 2.1$ \diff{-5.8} & $99.4 \pm 0.0$ \diffbest{-0.4} & $70.4 \pm 0.1$ \diff{-1.4} & $99.2 \pm 0.3$ \diffbest{+3.0} & \valuebest{$2.7$} & $215.6 \pm 1.5$ \\
\bottomrule
\end{tabular}}
\end{table}

As in the $50\%$ setting, aggregate metrics alone can be misleading. Table~\ref{tab:forget_class_subset_90pct} shows that competing methods exhibit substantial degradation on affected-class points. In particular, methods such as IU and GA significantly deviate from retraining, despite relatively strong performance on some global metrics.

\begin{table}[H]
\caption{Results on the affected-class subset for the $90\%$ deletion setting. Values are mean $\pm$ std over 5 runs, in $\%$. Blue values show deviation from Retrain. Avg. Gap is computed over RA$_{\text{affected-class}}$, UA$_{\text{affected-class}}$, and TA$_{\text{affected-class}}$.}
\label{tab:forget_class_subset_90pct}
\centering
\small
\scalebox{0.9}{%
\setlength{\tabcolsep}{6pt}
\begin{tabular}{lcccc}
\multicolumn{1}{c}{\bf Method} &
\multicolumn{1}{c}{\bf RA$_{\text{affected-class}}$} &
\multicolumn{1}{c}{\bf UA$_{\text{affected-class}}$} &
\multicolumn{1}{c}{\bf TA$_{\text{affected-class}}$} &
\multicolumn{1}{c}{\bf Avg. Gap}
\\ \toprule
Retrain & $98.8 \pm 1.6$ & $17.2 \pm 3.0$ & $17.8 \pm 2.0$ & 0 \\
\midrule
RL      & $68.0 \pm 8.5$ \diff{-30.8} & $46.4 \pm 5.7$ \diff{+29.3} & $26.0 \pm 4.5$ \diff{+8.2} & \valuenotbest{$22.8$} \\
FT      & $92.0 \pm 2.8$ \diffbest{-6.8} & $60.2 \pm 7.1$ \diff{+43.0} & $39.6 \pm 4.2$ \diff{+21.8} & \valuenotbest{$23.9$} \\
GA      & $32.4 \pm 1.5$ \diff{-66.4} & $24.8 \pm 0.3$ \diff{+7.6} & $16.0 \pm 0.0$ \diffbest{-1.8} & \valuenotbest{$25.3$} \\
IU      & $17.6 \pm 14.0$ \diff{-81.2} & $19.2 \pm 11.6$ \diffbest{+2.0} & $10.8 \pm 5.3$ \diff{-7.0} & \valuenotbest{$30.1$} \\
SalUn   & $58.8 \pm 14.0$ \diff{-40.0} & $40.0 \pm 10.5$ \diff{+22.9} & $24.8 \pm 6.7$ \diff{+7.0} & \valuenotbest{$23.3$} \\
AMUN & $60.0 \pm 7.0$ \diff{-38.8} & $32.5 \pm 3.3$ \diff{+15.4} & $25.0 \pm 1.1$ \diff{+7.2} & \valuenotbest{$20.5$} \\
\midrule
LTD (Ours) & $75.6 \pm 10.1$ \diff{-23.2} & $11.4 \pm 2.1$ \diff{-5.8} & $9.4 \pm 3.9$ \diff{-8.4} & \valuebest{$12.5$} \\
\bottomrule
\end{tabular}}
\end{table}

\begin{figure}[H]
    \centering
    \includegraphics[width=1.\linewidth]{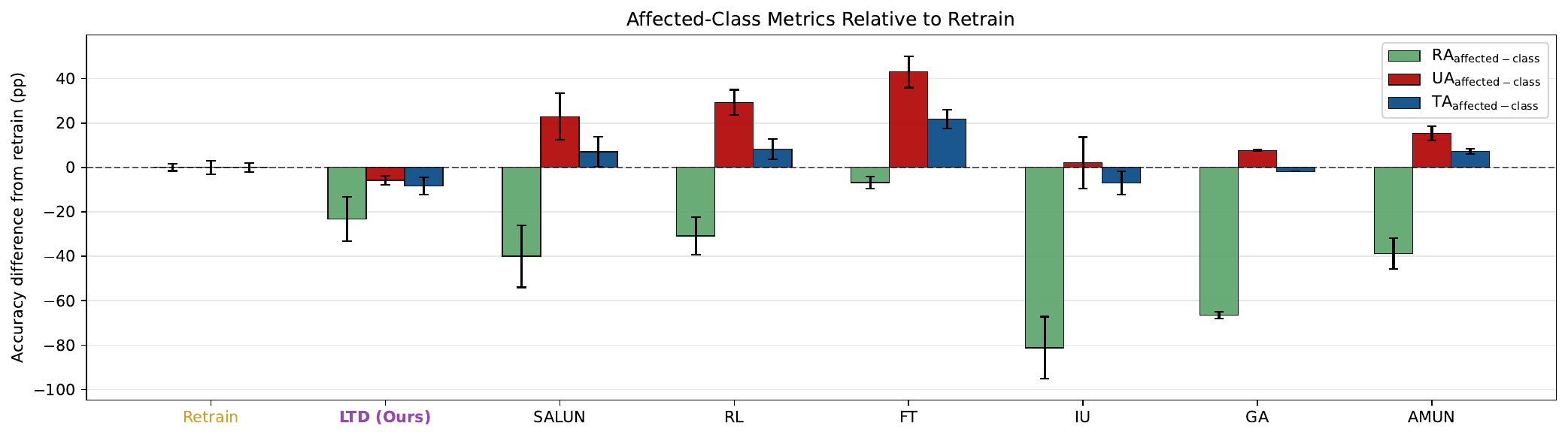}
    \caption{\textbf{Affected-class performance difference from Retrain, 90\% of class deletion.}
    Bars show differences from Retrain for RA$_{\text{affected-class}}$,
    UA$_{\text{affected-class}}$, and TA$_{\text{affected-class}}$; values closer to
    zero indicate better agreement. The figure reveals localized collateral
    forgetting not captured by aggregate metrics.}
    \label{fig:placeholder}
\end{figure}

In contrast, our method remains consistently closer to retraining across both aggregate and affected-class metrics. Notably, it achieves the lowest UA on the affected class, indicating more effective removal of the target data, while maintaining reasonable performance on the remaining points.

Overall, these results confirm that the advantages of our approach persist in more extreme deletion regimes, where accurate control of local behavior becomes even more critical.

\begin{figure}[H]
    \centering
    \includegraphics[width=1.\linewidth]{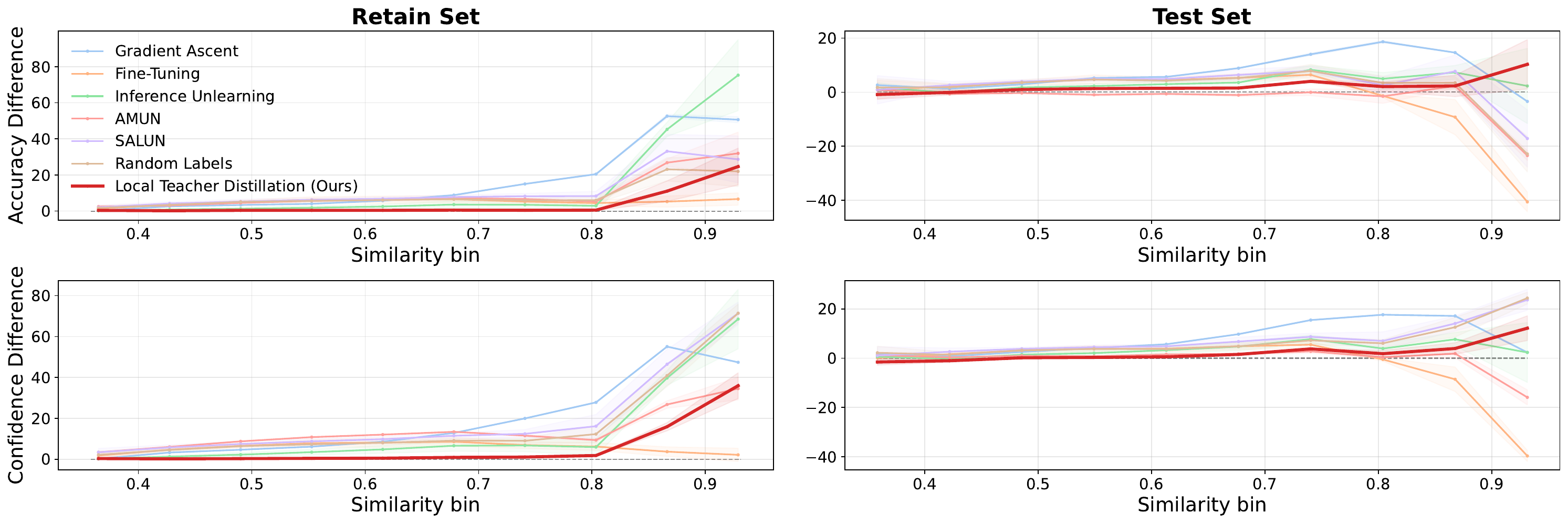}
    \caption{\textbf{Localized prediction deviations under $90\%$ class deletion.}
    We plot bin-level differences from retraining:
    $\mathrm{Acc}_{\mathrm{retrain}}-\mathrm{Acc}_{\mathrm{unlearn}}$ for accuracy
    (top row) and
    $f_{\theta_{\mathrm{retrain}}}(x)_y-f_{\theta_{\mathrm{unlearn}}}(x)_y$ for
    correct-class confidence (bottom row), as functions of similarity to $D_f$ on
    both retain and test sets. Larger deviations at high similarity indicate that
    unlearning effects concentrate near the removed data.}
    \label{fig:acc-drop-90}
\end{figure}

\paragraph{Similarity-based analysis.} We further analyze locality effects in the $90\%$ deletion setting by measuring accuracy and confidence differences with respect to retraining as a function of similarity to the forget set (Figure~\ref{fig:acc-drop-90}). We observe that deviations from retraining become significantly more pronounced for high-similarity points compared to the $50\%$ setting. In particular, competing methods exhibit sharp degradation in both accuracy and confidence on points close to the forget set, indicating amplified localized collateral forgetting. In contrast, our method remains comparatively stable across similarity bins, maintaining low deviation from retraining even in this more challenging regime.

\subsection{Additional affected classes under 50\% deletion}
\label{app:additional-classes-50pct}

\paragraph{Setup.}
We report additional results for the $50\%$ partial-deletion setting on five randomly selected affected classes:
\texttt{oak\_tree} (52), \texttt{apple} (0), \texttt{hamster} (36), \texttt{road} (68), and \texttt{shark} (73).
These experiments complement the main \texttt{couch} results and test whether localized collateral forgetting
and the retraining alignment of LTD persist across different semantic classes. For each class, we report the
same two views as in the main experiments: aggregate metrics and affected-class metrics. The aggregate table
measures global agreement with retraining across UA, RA, TA, and MIA, while the affected-class table isolates
the local neighborhood where collateral forgetting is most likely to appear.

\subsubsection{Class 52: \texttt{oak\_tree}}

For \texttt{oak\_tree}, LTD achieves the lowest aggregate Avg. Gap and the lowest affected-class Avg. Gap. This indicates that the method improves alignment with retraining both globally and on the localized affected-class subset. The improvement is not due to matching a single metric: LTD stays close to retraining on RA and TA, while also keeping the affected-class discrepancy moderate. Several baselines match retraining well on individual metrics, but their deviations are less balanced across the aggregate and affected-class views.

\begin{figure}[H]
    \centering
    \includegraphics[width=1.\linewidth]{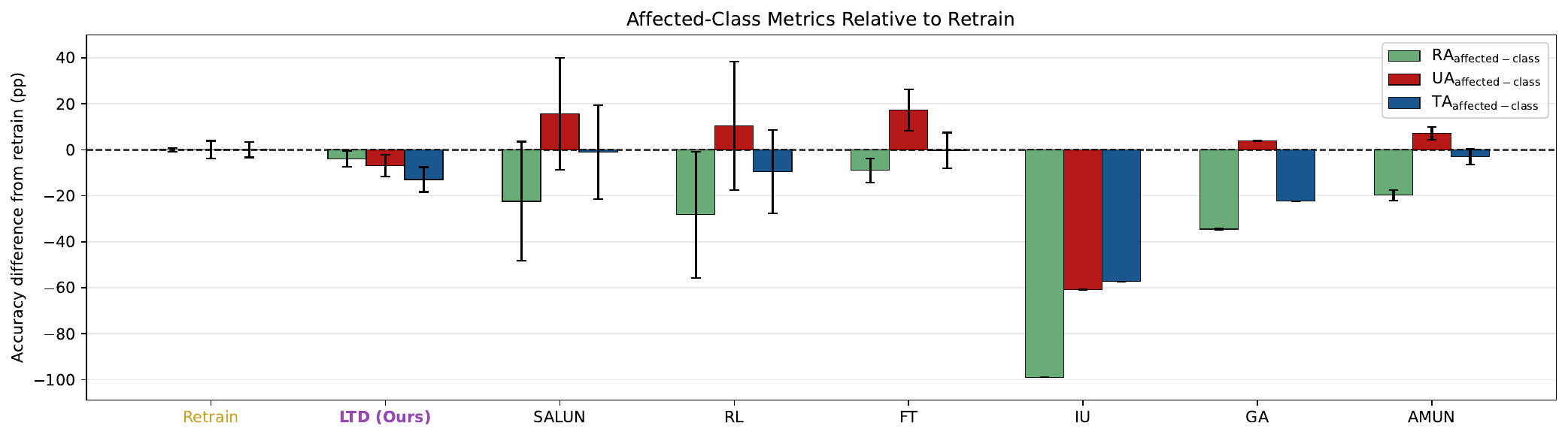}
    \caption{\textbf{Affected-class performance difference from Retrain, \texttt{oak\_tree}.}
    Bars show differences from Retrain for RA$_{\text{affected-class}}$,
    UA$_{\text{affected-class}}$, and TA$_{\text{affected-class}}$; values closer to
    zero indicate better agreement. The figure reveals localized collateral
    forgetting not captured by aggregate metrics.}
    \label{fig:forget-class-52}
\end{figure}

\begin{figure}[H]
    \centering
    \includegraphics[width=1.\linewidth]{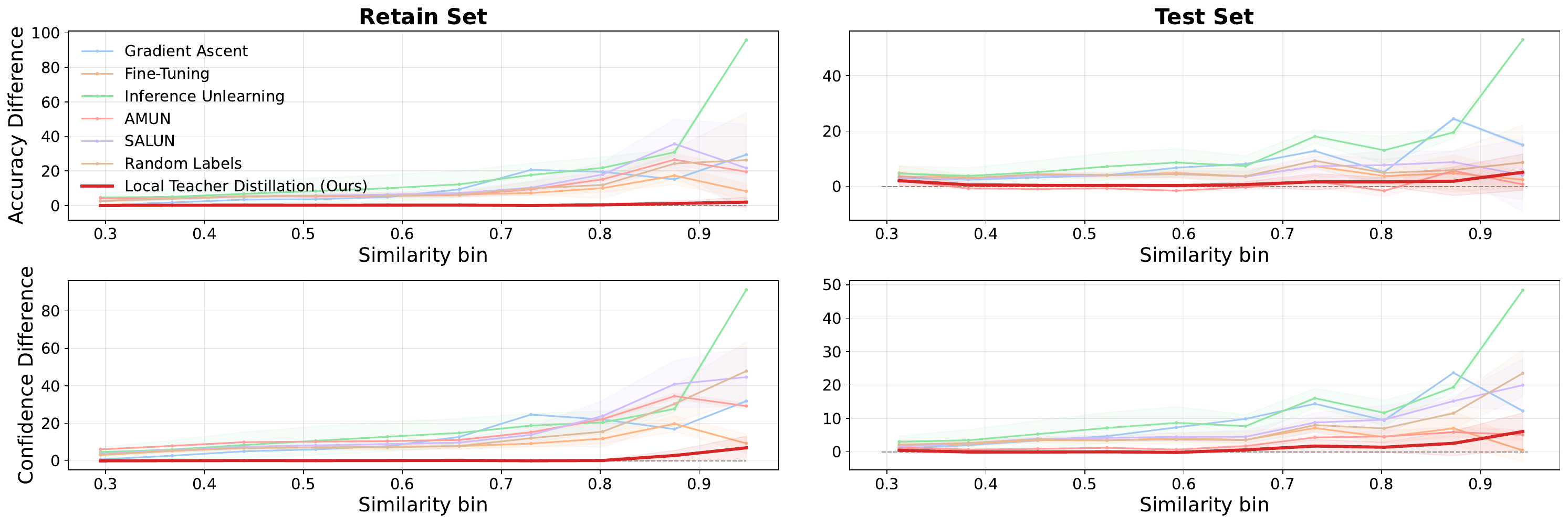}
    \caption{\textbf{Localized prediction deviations.}
    We plot bin-level differences from retraining:
    $\mathrm{Acc}_{\mathrm{retrain}}-\mathrm{Acc}_{\mathrm{unlearn}}$ for accuracy
    (top row) and
    $f_{\theta_{\mathrm{retrain}}}(x)_y-f_{\theta_{\mathrm{unlearn}}}(x)_y$ for
    correct-class confidence (bottom row), as functions of similarity to $D_f$ on
    both retain and test sets. Larger deviations at high similarity indicate that
    unlearning effects concentrate near the removed data.}
    \label{fig:acc-drop-52}
\end{figure}

\begin{table}[H]
\caption{Main results for the $50\%$ deletion setting for class 52 (\texttt{oak\_tree}). Values are mean $\pm$ std over 5 runs; all accuracy metrics are in $\%$. Blue values show deviation from Retrain. Avg. Gap is the mean absolute deviation from Retrain over UA, RA, TA, and MIA.}
\label{tab:apple_50pct_main_results}
\centering
\small
\resizebox{\linewidth}{!}{%
\setlength{\tabcolsep}{4pt}
\begin{tabular}{lcccccc}
\multicolumn{1}{c}{\bf Method} &
\multicolumn{1}{c}{\bf UA} &
\multicolumn{1}{c}{\bf RA} &
\multicolumn{1}{c}{\bf TA} &
\multicolumn{1}{c}{\bf MIA} &
\multicolumn{1}{c}{\bf Avg. Gap} &
\multicolumn{1}{c}{\bf RTE}
\\ \toprule
Retrain & $61.1 \pm 3.9$ & $99.9 \pm 0.0$ & $71.9 \pm 0.3$ & $64.1 \pm 1.5$ & 0 & $1587.9 \pm 12.5$ \\
\midrule
RL & $71.5 \pm 27.9$ \diff{+10.5} & $94.2 \pm 0.8$ \diff{-5.7} & $67.7 \pm 0.4$ \diff{-4.2} & $89.4 \pm 10.8$ \diff{+25.2} & \valuenotbest{$11.4$} & $157.3 \pm 0.7$ \\
FT & $78.5 \pm 9.0$ \diff{+17.4} & $94.4 \pm 1.2$ \diff{-5.5} & $67.8 \pm 1.0$ \diff{-4.1} & $32.6 \pm 11.5$ \diff{-31.6} & \valuenotbest{$14.6$} & $156.5 \pm 1.8$ \\
GA & $65.0 \pm 0.2$ \diffbest{+4.0} & $95.5 \pm 0.0$ \diff{-4.4} & $67.3 \pm 0.0$ \diff{-4.6} & $49.4 \pm 0.2$ \diffbest{-14.8} & \valuenotbest{$6.9$} & $1.7 \pm 0.0$ \\
IU & $0.2 \pm 0.2$ \diff{-60.9} & $91.2 \pm 7.0$ \diff{-8.7} & $64.9 \pm 4.3$ \diff{-7.0} & $100.0 \pm 0.0$ \diff{+35.9} & \valuenotbest{$28.1$} & $27.7 \pm 0.4$ \\
SalUn & $76.8 \pm 24.3$ \diff{+15.7} & $94.0 \pm 1.1$ \diff{-5.8} & $67.9 \pm 0.6$ \diff{-4.0} & $82.1 \pm 16.1$ \diff{+17.9} & \valuenotbest{$10.9$} & $350.2 \pm 5.4$ \\
AMUN & $68.2 \pm 2.7$ \diff{+7.2} & $94.4 \pm 0.1$ \diff{-5.5} & $72.7 \pm 0.3$ \diffbest{+0.8} & $49.3 \pm 2.9$ \diff{-14.9} & \valuenotbest{$7.1$} & $151.2 \pm 0.7$ \\
\midrule
LTD (Ours) & $54.2 \pm 4.7$ \diff{-6.9} & $99.6 \pm 0.1$ \diffbest{-0.3} & $71.1 \pm 0.3$ \diffbest{-0.8} & $80.9 \pm 3.1$ \diff{+16.7} & \valuebest{$6.2$} & $215.4 \pm 2.9$ \\
\bottomrule
\end{tabular}}
\end{table}

\begin{table}[H]
\caption{Results on the affected-class subset for the $50\%$ deletion setting for class 52 (\texttt{oak\_tree}). Values are mean $\pm$ std over 5 runs, in $\%$. Blue values show deviation from Retrain. Avg. Gap is computed over RA$_{\text{affected-class}}$, UA$_{\text{affected-class}}$, and TA$_{\text{affected-class}}$.}
\label{tab:apple_forget_class_subset_50pct}
\centering
\small
\scalebox{0.9}{%
\setlength{\tabcolsep}{6pt}
\begin{tabular}{lcccc}
\multicolumn{1}{c}{\bf Method} &
\multicolumn{1}{c}{\bf RA$_{\text{affected-class}}$} &
\multicolumn{1}{c}{\bf UA$_{\text{affected-class}}$} &
\multicolumn{1}{c}{\bf TA$_{\text{affected-class}}$} &
\multicolumn{1}{c}{\bf Avg. Gap}
\\ \toprule
Retrain & $98.9 \pm 0.8$ & $61.1 \pm 3.9$ & $57.3 \pm 3.3$ & 0 \\
\midrule
RL & $70.7 \pm 27.4$ \diff{-28.2} & $71.5 \pm 27.9$ \diff{+10.5} & $47.8 \pm 18.2$ \diff{-9.5} & \valuenotbest{$16.1$} \\
FT & $90.0 \pm 5.3$ \diff{-8.9} & $78.5 \pm 9.0$ \diff{+17.4} & $57.0 \pm 7.8$ \diffbest{-0.3} & \valuenotbest{$8.9$} \\
GA & $64.4 \pm 0.4$ \diff{-34.5} & $65.0 \pm 0.2$ \diffbest{+4.0} & $35.0 \pm 0.0$ \diff{-22.3} & \valuenotbest{$20.3$} \\
IU & $0.0 \pm 0.0$ \diff{-98.9} & $0.2 \pm 0.2$ \diff{-60.9} & $0.0 \pm 0.0$ \diff{-57.3} & \valuenotbest{$72.4$} \\
SalUn & $76.6 \pm 25.9$ \diff{-22.4} & $76.8 \pm 24.3$ \diff{+15.7} & $56.4 \pm 20.4$ \diff{-0.9} & \valuenotbest{$13.0$} \\
AMUN & $79.2 \pm 2.2$ \diff{-19.7} & $68.2 \pm 2.7$ \diff{+7.2} & $54.4 \pm 3.4$ \diff{-2.9} & \valuenotbest{$9.9$} \\
\midrule
LTD (Ours) & $95.0 \pm 3.4$ \diffbest{-3.9} & $54.2 \pm 4.7$ \diff{-6.9} & $44.4 \pm 5.4$ \diff{-12.9} & \valuebest{$7.9$} \\
\bottomrule
\end{tabular}}
\end{table}

\subsubsection{Class 0: \texttt{apple}}

For \texttt{apple}, LTD obtains the best aggregate Avg. Gap, showing the closest overall agreement with
retraining across UA, RA, TA, and MIA. On the affected-class subset, however, AMUN achieves the best
Avg. Gap, with LTD ranking second. This is a useful case where another method matches the local
affected-class behavior slightly better, while LTD remains more balanced globally. The comparison highlights
that the aggregate and affected-class views are complementary: a method can be strong locally but less aligned
overall, and vice versa.

\begin{figure}[H]
    \centering
    \includegraphics[width=1.\linewidth]{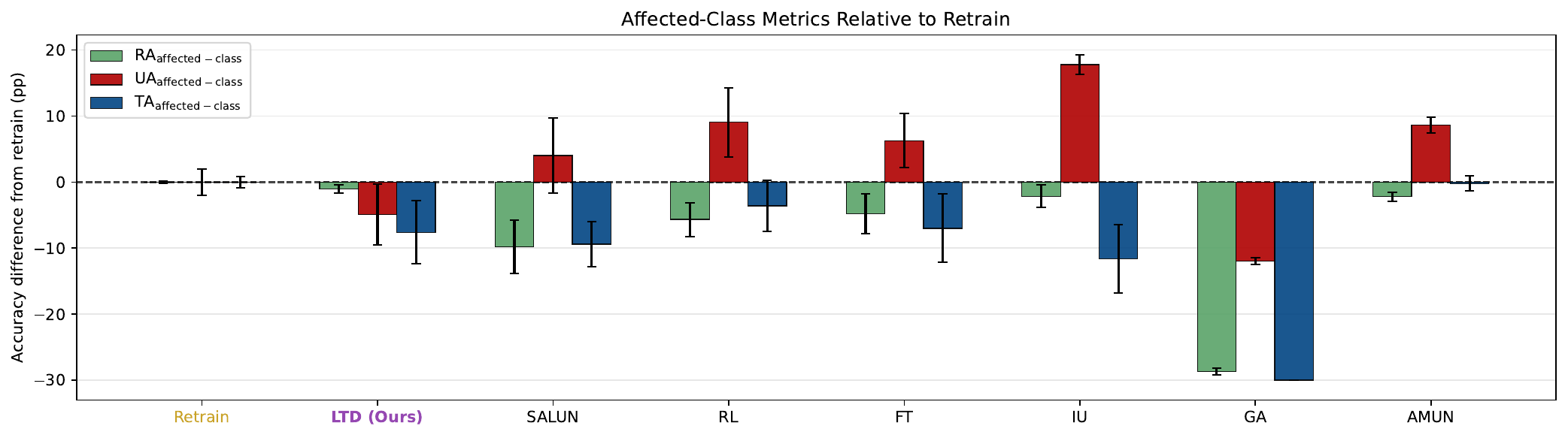}
    \caption{\textbf{Affected-class performance difference from Retrain.}
    Bars show differences from Retrain for RA$_{\text{affected-class}}$,
    UA$_{\text{affected-class}}$, and TA$_{\text{affected-class}}$; values closer to
    zero indicate better agreement. The figure reveals localized collateral
    forgetting not captured by aggregate metrics.}
    \label{fig:forget-class-0}
\end{figure}

\begin{figure}[H]
    \centering
    \includegraphics[width=1.\linewidth]{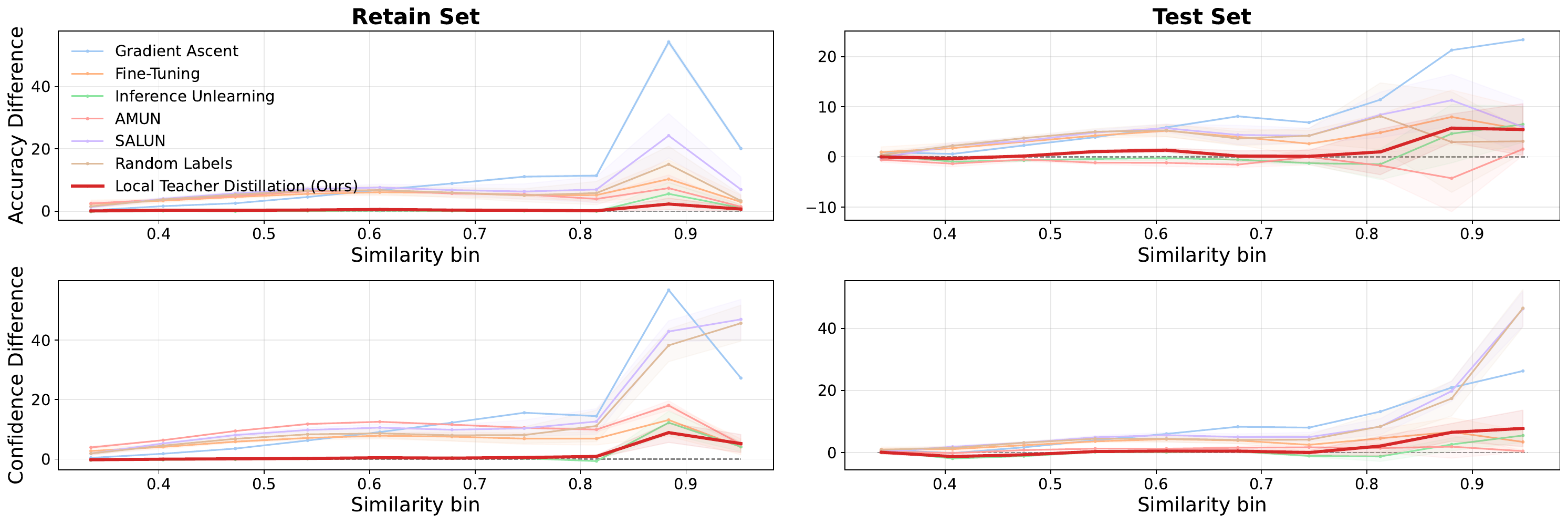}
    \caption{\textbf{Localized prediction deviations.}
    We plot bin-level differences from retraining:
    $\mathrm{Acc}_{\mathrm{retrain}}-\mathrm{Acc}_{\mathrm{unlearn}}$ for accuracy
    (top row) and
    $f_{\theta_{\mathrm{retrain}}}(x)_y-f_{\theta_{\mathrm{unlearn}}}(x)_y$ for
    correct-class confidence (bottom row), as functions of similarity to $D_f$ on
    both retain and test sets. Larger deviations at high similarity indicate that
    unlearning effects concentrate near the removed data.}
    \label{fig:acc-drop-0}
\end{figure}

\begin{table}[H]
\caption{Main results for the $50\%$ deletion setting for class 0 (\texttt{apple}). Values are mean $\pm$ std over 5 runs; all accuracy metrics are in $\%$. Blue values show deviation from Retrain. Avg. Gap is the mean absolute deviation from Retrain over UA, RA, TA, and MIA.}
\label{tab:apple_50pct_main_results}
\centering
\small
\resizebox{\linewidth}{!}{%
\setlength{\tabcolsep}{4pt}
\begin{tabular}{lcccccc}
\multicolumn{1}{c}{\bf Method} &
\multicolumn{1}{c}{\bf UA} &
\multicolumn{1}{c}{\bf RA} &
\multicolumn{1}{c}{\bf TA} &
\multicolumn{1}{c}{\bf MIA} &
\multicolumn{1}{c}{\bf Avg. Gap} &
\multicolumn{1}{c}{\bf RTE}
\\ \toprule
Retrain & $79.3 \pm 2.0$ & $99.8 \pm 0.0$ & $71.7 \pm 0.1$ & $35.7 \pm 1.8$ & 0 & $1540.6 \pm 10.1$ \\
\midrule
RL & $88.4 \pm 5.2$ \diff{+9.1} & $94.1 \pm 0.9$ \diff{-5.7} & $67.5 \pm 0.6$ \diff{-4.3} & $94.4 \pm 3.2$ \diff{+58.7} & \valuenotbest{$19.4$} & $158.0 \pm 0.9$ \\
FT & $85.6 \pm 4.1$ \diff{+6.3} & $94.8 \pm 0.7$ \diff{-5.0} & $68.0 \pm 0.6$ \diff{-3.7} & $21.2 \pm 5.3$ \diff{-14.5} & \valuenotbest{$7.4$} & $155.7 \pm 1.0$ \\
GA & $67.4 \pm 0.5$ \diff{-12.0} & $95.5 \pm 0.0$ \diff{-4.3} & $67.8 \pm 0.0$ \diff{-3.9} & $52.2 \pm 0.2$ \diff{+16.5} & \valuenotbest{$9.2$} & $1.7 \pm 0.0$ \\
IU & $97.1 \pm 1.5$ \diff{+17.8} & $99.8 \pm 0.1$ \diffbest{-0.1} & $72.2 \pm 0.3$ \diffbest{+0.5} & $25.2 \pm 5.5$ \diffbest{-10.5} & \valuenotbest{$7.2$} & $27.5 \pm 0.3$ \\
SalUn & $83.4 \pm 5.7$ \diffbest{+4.0} & $93.6 \pm 0.6$ \diff{-6.2} & $67.5 \pm 0.3$ \diff{-4.2} & $90.5 \pm 5.0$ \diff{+54.7} & \valuenotbest{$17.3$} & $351.1 \pm 2.4$ \\
AMUN & $88.0 \pm 1.2$ \diff{+8.7} & $94.3 \pm 0.1$ \diff{-5.5} & $72.7 \pm 0.3$ \diff{+1.0} & $17.8 \pm 1.6$ \diff{-18.0} & \valuenotbest{$8.3$} & $171.7 \pm 1.0$ \\
\midrule
LTD (Ours) & $74.4 \pm 4.6$ \diff{-4.9} & $99.5 \pm 0.0$ \diff{-0.4} & $71.0 \pm 0.1$ \diff{-0.7} & $49.8 \pm 10.6$ \diff{+14.1} & \valuebest{$5.0$} & $213.7 \pm 1.5$ \\
\bottomrule
\end{tabular}}
\end{table}

\begin{table}[H]
\caption{Results on the affected-class subset for the $50\%$ deletion setting for class 0 (\texttt{apple}). Values are mean $\pm$ std over 5 runs, in $\%$. Blue values show deviation from Retrain. Avg. Gap is computed over RA$_{\text{affected-class}}$, UA$_{\text{affected-class}}$, and TA$_{\text{affected-class}}$.}
\label{tab:apple_forget_class_subset_50pct}
\centering
\small
\scalebox{0.9}{%
\setlength{\tabcolsep}{6pt}
\begin{tabular}{lcccc}
\multicolumn{1}{c}{\bf Method} &
\multicolumn{1}{c}{\bf RA$_{\text{affected-class}}$} &
\multicolumn{1}{c}{\bf UA$_{\text{affected-class}}$} &
\multicolumn{1}{c}{\bf TA$_{\text{affected-class}}$} &
\multicolumn{1}{c}{\bf Avg. Gap}
\\ \toprule
Retrain & $99.7 \pm 0.2$ & $79.3 \pm 2.0$ & $82.0 \pm 0.8$ & 0 \\
\midrule
RL & $94.1 \pm 2.6$ \diff{-5.7} & $88.4 \pm 5.2$ \diff{+9.1} & $78.4 \pm 3.8$ \diff{-3.6} & \valuenotbest{$6.1$} \\
FT & $95.0 \pm 3.0$ \diff{-4.8} & $85.6 \pm 4.1$ \diff{+6.3} & $75.0 \pm 5.2$ \diff{-7.0} & \valuenotbest{$6.0$} \\
GA & $71.0 \pm 0.5$ \diff{-28.7} & $67.4 \pm 0.5$ \diff{-12.0} & $52.0 \pm 0.0$ \diff{-30.0} & \valuenotbest{$23.6$} \\
IU & $97.6 \pm 1.7$ \diff{-2.1} & $97.1 \pm 1.5$ \diff{+17.8} & $70.4 \pm 5.2$ \diff{-11.6} & \valuenotbest{$10.5$} \\
SalUn & $89.9 \pm 4.0$ \diff{-9.8} & $83.4 \pm 5.7$ \diffbest{+4.0} & $72.6 \pm 3.4$ \diff{-9.4} & \valuenotbest{$7.7$} \\
AMUN & $97.5 \pm 0.7$ \diff{-2.2} & $88.0 \pm 1.2$ \diff{+8.7} & $81.8 \pm 1.2$ \diffbest{-0.2} & \valuebest{$3.7$} \\
\midrule
LTD (Ours) & $98.7 \pm 0.6$ \diffbest{-1.0} & $74.4 \pm 4.6$ \diff{-4.9} & $74.4 \pm 4.8$ \diff{-7.6} & \valuenotbest{$4.5$} \\
\bottomrule
\end{tabular}}
\end{table}

\subsubsection{Class 36: \texttt{hamster}}

For \texttt{hamster}, LTD achieves the lowest aggregate Avg. Gap and the lowest affected-class Avg. Gap. The method stays close to retraining on RA, TA, and MIA, while also reducing the local affected-class discrepancy relative to the baselines. This class is therefore consistent with the broader trend across additional affected classes: LTD provides balanced agreement with retraining both globally and in the neighborhood most directly coupled to the forget set.

\begin{figure}[H]
    \centering
    \includegraphics[width=1.\linewidth]{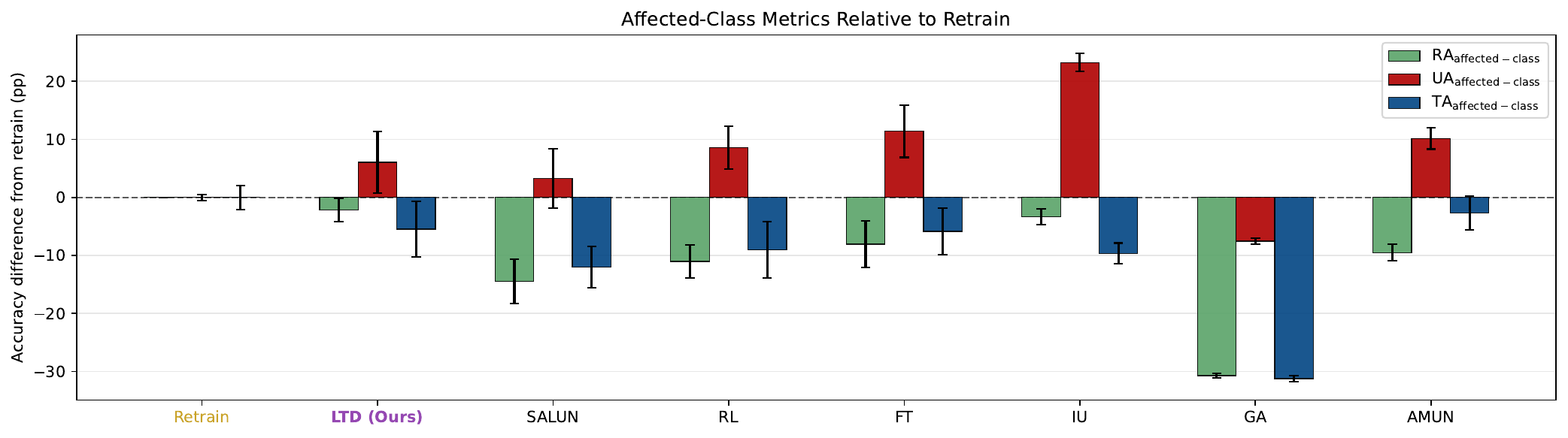}
    \caption{\textbf{Affected-class performance difference from Retrain, \texttt{hamster} (36).}
    Bars show differences from Retrain for RA$_{\text{affected-class}}$,
    UA$_{\text{affected-class}}$, and TA$_{\text{affected-class}}$; values closer to
    zero indicate better agreement. The figure reveals localized collateral
    forgetting not captured by aggregate metrics.}
    \label{fig:forget-class-36}
\end{figure}

\begin{figure}[H]
    \centering
    \includegraphics[width=1.\linewidth]{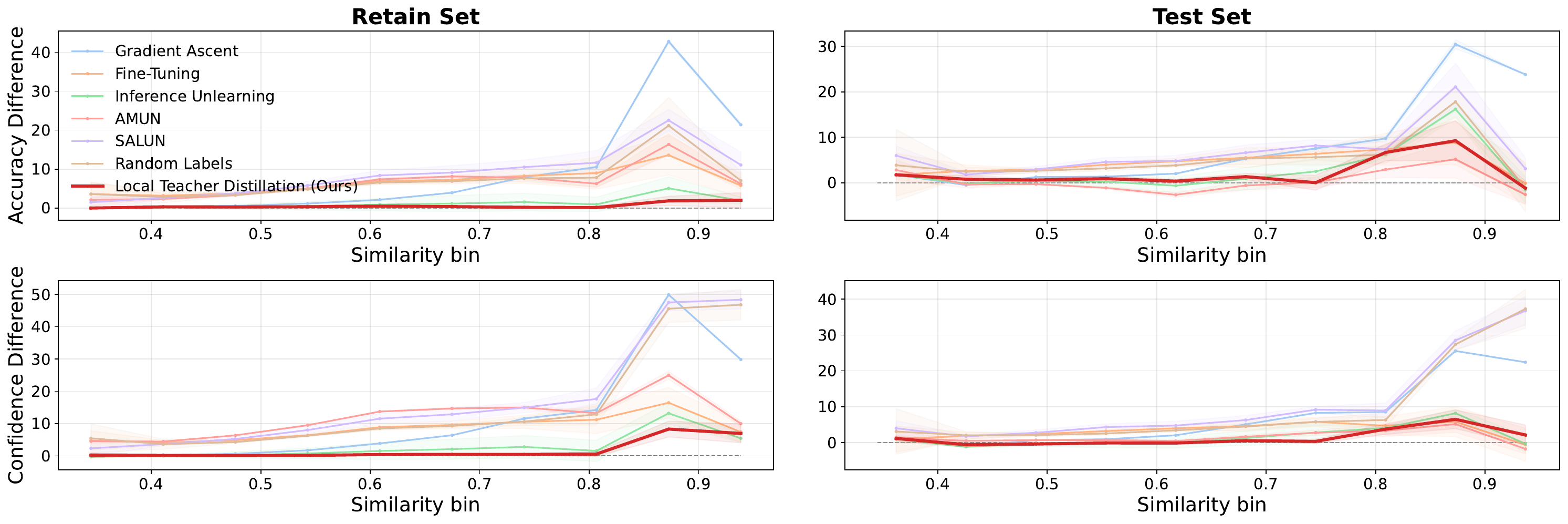}
    \caption{\textbf{Localized prediction deviations, \texttt{hamster} (36).}
    We plot bin-level differences from retraining:
    $\mathrm{Acc}_{\mathrm{retrain}}-\mathrm{Acc}_{\mathrm{unlearn}}$ for accuracy
    (top row) and
    $f_{\theta_{\mathrm{retrain}}}(x)_y-f_{\theta_{\mathrm{unlearn}}}(x)_y$ for
    correct-class confidence (bottom row), as functions of similarity to $D_f$ on
    both retain and test sets. Larger deviations at high similarity indicate that
    unlearning effects concentrate near the removed data.}
    \label{fig:acc-drop-36}
\end{figure}

\begin{table}[H]
\caption{Main results for the $50\%$ deletion setting for class 36 (\texttt{hamster}). Values are mean $\pm$ std over 5 runs; all accuracy metrics are in $\%$. Blue values show deviation from Retrain. Avg. Gap is the mean absolute deviation from Retrain over UA, RA, TA, and MIA.}
\label{tab:hamster_50pct_main_results}
\centering
\small
\resizebox{\linewidth}{!}{%
\setlength{\tabcolsep}{4pt}
\begin{tabular}{lcccccc}
\multicolumn{1}{c}{\bf Method} &
\multicolumn{1}{c}{\bf UA} &
\multicolumn{1}{c}{\bf RA} &
\multicolumn{1}{c}{\bf TA} &
\multicolumn{1}{c}{\bf MIA} &
\multicolumn{1}{c}{\bf Avg. Gap} &
\multicolumn{1}{c}{\bf RTE}
\\ \toprule
Retrain & $72.7 \pm 0.5$ & $99.9 \pm 0.0$ & $71.8 \pm 0.4$ & $47.3 \pm 0.4$ & 0 & $1580.3 \pm 3.9$ \\
\midrule
RL & $81.3 \pm 3.7$ \diff{+8.6} & $94.6 \pm 0.3$ \diff{-5.2} & $68.2 \pm 0.2$ \diff{-3.6} & $96.4 \pm 2.4$ \diff{+49.1} & \valuenotbest{$16.6$} & $157.4 \pm 1.0$ \\
FT & $84.1 \pm 4.5$ \diff{+11.4} & $94.4 \pm 0.5$ \diff{-5.5} & $67.7 \pm 0.5$ \diff{-4.1} & $23.4 \pm 5.3$ \diff{-24.0} & \valuenotbest{$11.3$} & $155.9 \pm 1.3$ \\
GA & $65.1 \pm 0.5$ \diff{-7.5} & $97.9 \pm 0.0$ \diff{-1.9} & $69.4 \pm 0.0$ \diff{-2.4} & $64.4 \pm 0.0$ \diff{+17.1} & \valuenotbest{$7.2$} & $1.7 \pm 0.0$ \\
IU & $95.9 \pm 1.6$ \diff{+23.3} & $99.2 \pm 1.2$ \diff{-0.7} & $71.4 \pm 1.3$ \diffbest{-0.3} & $28.1 \pm 5.6$ \diff{-19.3} & \valuenotbest{$10.9$} & $27.3 \pm 0.4$ \\
SalUn & $75.9 \pm 5.1$ \diffbest{+3.3} & $93.4 \pm 1.0$ \diff{-6.4} & $67.2 \pm 0.9$ \diff{-4.6} & $92.0 \pm 5.7$ \diff{+44.7} & \valuenotbest{$14.8$} & $352.5 \pm 2.8$ \\
AMUN & $82.8 \pm 1.8$ \diff{+10.1} & $94.3 \pm 0.2$ \diff{-5.5} & $72.9 \pm 0.1$ \diff{+1.1} & $26.5 \pm 3.6$ \diff{-20.9} & \valuenotbest{$9.4$} & $169.9 \pm 1.3$ \\
\midrule
LTD (Ours) & $78.7 \pm 5.3$ \diff{+6.1} & $99.5 \pm 0.1$ \diffbest{-0.4} & $71.0 \pm 0.3$ \diff{-0.8} & $48.1 \pm 9.4$ \diffbest{+0.7} & \valuebest{$2.0$} & $214.8 \pm 3.8$ \\
\bottomrule
\end{tabular}}
\end{table}

\begin{table}[H]
\caption{Results on the affected-class subset for the $50\%$ deletion setting for class 36 (\texttt{hamster}). Values are mean $\pm$ std over 5 runs, in $\%$. Blue values show deviation from Retrain. Avg. Gap is computed over RA$_{\text{affected-class}}$, UA$_{\text{affected-class}}$, and TA$_{\text{affected-class}}$.}
\label{tab:apple_forget_class_subset_50pct}
\centering
\small
\scalebox{0.9}{%
\setlength{\tabcolsep}{6pt}
\begin{tabular}{lcccc}
\multicolumn{1}{c}{\bf Method} &
\multicolumn{1}{c}{\bf RA$_{\text{affected-class}}$} &
\multicolumn{1}{c}{\bf UA$_{\text{affected-class}}$} &
\multicolumn{1}{c}{\bf TA$_{\text{affected-class}}$} &
\multicolumn{1}{c}{\bf Avg. Gap}
\\ \toprule
Retrain & $100.0 \pm 0.0$ & $72.7 \pm 0.5$ & $72.7 \pm 2.1$ & 0 \\
\midrule
RL & $89.0 \pm 2.9$ \diff{-11.0} & $81.3 \pm 3.7$ \diff{+8.6} & $63.6 \pm 4.8$ \diff{-9.1} & \valuenotbest{$9.6$} \\
FT & $91.9 \pm 4.0$ \diff{-8.1} & $84.1 \pm 4.5$ \diff{+11.4} & $66.8 \pm 4.1$ \diff{-5.9} & \valuenotbest{$8.5$} \\
GA & $69.3 \pm 0.4$ \diff{-30.7} & $65.1 \pm 0.5$ \diff{-7.5} & $41.4 \pm 0.5$ \diff{-31.3} & \valuenotbest{$23.2$} \\
IU & $96.6 \pm 1.4$ \diff{-3.4} & $95.9 \pm 1.6$ \diff{+23.3} & $63.0 \pm 1.8$ \diff{-9.7} & \valuenotbest{$12.1$} \\
SalUn & $85.5 \pm 3.9$ \diff{-14.5} & $75.9 \pm 5.1$ \diffbest{+3.3} & $60.6 \pm 3.6$ \diff{-12.1} & \valuenotbest{$9.9$} \\
AMUN & $90.5 \pm 1.4$ \diff{-9.5} & $82.8 \pm 1.8$ \diff{+10.1} & $70.0 \pm 2.9$ \diffbest{-2.7} & \valuenotbest{$7.4$} \\
\midrule
LTD (Ours) & $97.8 \pm 2.0$ \diffbest{-2.2} & $78.7 \pm 5.3$ \diff{+6.1} & $67.2 \pm 4.8$ \diff{-5.5} & \valuebest{$4.6$} \\
\bottomrule
\end{tabular}}
\end{table}

\subsubsection{Class 68: \texttt{road}}

For \texttt{road}, LTD achieves the lowest aggregate Avg. Gap. On the affected-class subset, FT, LTD, and AMUN
are very close, with LTD ranking second and nearly matching the best affected-class Avg. Gap. This class shows
a regime where several methods can approximate the local affected-class behavior reasonably well, but LTD gives
the best overall trade-off across the aggregate metrics. In particular, it preserves RA and TA close to retraining
while keeping the MIA deviation smaller than most baselines.

\begin{figure}[H]
    \centering
    \includegraphics[width=1.\linewidth]{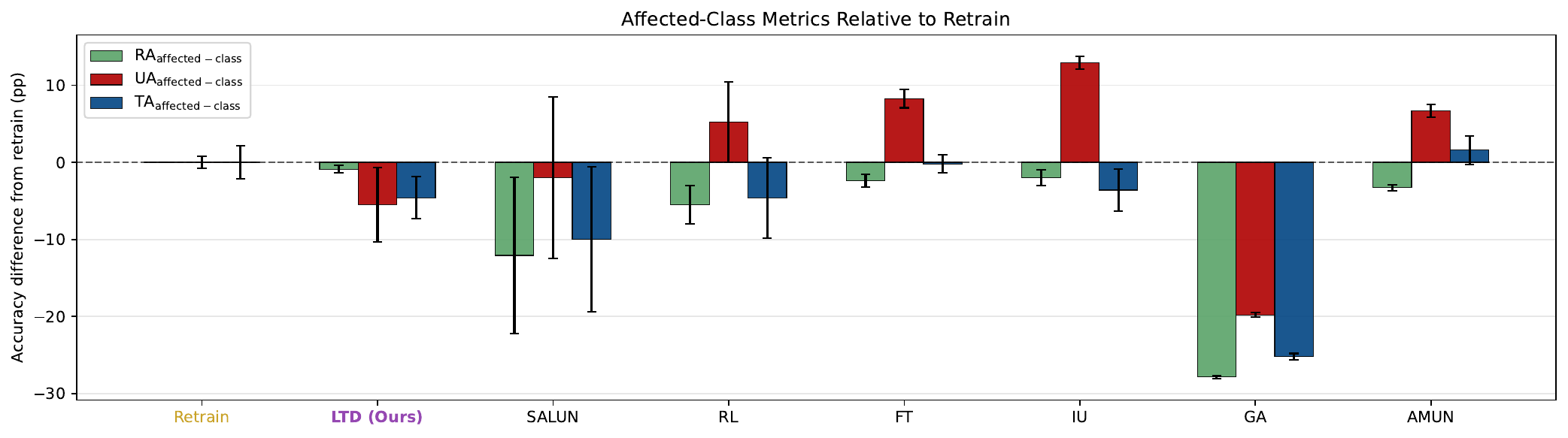}
    \caption{\textbf{Affected-class performance difference from Retrain, \texttt{road} (68).}
    Bars show differences from Retrain for RA$_{\text{affected-class}}$,
    UA$_{\text{affected-class}}$, and TA$_{\text{affected-class}}$; values closer to
    zero indicate better agreement. The figure reveals localized collateral
    forgetting not captured by aggregate metrics.}
    \label{fig:forget-class-68}
\end{figure}

\begin{figure}[H]
    \centering
    \includegraphics[width=1.\linewidth]{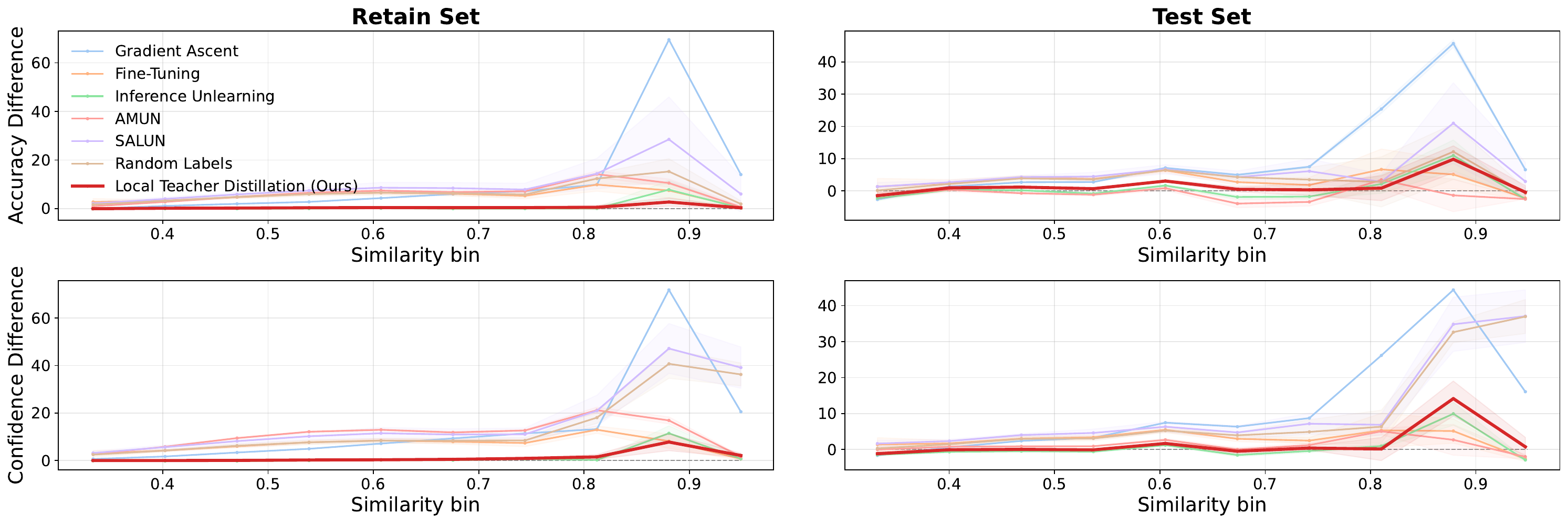}
    \caption{\textbf{Localized prediction deviations, \texttt{road} (68).}
    We plot bin-level differences from retraining:
    $\mathrm{Acc}_{\mathrm{retrain}}-\mathrm{Acc}_{\mathrm{unlearn}}$ for accuracy
    (top row) and
    $f_{\theta_{\mathrm{retrain}}}(x)_y-f_{\theta_{\mathrm{unlearn}}}(x)_y$ for
    correct-class confidence (bottom row), as functions of similarity to $D_f$ on
    both retain and test sets. Larger deviations at high similarity indicate that
    unlearning effects concentrate near the removed data.}
    \label{fig:acc-drop-68}
\end{figure}

\begin{table}[H]
\caption{Main results for the $50\%$ deletion setting for class 68 (\texttt{road}). Values are mean $\pm$ std over 5 runs; all accuracy metrics are in $\%$. Blue values show deviation from Retrain. Avg. Gap is the mean absolute deviation from Retrain over UA, RA, TA, and MIA.}
\label{tab:road_50pct_main_results}
\centering
\small
\resizebox{\linewidth}{!}{%
\setlength{\tabcolsep}{4pt}
\begin{tabular}{lcccccc}
\multicolumn{1}{c}{\bf Method} &
\multicolumn{1}{c}{\bf UA} &
\multicolumn{1}{c}{\bf RA} &
\multicolumn{1}{c}{\bf TA} &
\multicolumn{1}{c}{\bf MIA} &
\multicolumn{1}{c}{\bf Avg. Gap} &
\multicolumn{1}{c}{\bf RTE}
\\ \toprule
Retrain & $86.1 \pm 0.8$ & $99.8 \pm 0.1$ & $72.0 \pm 0.3$ & $26.3 \pm 2.5$ & 0 & $1557.8 \pm 12.1$ \\
\midrule
RL & $91.4 \pm 5.2$ \diff{+5.2} & $94.6 \pm 0.7$ \diff{-5.3} & $68.0 \pm 0.4$ \diff{-4.0} & $85.7 \pm 6.4$ \diff{+59.4} & \valuenotbest{$18.5$} & $157.8 \pm 0.6$ \\
FT & $94.4 \pm 1.2$ \diff{+8.3} & $94.6 \pm 0.5$ \diff{-5.2} & $68.0 \pm 0.4$ \diff{-4.0} & $8.6 \pm 2.6$ \diff{-17.7} & \valuenotbest{$8.8$} & $156.0 \pm 0.6$ \\
GA & $66.3 \pm 0.3$ \diff{-19.8} & $97.0 \pm 0.0$ \diff{-2.8} & $68.4 \pm 0.0$ \diff{-3.7} & $50.2 \pm 0.3$ \diff{+24.0} & \valuenotbest{$12.6$} & $1.7 \pm 0.0$ \\
IU & $99.0 \pm 0.8$ \diff{+12.9} & $99.8 \pm 0.1$ \diffbest{-0.0} & $72.1 \pm 0.3$ \diffbest{+0.0} & $12.5 \pm 3.0$ \diff{-13.8} & \valuenotbest{$6.7$} & $27.4 \pm 0.3$ \\
SalUn & $84.2 \pm 10.5$ \diffbest{-2.0} & $93.2 \pm 1.2$ \diff{-6.6} & $67.4 \pm 0.9$ \diff{-4.6} & $77.8 \pm 14.8$ \diff{+51.5} & \valuenotbest{$16.2$} & $347.4 \pm 3.9$ \\
AMUN & $92.8 \pm 0.8$ \diff{+6.7} & $94.3 \pm 0.1$ \diff{-5.5} & $72.8 \pm 0.3$ \diff{+0.7} & $11.1 \pm 1.2$ \diff{-15.1} & \valuenotbest{$7.0$} & $171.1 \pm 0.5$ \\
\midrule
LTD (Ours) & $80.6 \pm 4.8$ \diff{-5.5} & $99.5 \pm 0.0$ \diff{-0.3} & $70.8 \pm 0.2$ \diff{-1.2} & $33.1 \pm 8.1$ \diffbest{+6.9} & \valuebest{$3.5$} & $202.6 \pm 3.4$ \\
\bottomrule
\end{tabular}}
\end{table}

\begin{table}[H]
\caption{Results on the affected-class subset for the $50\%$ deletion setting for class 68 (\texttt{road}). Values are mean $\pm$ std over 5 runs, in $\%$. Blue values show deviation from Retrain. Avg. Gap is computed over RA$_{\text{affected-class}}$, UA$_{\text{affected-class}}$, and TA$_{\text{affected-class}}$.}
\label{tab:road_forget_class_subset_50pct}
\centering
\small
\scalebox{0.9}{%
\setlength{\tabcolsep}{6pt}
\begin{tabular}{lcccc}
\multicolumn{1}{c}{\bf Method} &
\multicolumn{1}{c}{\bf RA$_{\text{affected-class}}$} &
\multicolumn{1}{c}{\bf UA$_{\text{affected-class}}$} &
\multicolumn{1}{c}{\bf TA$_{\text{affected-class}}$} &
\multicolumn{1}{c}{\bf Avg. Gap}
\\ \toprule
Retrain & $100.0 \pm 0.0$ & $86.1 \pm 0.8$ & $88.0 \pm 2.2$ & 0 \\
\midrule
RL & $94.5 \pm 2.5$ \diff{-5.5} & $91.4 \pm 5.2$ \diff{+5.2} & $83.4 \pm 5.2$ \diff{-4.6} & \valuenotbest{$5.1$} \\
FT & $97.6 \pm 0.8$ \diff{-2.4} & $94.4 \pm 1.2$ \diff{+8.3} & $87.8 \pm 1.2$ \diffbest{-0.2} & \valuebest{$3.6$} \\
GA & $72.2 \pm 0.2$ \diff{-27.8} & $66.3 \pm 0.3$ \diff{-19.8} & $62.8 \pm 0.4$ \diff{-25.2} & \valuenotbest{$24.3$} \\
IU & $98.0 \pm 1.0$ \diff{-2.0} & $99.0 \pm 0.8$ \diff{+12.9} & $84.4 \pm 2.7$ \diff{-3.6} & \valuenotbest{$6.2$} \\
SalUn & $87.9 \pm 10.1$ \diff{-12.1} & $84.2 \pm 10.5$ \diffbest{-2.0} & $78.0 \pm 9.4$ \diff{-10.0} & \valuenotbest{$8.0$} \\
AMUN & $96.7 \pm 0.4$ \diff{-3.3} & $92.8 \pm 0.8$ \diff{+6.7} & $89.6 \pm 1.9$ \diff{+1.6} & \valuenotbest{$3.8$} \\
\midrule
LTD (Ours) & $99.1 \pm 0.5$ \diffbest{-0.9} & $80.6 \pm 4.8$ \diff{-5.5} & $83.4 \pm 2.7$ \diff{-4.6} & \valuenotbest{$3.7$} \\
\bottomrule
\end{tabular}}
\end{table}

\subsubsection{Class 73: \texttt{shark}}

For \texttt{shark}, LTD is the clearest positive case among the additional classes: it achieves the lowest aggregate
Avg. Gap and the lowest affected-class Avg. Gap. The method stays close to retraining on RA, TA, and MIA, while
also reducing the mismatch on the affected-class subset. Competing methods often improve one component at the
cost of another, for example by preserving global utility but substantially shifting UA or degrading affected-class
accuracy. This supports the interpretation that local teacher targets can better match the retraining-induced local
prediction structure.

\begin{figure}[H]
    \centering
    \includegraphics[width=1.\linewidth]{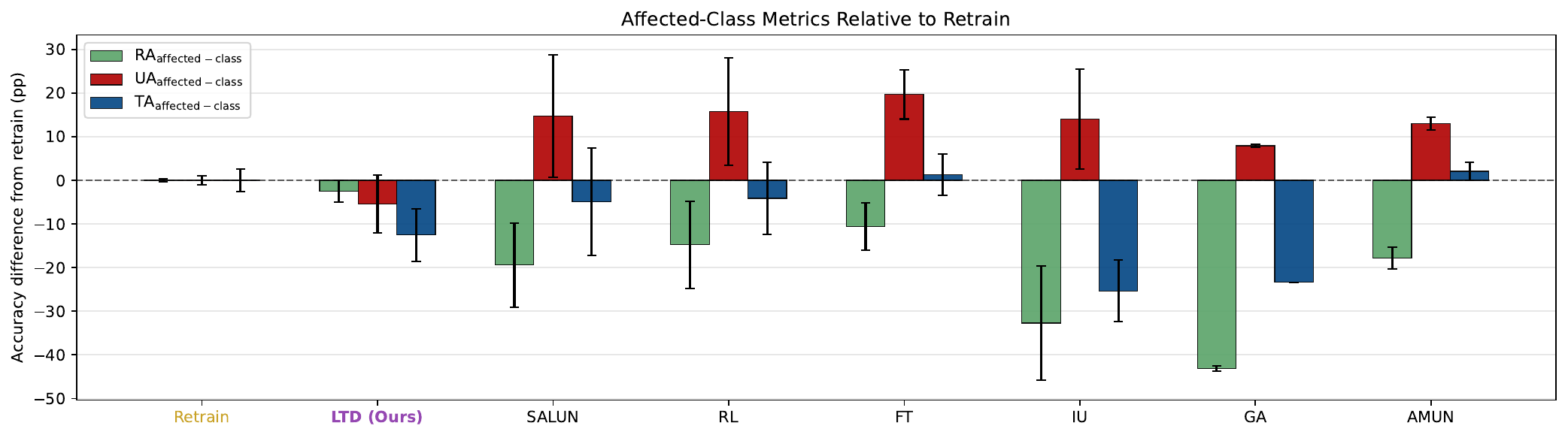}
    \caption{\textbf{Affected-class performance difference from Retrain.}
    Bars show differences from Retrain for RA$_{\text{affected-class}}$,
    UA$_{\text{affected-class}}$, and TA$_{\text{affected-class}}$; values closer to
    zero indicate better agreement. The figure reveals localized collateral
    forgetting not captured by aggregate metrics.}
    \label{fig:forget-class-73}
\end{figure}

\begin{figure}[H]
    \centering
    \includegraphics[width=1.\linewidth]{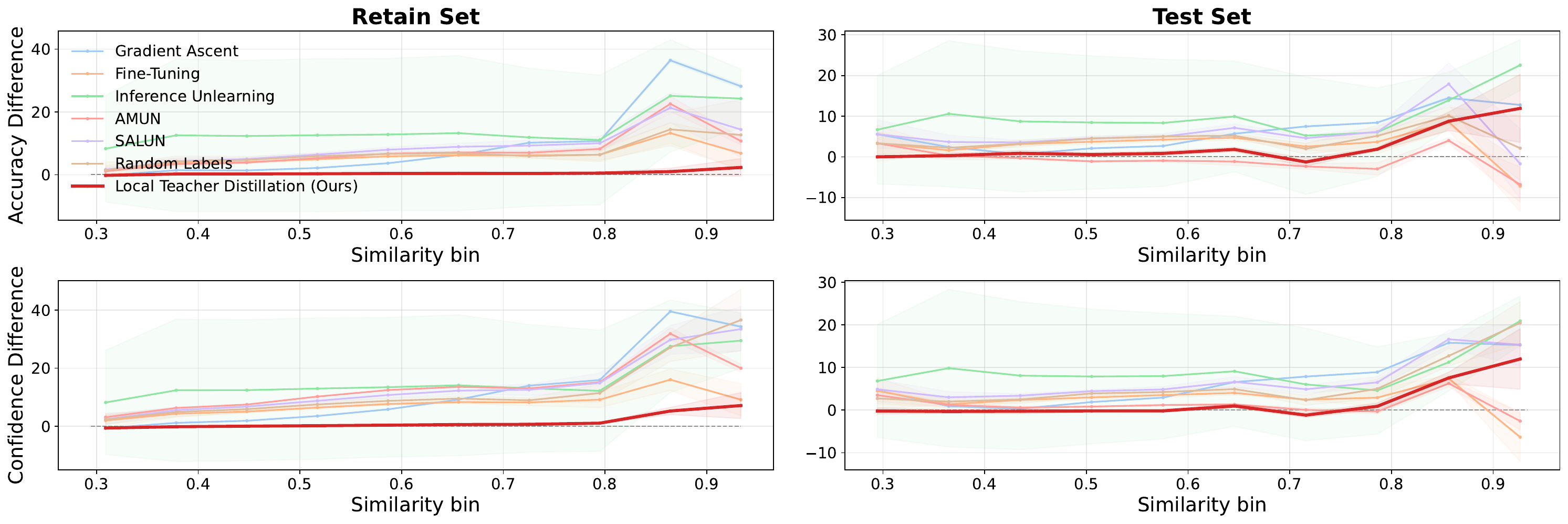}
    \caption{\textbf{Localized prediction deviations.}
    We plot bin-level differences from retraining:
    $\mathrm{Acc}_{\mathrm{retrain}}-\mathrm{Acc}_{\mathrm{unlearn}}$ for accuracy
    (top row) and
    $f_{\theta_{\mathrm{retrain}}}(x)_y-f_{\theta_{\mathrm{unlearn}}}(x)_y$ for
    correct-class confidence (bottom row), as functions of similarity to $D_f$ on
    both retain and test sets. Larger deviations at high similarity indicate that
    unlearning effects concentrate near the removed data.}
    \label{fig:acc-drop-73}
\end{figure}

\begin{table}[H]
\caption{Main results for the $50\%$ deletion setting for class 73 (\texttt{shark}). Values are mean $\pm$ std over 5 runs; all accuracy metrics are in $\%$. Blue values show deviation from Retrain. Avg. Gap is the mean absolute deviation from Retrain over UA, RA, TA, and MIA.}
\label{tab:shark_50pct_main_results}
\centering
\small
\resizebox{\linewidth}{!}{%
\setlength{\tabcolsep}{4pt}
\begin{tabular}{lcccccc}
\multicolumn{1}{c}{\bf Method} &
\multicolumn{1}{c}{\bf UA} &
\multicolumn{1}{c}{\bf RA} &
\multicolumn{1}{c}{\bf TA} &
\multicolumn{1}{c}{\bf MIA} &
\multicolumn{1}{c}{\bf Avg. Gap} &
\multicolumn{1}{c}{\bf RTE}
\\ \toprule
Retrain & $54.9 \pm 1.0$ & $99.8 \pm 0.0$ & $71.8 \pm 0.3$ & $70.4 \pm 0.9$ & 0 & $1581.3 \pm 4.6$ \\
\midrule
RL & $70.7 \pm 12.3$ \diff{+15.8} & $94.0 \pm 0.9$ \diff{-5.8} & $67.5 \pm 0.5$ \diff{-4.3} & $90.7 \pm 5.5$ \diff{+20.3} & \valuenotbest{$11.5$} & $157.9 \pm 0.8$ \\
FT & $74.6 \pm 5.7$ \diff{+19.7} & $94.8 \pm 0.3$ \diff{-5.1} & $68.1 \pm 0.3$ \diff{-3.7} & $37.0 \pm 8.4$ \diff{-33.4} & \valuenotbest{$15.5$} & $155.8 \pm 0.8$ \\
GA & $62.9 \pm 0.4$ \diff{+7.9} & $96.6 \pm 0.0$ \diff{-3.3} & $68.8 \pm 0.0$ \diff{-3.0} & $58.2 \pm 0.6$ \diff{-12.2} & \valuenotbest{$6.6$} & $1.7 \pm 0.0$ \\
IU & $69.0 \pm 11.5$ \diff{+14.0} & $87.2 \pm 24.0$ \diff{-12.7} & $63.2 \pm 15.7$ \diff{-8.6} & $62.0 \pm 19.2$ \diff{-8.4} & \valuenotbest{$10.9$} & $27.2 \pm 0.3$ \\
SalUn & $69.7 \pm 14.0$ \diff{+14.7} & $93.1 \pm 0.7$ \diff{-6.7} & $66.9 \pm 0.4$ \diff{-4.9} & $79.5 \pm 6.4$ \diff{+9.1} & \valuenotbest{$8.8$} & $351.5 \pm 1.7$ \\
AMUN & $67.9 \pm 1.5$ \diff{+13.0} & $94.3 \pm 0.0$ \diff{-5.5} & $72.8 \pm 0.2$ \diff{+1.0} & $47.1 \pm 1.3$ \diff{-23.3} & \valuenotbest{$10.7$} & $173.5 \pm 0.9$ \\
\midrule
LTD (Ours) & $49.5 \pm 6.6$ \diffbest{-5.4} & $99.5 \pm 0.1$ \diffbest{-0.3} & $70.9 \pm 0.4$ \diffbest{-0.9} & $76.3 \pm 5.9$ \diffbest{+5.9} & \valuebest{$3.1$} & $210.3 \pm 0.7$ \\
\bottomrule
\end{tabular}}
\end{table}

\begin{table}[H]
\caption{Results on the affected-class subset for the $50\%$ deletion setting for class 73 (\texttt{shark}). Values are mean $\pm$ std over 5 runs, in $\%$. Blue values show deviation from Retrain. Avg. Gap is computed over RA$_{\text{affected-class}}$, UA$_{\text{affected-class}}$, and TA$_{\text{affected-class}}$.}
\label{tab:shark_forget_class_subset_50pct}
\centering
\small
\scalebox{0.9}{%
\setlength{\tabcolsep}{6pt}
\begin{tabular}{lcccc}
\multicolumn{1}{c}{\bf Method} &
\multicolumn{1}{c}{\bf RA$_{\text{affected-class}}$} &
\multicolumn{1}{c}{\bf UA$_{\text{affected-class}}$} &
\multicolumn{1}{c}{\bf TA$_{\text{affected-class}}$} &
\multicolumn{1}{c}{\bf Avg. Gap}
\\ \toprule
Retrain & $99.7 \pm 0.4$ & $54.9 \pm 1.0$ & $43.3 \pm 2.6$ & 0 \\
\midrule
RL & $85.0 \pm 10.0$ \diff{-14.8} & $70.7 \pm 12.3$ \diff{+15.8} & $39.2 \pm 8.3$ \diff{-4.1} & \valuenotbest{$11.6$} \\
FT & $89.1 \pm 5.4$ \diff{-10.6} & $74.6 \pm 5.7$ \diff{+19.7} & $44.6 \pm 4.8$ \diffbest{+1.3} & \valuenotbest{$10.5$} \\
GA & $56.6 \pm 0.6$ \diff{-43.2} & $62.9 \pm 0.4$ \diff{+7.9} & $20.0 \pm 0.0$ \diff{-23.3} & \valuenotbest{$24.8$} \\
IU & $67.0 \pm 13.1$ \diff{-32.7} & $69.0 \pm 11.5$ \diff{+14.0} & $18.0 \pm 7.0$ \diff{-25.3} & \valuenotbest{$24.0$} \\
SalUn & $80.3 \pm 9.6$ \diff{-19.4} & $69.7 \pm 14.0$ \diff{+14.7} & $38.4 \pm 12.3$ \diff{-4.9} & \valuenotbest{$13.0$} \\
AMUN & $81.9 \pm 2.4$ \diff{-17.8} & $67.9 \pm 1.5$ \diff{+13.0} & $45.4 \pm 2.1$ \diff{+2.1} & \valuenotbest{$11.0$} \\
\midrule
LTD (Ours) & $97.2 \pm 2.5$ \diffbest{-2.5} & $49.5 \pm 6.6$ \diffbest{-5.4} & $30.8 \pm 6.0$ \diff{-12.5} & \valuebest{$6.8$} \\
\bottomrule
\end{tabular}}
\end{table}

\subsubsection{Summary across classes}

Overall, the additional classes support the main conclusion that localized
collateral forgetting is not specific to the \texttt{couch} class. Across all
five randomly selected affected classes, LTD achieves the lowest aggregate
Avg. Gap. This shows consistent global alignment with retraining across UA, RA,
TA, and MIA, rather than a result driven by a favorable choice of the main
affected class.

The affected-class results provide a more localized view. LTD achieves the best
affected-class Avg. Gap on three of the five classes and ranks second on the
remaining two. Thus, while the method is uniformly best in aggregate, the local
affected-class comparison is more competitive. This distinction is important:
aggregate metrics measure overall agreement with retraining, whereas affected-class
metrics isolate the neighborhood where collateral forgetting is most likely to
appear.

Importantly, the retraining target is class-dependent. In partial-class deletion,
there is no single desired level of UA or affected-class test accuracy: classes
such as \texttt{road} and \texttt{shark} induce different retraining trade-offs
between forgetting $D_f$ and preserving same-class test behavior. Thus, the
setting requires more than uniformly suppressing the deleted examples; the
unlearning method must flexibly match the class-specific retraining behavior. Notably, the LTD teacher is not merely preserving the original class identity: in partial deletion, the retrained model can assign nontrivial soft predictions to deleted examples, which are not determined by the original label alone. 

AMUN is the strongest competing baseline in these additional experiments,
especially on affected-class metrics. It achieves the best affected-class
Avg. Gap for \texttt{apple} and remains close to the best methods for
\texttt{road}. LTD is more consistent across the two evaluation views: it is
best on all aggregate tables and remains among the top two on all affected-class
tables. This suggests that local teacher targets improve retraining alignment
systematically across classes, rather than only for a single deletion target or
a single metric.

\subsection{Additional Experiments on SVHN with ViT-Tiny}
\label{sec:svhn_vit}

To assess the generalisability of the compared methods beyond the CIFAR setting, we conduct additional experiments on the SVHN dataset~\cite{Netzer2011ReadingDI} with a ViT-Tiny backbone~\cite{DBLP:journals/corr/abs-2010-11929} (patch size~$4$, image size~$32\times32$, 10 output classes) and apply class-forgetting for class~9. We consider two forgetting fractions: $\rho=0.5$, where half of the class-9 training samples are designated as the forget set, and the more challenging $\rho=0.9$.
All methods are evaluated across 5 random seeds.

Tables~\ref{tab:svhn_vit_main_50} and~\ref{tab:svhn_vit_affected_50}
show the results for $50\%$ deletion. LTD achieves the lowest global Avg.~Gap to retraining, improving over the strongest baseline AMUN ($2.6$ vs.\ $4.0$). The affected-class results show the same trend: LTD obtains the lowest affected-class Avg.~Gap and closely matches the retrained model on affected-class test accuracy. The corresponding bar plot in Figure~\ref{fig:svhn_vit_affected_bar_50} shows a competitive affected-class comparison: LTD closely matches retraining on affected-class RA and TA, with
small variance across runs, although it still overshoots UA.

\begin{table}[H]
\caption{
SVHN + ViT-Tiny results for partial deletion of digit $9$ with
class-fraction-to-forget $=0.5$.
Values are mean $\pm$ std over 5 runs; accuracy metrics are in $\%$ and RTE is in seconds.
Blue values show deviation from Retrain. Avg.~Gap is the mean absolute deviation from
Retrain over UA, RA, TA, and MIA.
}
\label{tab:svhn_vit_main_50}
\resizebox{\linewidth}{!}{%
\centering\small\setlength{\tabcolsep}{4pt}
\begin{tabular}{lcccccc}
\multicolumn{1}{c}{\bf Method} & \multicolumn{1}{c}{\bf UA} & \multicolumn{1}{c}{\bf RA} & \multicolumn{1}{c}{\bf TA} & \multicolumn{1}{c}{\bf MIA} & \multicolumn{1}{c}{\bf Avg.~Gap} & \multicolumn{1}{c}{\bf RTE} \\ \toprule
Retrain & $74.9 \pm 1.1$ & $100.0 \pm 0.0$ & $84.9 \pm 0.3$ & $29.9 \pm 1.6$ & 0 & $2115.0 \pm 5.8$ \\
\midrule
RL & $90.3 \pm 0.9$ \diff{+15.4} & $99.7 \pm 0.0$ \diff{-0.3} & $84.2 \pm 0.0$ \diff{-0.6} & $99.9 \pm 0.1$ \diff{+70.0} & \valuenotbest{$21.6$} & $197.6 \pm 0.9$ \\
FT & $40.0 \pm 48.9$ \diff{-34.9} & $61.2 \pm 32.9$ \diff{-38.8} & $55.4 \pm 25.9$ \diff{-29.5} & $58.2 \pm 47.6$ \diff{+28.3} & \valuenotbest{$32.8$} & $187.5 \pm 1.2$ \\
GA & $100.0 \pm 0.0$ \diff{+25.1} & $100.0 \pm 0.0$ \diffbest{+0.0} & $85.2 \pm 0.0$ \diff{+0.3} & $0.0 \pm 0.0$ \diff{-29.9} & \valuenotbest{$13.8$} & $5.3 \pm 0.1$ \\
IU & $100.0 \pm 0.0$ \diff{+25.1} & $100.0 \pm 0.0$ \diff{+0.0} & $85.2 \pm 0.0$ \diffbest{+0.3} & $0.0 \pm 0.0$ \diff{-29.9} & \valuenotbest{$13.8$} & $36.8 \pm 0.2$ \\
SalUn & $86.7 \pm 1.1$ \diff{+11.8} & $99.6 \pm 0.0$ \diff{-0.4} & $84.1 \pm 0.1$ \diff{-0.8} & $98.6 \pm 0.6$ \diff{+68.7} & \valuenotbest{$20.4$} & $356.4 \pm 1.1$ \\
AMUN & $75.2 \pm 14.0$ \diffbest{+0.4} & $93.0 \pm 6.8$ \diff{-7.0} & $83.4 \pm 3.0$ \diff{-1.5} & $37.2 \pm 20.8$ \diff{+7.3} & \valuenotbest{$4.0$} & $305.9 \pm 0.8$ \\
\midrule
Distill (ours) & $84.3 \pm 4.3$ \diff{+9.4} & $100.0 \pm 0.0$ \diff{-0.0} & $84.4 \pm 0.2$ \diff{-0.5} & $29.5 \pm 5.7$ \diffbest{-0.4} & \valuebest{$2.6$} & $348.1 \pm 5.0$ \\
\bottomrule
\end{tabular}
}
\end{table}

\begin{table}[H]
\caption{
Affected-class results on SVHN + ViT-Tiny for partial deletion of digit $9$ with
class-fraction-to-forget $=0.5$.
Values are mean $\pm$ std over 5 runs, in $\%$. Blue values show deviation from Retrain.
Avg.~Gap is computed over
RA$_{\text{affected-class}}$, UA$_{\text{affected-class}}$, and
TA$_{\text{affected-class}}$.
}
\label{tab:svhn_vit_affected_50}
\centering\small\setlength{\tabcolsep}{6pt}
\scalebox{0.9}{
\begin{tabular}{lcccc}
\multicolumn{1}{c}{\bf Method} & \multicolumn{1}{c}{\bf RA$_{\text{affected-class}}$} & \multicolumn{1}{c}{\bf UA$_{\text{affected-class}}$} & \multicolumn{1}{c}{\bf TA$_{\text{affected-class}}$} & \multicolumn{1}{c}{\bf Avg.~Gap} \\ \toprule
Retrain & $100.0 \pm 0.0$ & $74.9 \pm 1.1$ & $72.1 \pm 1.4$ & 0 \\
\midrule
RL & $92.5 \pm 1.2$ \diff{-7.5} & $90.3 \pm 0.9$ \diff{+15.4} & $64.9 \pm 1.1$ \diff{-7.2} & \valuenotbest{$10.0$} \\
FT & $40.1 \pm 48.9$ \diff{-59.9} & $40.0 \pm 48.9$ \diff{-34.9} & $32.4 \pm 39.6$ \diff{-39.7} & \valuenotbest{$44.8$} \\
GA & $100.0 \pm 0.0$ \diffbest{+0.0} & $100.0 \pm 0.0$ \diff{+25.1} & $81.3 \pm 0.1$ \diff{+9.2} & \valuenotbest{$11.4$} \\
IU & $100.0 \pm 0.0$ \diff{+0.0} & $100.0 \pm 0.0$ \diff{+25.1} & $81.2 \pm 0.0$ \diff{+9.1} & \valuenotbest{$11.4$} \\
SalUn & $88.2 \pm 1.2$ \diff{-11.8} & $86.7 \pm 1.1$ \diff{+11.8} & $61.0 \pm 1.3$ \diff{-11.1} & \valuenotbest{$11.6$} \\
AMUN & $83.0 \pm 18.5$ \diff{-17.0} & $75.2 \pm 14.0$ \diffbest{+0.4} & $66.5 \pm 12.2$ \diff{-5.6} & \valuenotbest{$7.7$} \\
\midrule
Distill (ours) & $100.0 \pm 0.0$ \diff{-0.0} & $84.3 \pm 4.3$ \diff{+9.4} & $71.4 \pm 3.3$ \diffbest{-0.7} & \valuebest{$3.4$} \\
\bottomrule
\end{tabular}}
\end{table}

\begin{figure}[H]
    \centering
    \includegraphics[width=1.\linewidth]{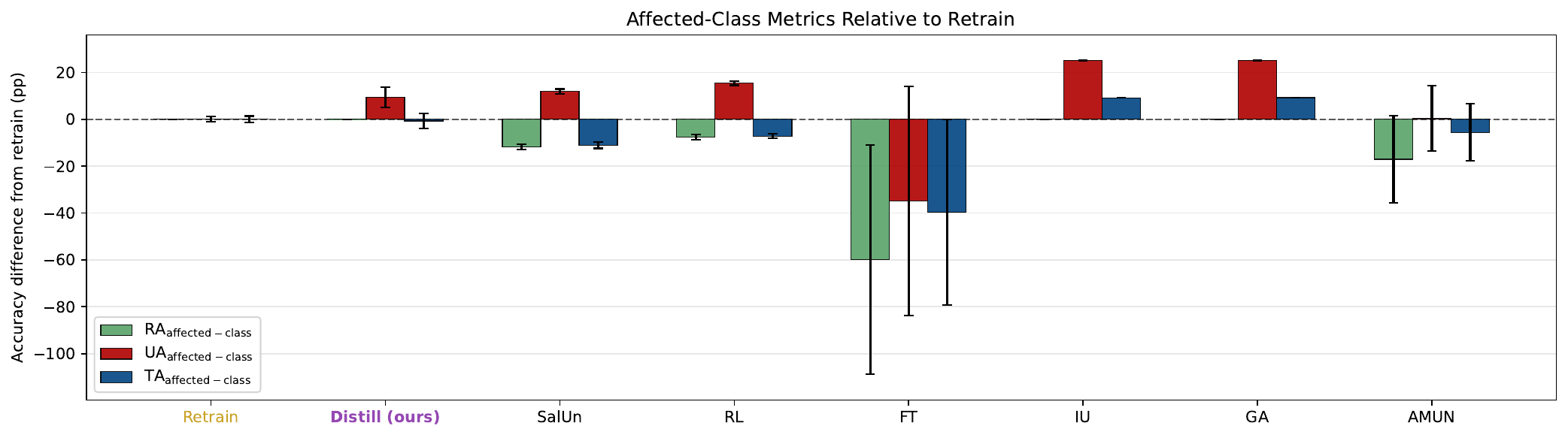}
    \caption{
    Affected-class deviations from retraining on SVHN + ViT-Tiny for partial
    deletion of digit $9$ with class-fraction-to-forget $=0.5$.
    Bars show differences in affected-class RA, UA, and TA relative to Retrain;
    zero corresponds to exact agreement with the retrained model.
    }
    \label{fig:svhn_vit_affected_bar_50}
\end{figure}

Figure~\ref{fig:svhn_vit_binned_50} further compares methods across similarity
bins. For $50\%$ deletion, LTD stays close to retraining across most retain and
test bins, both in accuracy and confidence. By contrast, several baselines have
large deviations in the high-similarity region near the forget set. Thus, in this partial-deletion regime, LTD transfers well to SVHN and a transformer backbone.

\begin{figure}[H]
    \centering
    \includegraphics[width=1.\linewidth]{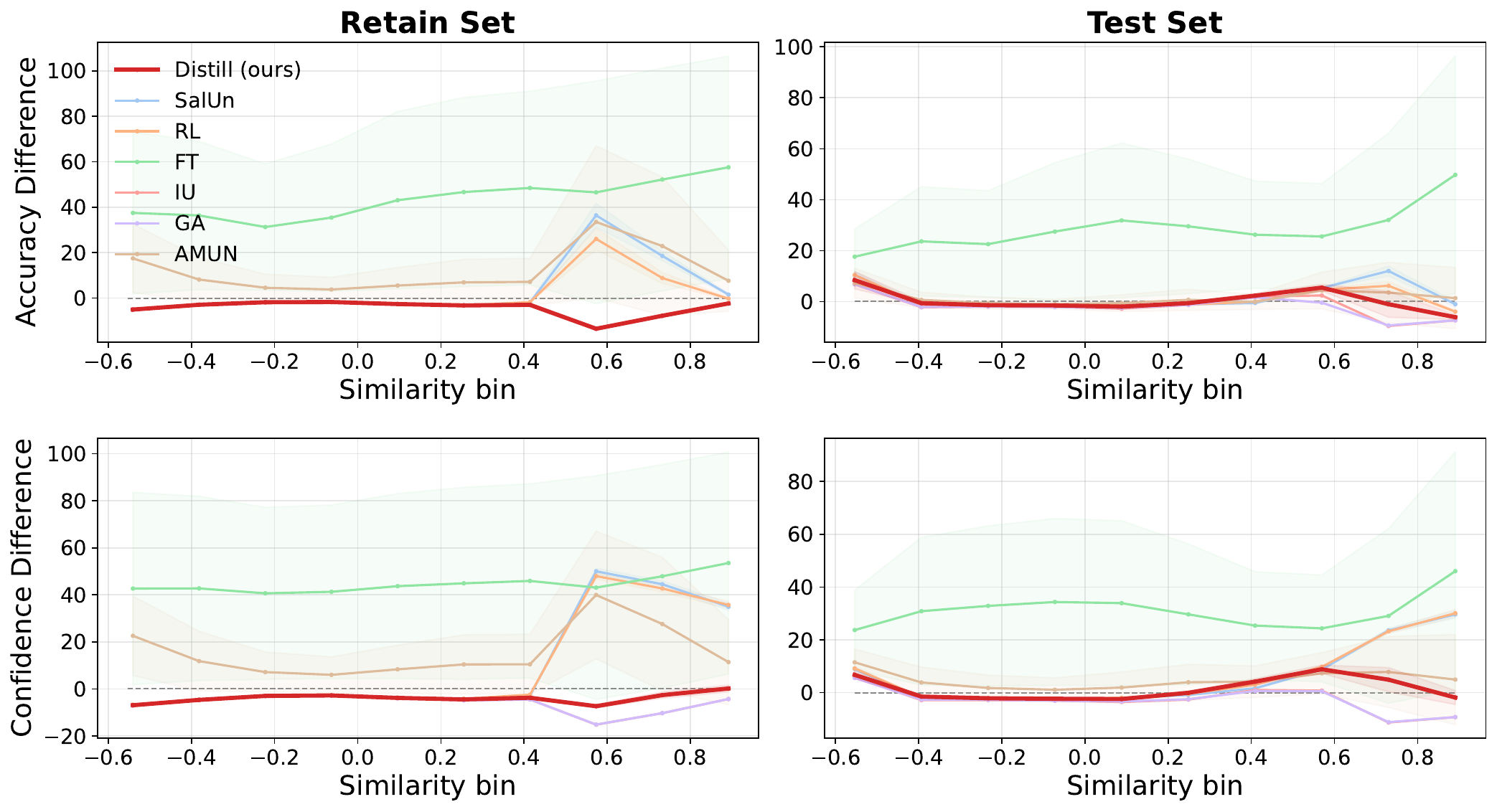}
    \caption{
    Retrain-relative accuracy and confidence differences across similarity bins on
    SVHN + ViT-Tiny for class-fraction-to-forget $=0.5$.
    Similarity is computed in the penultimate-layer representation space of the
    fully trained model with respect to the forget set. Top row: accuracy
    differences; bottom row: correct-class confidence differences. Left: retain
    set; right: test set. Values close to zero indicate agreement with retraining.
    }
    \label{fig:svhn_vit_binned_50}
\end{figure}

The $90\%$ deletion setting is more challenging
(Tables~\ref{tab:svhn_vit_main_90} and~\ref{tab:svhn_vit_affected_90}).
Here AMUN achieves the lowest global Avg.~Gap, while LTD ranks second. LTD closely matches retraining in UA, RA, and TA, but differs more substantially
in MIA. In our convention, a higher MIA value means that deleted examples are
more often classified as non-members; however, our evaluation is
retrain-relative, so the target is to match the auditing behavior of the
retrained model rather than to maximize this score. Thus, the larger MIA value
of LTD indicates an over-shift relative to retraining in this regime.

The affected-class metrics show a different view. LTD achieves the lowest affected-class Avg.~Gap, improving over AMUN ($7.3$ vs.\ $12.1$), and keeps all three affected-class deviations moderate. AMUN better matches affected-class TA, but has a larger drop in affected-class RA. 

\begin{table}[H]
\caption{
SVHN + ViT-Tiny results for partial deletion of digit $9$ with
class-fraction-to-forget $=0.9$.
Values are mean $\pm$ std over 5 runs; accuracy metrics are in $\%$ and RTE is in seconds.
Blue values show deviation from Retrain. Avg.~Gap is the mean absolute deviation from
Retrain over UA, RA, TA, and MIA.
}
\label{tab:svhn_vit_main_90}
\resizebox{\linewidth}{!}{%
\centering\small\setlength{\tabcolsep}{4pt}
\begin{tabular}{lcccccc}
\multicolumn{1}{c}{\bf Method} & \multicolumn{1}{c}{\bf UA} & \multicolumn{1}{c}{\bf RA} & \multicolumn{1}{c}{\bf TA} & \multicolumn{1}{c}{\bf MIA} & \multicolumn{1}{c}{\bf Avg.~Gap} & \multicolumn{1}{c}{\bf RTE} \\ \toprule
Retrain & $51.1 \pm 2.2$ & $100.0 \pm 0.0$ & $82.9 \pm 1.3$ & $56.2 \pm 2.4$ & 0 & $2058.9 \pm 5.9$ \\
\midrule
RL & $36.0 \pm 4.1$ \diff{-15.0} & $99.6 \pm 0.0$ \diff{-0.4} & $81.9 \pm 0.2$ \diff{-1.0} & $100.0 \pm 0.0$ \diff{+43.8} & \valuenotbest{$15.1$} & $200.5 \pm 1.8$ \\
FT & $43.7 \pm 46.1$ \diff{-7.4} & $80.4 \pm 20.5$ \diff{-19.6} & $70.9 \pm 15.5$ \diff{-12.0} & $58.6 \pm 47.8$ \diff{+2.4} & \valuenotbest{$10.3$} & $182.3 \pm 0.6$ \\
GA & $36.8 \pm 28.7$ \diff{-14.3} & $54.0 \pm 22.2$ \diff{-46.0} & $46.4 \pm 19.0$ \diff{-36.5} & $60.4 \pm 24.3$ \diff{+4.2} & \valuenotbest{$25.2$} & $7.9 \pm 0.1$ \\
IU & $100.0 \pm 0.0$ \diff{+48.9} & $100.0 \pm 0.0$ \diffbest{+0.0} & $85.2 \pm 0.0$ \diff{+2.3} & $0.0 \pm 0.0$ \diff{-56.1} & \valuenotbest{$26.8$} & $36.6 \pm 0.3$ \\
SalUn & $31.5 \pm 3.8$ \diff{-19.6} & $99.5 \pm 0.0$ \diff{-0.5} & $81.7 \pm 0.2$ \diff{-1.1} & $100.0 \pm 0.0$ \diff{+43.8} & \valuenotbest{$16.3$} & $357.6 \pm 1.2$ \\
AMUN & $57.0 \pm 15.5$ \diffbest{+5.9} & $94.1 \pm 4.7$ \diff{-5.9} & $83.4 \pm 2.6$ \diffbest{+0.5} & $57.6 \pm 16.7$ \diffbest{+1.4} & \valuebest{$3.4$} & $393.9 \pm 0.8$ \\
\midrule
Distill (ours) & $57.6 \pm 7.4$ \diff{+6.5} & $99.9 \pm 0.0$ \diff{-0.1} & $82.2 \pm 0.3$ \diff{-0.7} & $88.1 \pm 3.8$ \diff{+31.9} & \valuenotbest{$9.8$} & $306.9 \pm 2.9$ \\
\bottomrule
\end{tabular}
}
\end{table}

\begin{table}[H]
\caption{
Affected-class results on SVHN + ViT-Tiny for partial deletion of digit $9$ with
class-fraction-to-forget $=0.9$.
Values are mean $\pm$ std over 5 runs, in $\%$. Blue values show deviation from Retrain.
Avg.~Gap is computed over
RA$_{\text{affected-class}}$, UA$_{\text{affected-class}}$, and
TA$_{\text{affected-class}}$.
}
\label{tab:svhn_vit_affected_90}
\centering\small\setlength{\tabcolsep}{6pt}
\scalebox{0.9}{
\begin{tabular}{lcccc}
\multicolumn{1}{c}{\bf Method} & \multicolumn{1}{c}{\bf RA$_{\text{affected-class}}$} & \multicolumn{1}{c}{\bf UA$_{\text{affected-class}}$} & \multicolumn{1}{c}{\bf TA$_{\text{affected-class}}$} & \multicolumn{1}{c}{\bf Avg.~Gap} \\ \toprule
Retrain & $100.0 \pm 0.0$ & $51.1 \pm 2.2$ & $48.9 \pm 2.3$ & 0 \\
\midrule
RL & $36.3 \pm 4.3$ \diff{-63.7} & $36.0 \pm 4.1$ \diff{-15.0} & $23.8 \pm 2.6$ \diff{-25.1} & \valuenotbest{$34.6$} \\
FT & $44.5 \pm 45.5$ \diff{-55.5} & $43.7 \pm 46.1$ \diff{-7.4} & $34.5 \pm 37.1$ \diff{-14.4} & \valuenotbest{$25.8$} \\
GA & $36.3 \pm 28.0$ \diff{-63.7} & $36.8 \pm 28.7$ \diff{-14.3} & $29.9 \pm 24.0$ \diff{-19.0} & \valuenotbest{$32.3$} \\
IU & $100.0 \pm 0.0$ \diffbest{+0.0} & $100.0 \pm 0.0$ \diff{+48.9} & $81.2 \pm 0.0$ \diff{+32.3} & \valuenotbest{$27.0$} \\
SalUn & $30.7 \pm 3.2$ \diff{-69.3} & $31.5 \pm 3.8$ \diff{-19.6} & $20.8 \pm 2.4$ \diff{-28.1} & \valuenotbest{$39.0$} \\
AMUN & $70.0 \pm 21.8$ \diff{-30.0} & $57.0 \pm 15.5$ \diffbest{+5.9} & $49.4 \pm 14.2$ \diffbest{+0.5} & \valuenotbest{$12.1$} \\
\midrule
Distill (ours) & $93.0 \pm 4.1$ \diff{-7.0} & $57.6 \pm 7.4$ \diff{+6.5} & $40.6 \pm 5.1$ \diff{-8.3} & \valuebest{$7.3$} \\
\bottomrule
\end{tabular}}
\end{table}

\begin{figure}[H]
    \centering
    \includegraphics[width=1.\linewidth]{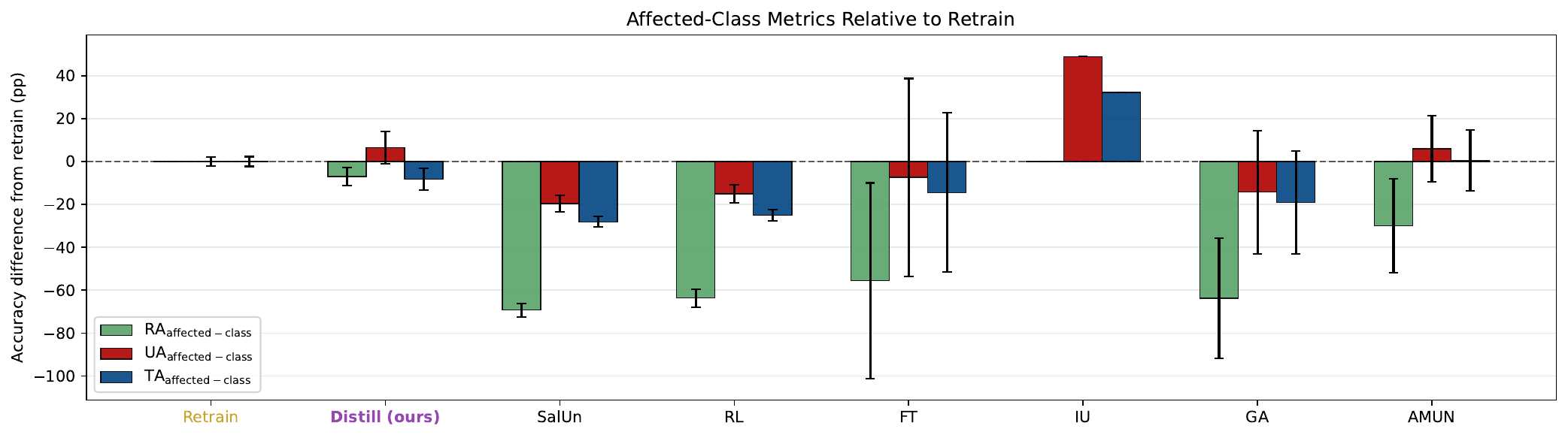}
    \caption{
    Affected-class deviations from retraining on SVHN + ViT-Tiny for partial
    deletion of digit $9$ with class-fraction-to-forget $=0.9$.
    Bars show differences in affected-class RA, UA, and TA relative to Retrain;
    zero corresponds to exact agreement with the retrained model.
    }
    \label{fig:svhn_vit_affected_bar_90}
\end{figure}

The binned plots in Figure~\ref{fig:svhn_vit_binned_90} provide a more fine-grained view of this effect. In the aggressive $90\%$ deletion regime, several baselines exhibit large retrain-relative deviations that are
concentrated in the high-similarity region near the forget set. LTD substantially reduces this localized deviation: its curves stay close to retraining over
low- and medium-similarity bins and increase more mildly near the forget neighborhood. The effect is not eliminated, especially for confidence on the
most similar test examples, but it is reduced compared with the stronger localized shifts observed for many baselines.

\begin{figure}[H]
    \centering
    \includegraphics[width=1.\linewidth]{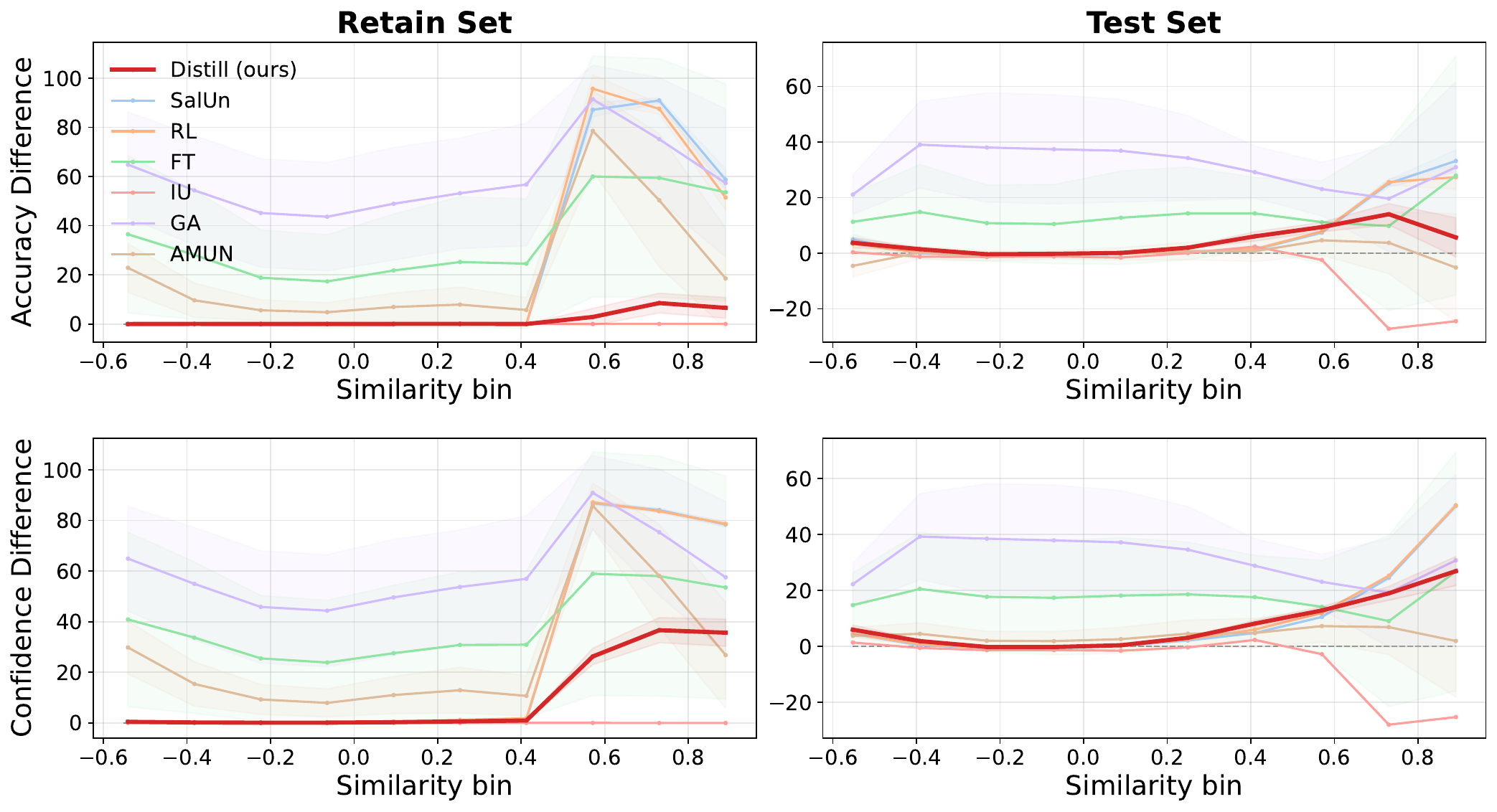}
    \caption{
    Retrain-relative accuracy and confidence differences across similarity bins on
    SVHN + ViT-Tiny for class-fraction-to-forget $=0.9$.
    Similarity is computed in the penultimate-layer representation space of the
    fully trained model with respect to the forget set. Top row: accuracy
    differences; bottom row: correct-class confidence differences. Left: retain
    set; right: test set. Values close to zero indicate agreement with retraining.
    }
    \label{fig:svhn_vit_binned_90}
\end{figure}

\section{Broader impacts}~\label{app:broader-impacts}

Machine unlearning can support data-deletion requests and improve transparency in
model maintenance by clarifying when unlearning deviates from retraining. At the
same time, unreliable approximate unlearning may create a false sense of deletion
or privacy protection if used without careful auditing. Our work is diagnostic
and methodological, evaluated on public image-classification benchmarks, and
should not be interpreted as providing certified deletion guarantees.

\end{document}